\newtheorem{theorem}{Theorem}
\newtheorem{assumption}{Assumption}
\newtheorem{corollary}{Corollary}
\newtheorem{lemma}{Lemma}
\newtheorem{proposition}{Proposition}
\newtheorem{remark}{Remark}
\numberwithin{equation}{section}
\newcommand{\ER}{Erd\H{o}s-R\'enyi }   
\newcommand{\calC}{\ensuremath{\mathcal{C}}}
\newcommand{\calH}{\ensuremath{\mathcal{H}}}
\newcommand{\calG}{\ensuremath{\mathcal{G}}}
\newcommand{\calP}{\ensuremath{\mathcal{P}}}
\newcommand{\calN}{\ensuremath{\mathcal{N}}}
\newcommand{\calV}{\ensuremath{\mathcal{V}}}
\newcommand{\calE}{\ensuremath{\mathcal{E}}}
\newcommand{\calL}{\ensuremath{\mathcal{L}}}
\DeclareMathOperator{\Tr}{Tr}
\newcommand{\norm}[1]{\|{#1}\|}
\newcommand{\abs}[1]{|{#1}|}
\newcommand{\floor}[1]{\Big\lfloor{#1}\Big\rfloor}
\newcommand{\paren}[1]{\Big({#1}\Big)}
\newcommand{\set}[1]{\left\{{#1}\right\}}
\newcommand{\est}[1]{\widehat{#1}}
\newcommand{\expec}{\ensuremath{\mathbb{E}}}
\newcommand{\matR}{\ensuremath{\mathbb{R}}}
\newcommand{\matZ}{\ensuremath{\mathbb{Z}}}
\newcommand{\matC}{\ensuremath{\mathbb{C}}}
\newcommand{\matN}{\ensuremath{\mathbb{N}}}
\newcommand{\prob}{\ensuremath{\mathbb{P}}}
\newcommand{\indic}{\ensuremath{\mathbbm{1}}} 
\newcommand{\real}{\text{Re}}
\newcommand{\imag}{\text{Im}}
\newcommand{\setij}{\ensuremath{\set{i,j}}}
\newcounter{ale}
\newenvironment{liste}{\begin{itemize}}{\end{itemize}}
\newcommand{\aliste}{\begin{liste} \setcounter{ale}{1}}
\newcommand{\zliste}{\end{liste}}
\definecolor{purple}{RGB}{178,55,250} %
\newcounter{noteMCctr} \setcounter{noteMCctr}{1}
\newcounter{noteHTctr} \setcounter{noteHTctr}{1}
\definecolor{applegreen}{rgb}{0.55, 0.71, 0.0}
\newcounter{noteEActr} \setcounter{noteEActr}{1}
\newcommand{\rev}[1]{\textcolor{black}{#1}}
\newcommand{\revpostpub}[1]{\textcolor{black}{#1}}
\newcommand{\conj}[1]{{#1}^{\mathsf{H}}}
\newcommand{\erdos}{Erd\H{o}s}
\newcommand{\concat}{\operatorname{concat}
}
\newcommand{\blkdiag}{\operatorname{blkdiag}}
\newcommand{\tilgt}{\tilde{g}^*}
\newcommand{\gt}{g^*}
\newcommand{\etg}{\est{\tilde{g}}}
\newcommand{\globaltrs}{\texttt{GTRS-DynSync}}
\newcommand{\localtrs}{{\texttt{LTRS-GS-DynSync}}}
\newcommand{\matdenoising}{{\texttt{GMD-LTRS-DynSync}}}
\newcommand{\matdenoisingLocSpec}{{\texttt{GMD-LSPEC-DynSync}}} 
\newcommand{\ppm}{{\texttt{PPM-DynSync}}}
\newcommand{\projperp}{\calP^\perp_{\tau,n}}
\newcommand{\projtaun}{\calP_{\tau,n}}\newcommand{\ones}{\mathbbm{1}}
\newcommand{\opR}{\operatorname{R}}\newcommand{\opI}{\operatorname{I}}
\newcommand{\teonek}{\hat{\theta}_1(k)}
\newcommand{\phik}{\phi(k)}
\newcommand{\Gshift}{\overline{G}}
\newcommand{\pmax}{p_{\max}}
\newcommand{\smin}{s_{\min}}
\newcommand{\pmin}{p_{\min}}
\newcommand\withMaybeSmall{\ifthenelse{\boolean{withSmall}}{ \small }{}}
\begin{document}

\title{Dynamic angular synchronization under smoothness constraints}

\author{Ernesto Araya\footnotemark[1]\\ \texttt{araya@math.lmu.de} \and Mihai Cucuringu\footnotemark[2] \footnotemark[3]\\ \texttt{mihai@math.ucla.edu} \and Hemant Tyagi \footnotemark[4] \footnotemark[5]\\ \texttt{hemant.tyagi@ntu.edu.sg}}

\renewcommand{\thefootnote}{\fnsymbol{footnote}}
\footnotetext[1]{Department of Mathematics, Ludwig-Maximilians-Universit\"at M\"unchen, Geschwister-Scholl-Platz 1, Munich, 80539, Germany}
\footnotetext[2]{Department of Mathematics, University of California Los Angeles, 520 Portola Plaza, Los Angeles, CA 90095, USA}
\footnotetext[3]{Department of Statistics \&  Oxford-Man Institute of Quantitative Finance, University of Oxford, Oxford, UK}
\footnotetext[4]{Division of Mathematical Sciences, 
      School of Physical and Mathematical Sciences, 
      Nanyang Technological University, 
      21 Nanyang Link, 637371, Singapore}      
\footnotetext[5]{Corresponding author \\ \indent \indent Authors are listed in alphabetical order. Part of this work was done while E.A was affiliated to Inria Lille. The first version of this paper was prepared while H.T was affiliated to Inria Lille.}

\renewcommand{\thefootnote}{\arabic{footnote}}

\maketitle

\begin{abstract}
Given an undirected measurement graph $\calH = ([n], \calE)$, 
the classical angular synchronization problem consists of recovering unknown angles $\theta_1^*,\dots,\theta_n^*$ from a collection of noisy pairwise measurements of the form $(\theta_i^* - \theta_j^*) \mod 2\pi$, for all $\set{i,j} \in \calE$. This problem arises in a variety of applications, including computer vision, time synchronization of distributed networks, and ranking from pairwise comparisons. In this paper, we consider a dynamic version of this problem where the angles, and also the measurement graphs evolve over $T$ time points. Assuming a smoothness condition on the evolution of the
latent angles, we derive three algorithms for joint estimation of the angles over all time points. Moreover, for one of the algorithms, we establish non-asymptotic recovery guarantees for the mean-squared error (MSE) under different statistical models. In particular, we show that the MSE converges to zero as $T$ increases under milder conditions than in the static setting. This includes the setting where the measurement graphs are highly sparse and disconnected, and also when the measurement noise is large and can potentially increase with $T$. We complement our theoretical results with experiments on synthetic data.
\end{abstract}


{
  \tableofcontents
}

\section{Introduction}\label{sec:introduction}


The group synchronization problem, which consists of recovering group elements $g_1^*, \ldots, g_n^*$ from a potentially sparse set of noisy pairwise ratios $ g_i^* g_j^{*-1}$, has attracted considerable attention in the last decade, starting with the seminal paper of Singer \cite{sync}. The motivation for this problem  originated from the cryo-EM \cite{shkolnisky2012viewing}  and sensor network localization  \cite{asap2d} literatures, but instantiations of this problem occur in a variety of application domains, including computer vision \cite{structFromMotion_Amit} in the context of the structure-from-motion problem \cite{survery_structure_motion}, ranking from pairwise comparisons \cite{syncRank}, community detection in social networks \cite{Z2synchronization}, NMR spectroscopy \cite{asap3d}, temporal ordering and registration of images in image processing applications \cite{temporal_image_reg}, and time synchronization of distributed networks \cite{timeoneSync}. 

In the most typical setting, the group elements $g^*_1,\ldots, g^*_n$ are represented as the nodes of a simple undirected graph $\calH=([n],\calE)$, where an edge $\{i,j\}$ is present if and only if the observer has access to a noisy version of $g^*_i{g^*_j}^{-1}$. In the particular case where  $\calH$ is connected and the exact value $g^*_i{g^*_j}^{-1}$ is observed, the recovery problem has a straightforward solution. Indeed, it suffices to consider a spanning tree of $\calH$, assign any value to an arbitrarily chosen root, and the remaining $n-1$ values are found by traversing the edges of tree, solving each time a simple equation. Consequently, these $n-1$ values are uniquely determined up to an global ``shift'' defined by the chosen value of the root node. On the other hand, if $\calH$ is disconnected, there is insufficient information to determine a global solution, since there are no comparisons between the elements of different connected components. 

\subsection{Angular synchronization} 
The simplest groups for which the synchronization problem has been studied are $\mathbb{Z}_2$, in the context of community detection and signed clustering with two clusters \cite{Z2synchronization,unifiedOd_JACHA}, 
and the non-compact group $\mathbb{R}$  arising in the synchronization of clocks in a distributed network \cite{timeoneSync}. 
The special orthogonal group $\text{SO}(d)$ is the most commonly studied group with a particular emphasis on $\text{SO}(2)$ -- often referred to as angular synchronization -- 
which finds applications in various practical domains, including 2D sensor network localization \cite{asap2d} and ranking from pairwise comparisons \cite{syncRank}.

In the angular synchronization problem, which is the focus of this paper, the goal is to recover $n$ unknown angles  $\theta_1^*,\ldots,\theta_n^* \in [0,2\pi)$ from a set of $m$ noisy measurements of their pairwise offsets $\Theta_{ij} = (\theta_i^* - \theta_j^*) \mod 2\pi$. Equivalently, we can represent each angle $\theta^*_i$ by a unit modulus complex number $g^*_i=e^{\iota \theta^*_i}$ and the pairwise offsets by $g^*_i\conj{g^*_j}$, where for any $z\in\matC$ , $\conj{z}$ represent its complex conjugate. As explained above, the available pairwise measurements can be encoded into the edges of a measurement graph $\calH$, which in most typical applications can be modeled as a sparse random geometric graph (e.g., disc graph), a $k$-nearest-neighbor graph, or an {\ER} graph. The difficulty of the problem is amplified, on one hand, by the amount of \textbf{noise} in the available pairwise offset measurements, and on the other hand by the \textbf{sparsity} of the measurements, since typically,  $\abs{\calE}\ll {n \choose 2}$, i.e., only a small subset of all possible pairwise offsets are available to the user. 

Aiming to reconstruct the angle offsets as best as possible, the initial approach of Singer \cite{sync} considered a quadratic maximization problem over the angular domain, that aims to maximize the number of preserved angle offsets. Denote by $A$ the measurement matrix of size  $n \times n$ (which can be seen as a weighted adjacency matrix of $\calH=([n],\calE)$), where for each $\setij \in {[n] \choose 2}$, one considers the angular embedding 
\begin{equation*}
{A}_{ij} = \begin{cases}
 e^{\imath \Theta_{ij}} & \text{if } \setij \in \calE \\
 0 & \text{if } \setij \notin \calE 
\end{cases}, 
\end{equation*}
with diagonal entries $A_{ii} = 0$. Note that $A$ is Hermitian since $\Theta_{ji} = (-\Theta_{ij}) \mod 2\pi$. One then considers the following maximization problem 
\begin{equation}
\underset{  \theta_1,\ldots,\theta_n \in [0,2\pi)   }{\max}  \sum_{i,j=1}^{n} e^{-\iota \theta_i} A_{ij} e^{\iota \theta_j} 
\equiv 
\underset{  g_1,\ldots,g_n \in \mathbb{C};  \;\;\; |g_i| = 1 }{\max} \;\; \sum_{i,j=1}^{n}  g_i^*  A_{ij}  g_j,
\label{eq:angularMaxObj}
\end{equation}
which is in general NP hard to solve \cite{qp_nphard2006}. This intractability is usually circumvented by considering continuous relaxations of \eqref{eq:angularMaxObj}, leading typically to a spectral algorithm \cite{sync}, or a semi-definite program (SDP) \cite{sync,bandeira2017tightness}. Other algorithmic approaches include message passing algorithms \cite{perry2018message,bandeira2018notes}, those based on graph neural networks \cite{GNN_Sync}, and also the generalized power method \cite{Liu2017,boumal2016}. The statistical performance of algorithms is typically studied under the following two noise models.
\begin{itemize}
    \item \textbf{Additive Gaussian noise (AGN) model \cite{bandeira2017tightness}}. Here, $\calH$ is the simple complete graph and the observations are of the form $g^*_i\conj{g^*_j}+\sigma W_{ij}$, where $(W_{ij})_{i < j}$ are independent complex standard Gaussian's, $W_{ii} = 0$ (for all $i$), and $\sigma>0$ represents the noise level. 
    Note that \eqref{eq:angularMaxObj} is the maximum likelihood estimator (MLE) for the AGN model. 
    \item \textbf{Outliers model \cite{sync}.} Here, $\calH$ is modeled as an \ER graph with connection probability $p$. When $p$ is small, the resulting measurement graph will be sparse and potentially disconnected (e.g., when $p = o(\frac{\log n}{n})$). Under this model, whenever $\{i,j\}$ is an edge in $\calH$, we observe either the true value $g^*_i \conj{g^*_j}$ with probability $\eta$, or a value uniformly distributed on the complex unit circle with probability $1 - \eta$. The parameter $\eta$ quantifies the level of noise. 
\end{itemize}
Several results now exist pertaining to the statistical error analysis of the aforementioned algorithms for the above models. For instance, the SDP is known to be tight\footnote{This means that its global solution recovers that of \eqref{eq:angularMaxObj}.} for the AGN model if $\sigma$ is not too large w.r.t $n$ \cite{bandeira2017tightness, zhong2018near}. In particular, this was first shown in \cite{bandeira2016} for $\sigma = O(n^{1/4})$, and was later improved to $\sigma = O(\sqrt{n/\log n})$ in \cite{zhong2018near}. 
Recovery guarantees for the spectral method exist for both the Outliers model  \cite{sync}, and the AGN model (e.g., \cite{boumal2016, zhong2018near}). It was shown in \cite{zhong2018near} for the AGN model that the spectral method recovers $g^*$ with an $\ell_2$ error bound of $O(\sigma)$, and an $\ell_{\infty}$ error bound of $O(\sigma\sqrt{\log n/n})$, provided $\sigma = O(\sqrt{n/\log n})$. For this regime of $\sigma$, it was also shown in \cite{zhong2018near} that (with spectral initialization) the iterates of the generalized power method (GPM) converge linearly to the global solution of \eqref{eq:angularMaxObj}. This improved upon previous convergence results for the GPM \cite{boumal2016,Liu2017} which required stronger conditions on $\sigma$.

\paragraph{Towards dynamic group synchronization.} In many applications of group synchronization, the underlying measurement graph, along with the associated pairwise data evolve with time. This is the case, e.g., for the ranking problem where the strengths of players, and hence the pairwise comparison outcomes evolve with time (e.g., sports tournaments \cite{cattelan2013dynamic,lopez2018often}, recommender systems), and has been the subject of recent work \cite{dynamicBTL_2020, AKT_dynamicRankRSync, KT_JMLR_BTL_NN}. In \cite{syncRank}, this problem was formulated as an instance of the angular synchronization problem, along with an algorithm \textsc{SyncRank} which leads to state-of-art performance empirically. It is also the case for the signed community detection problem (with two communities), where the  network and the underlying communities can evolve with time (e.g., \cite{dyn_signed_comm20}). This motivates us to initiate a principled study of group synchronization in a dynamic setting where the latent group elements, and the measurement graphs can both evolve with time.

\subsection{Dynamic angular synchronization}  
In this paper, we introduce a dynamic version of the angular synchronization problem. Here, we are given a sequence of measurement graphs of the form $\calH_k = ([n], \calE_k)$,  over $T$ time points $k = 1, \ldots, T$. At each $k$, we are given noisy pairwise information about the latent signal $g^{*}(k) \in \matC^n$ as per the AGN model, or the Outliers model; see Section \ref{subsec:prob_setup}. 
While the measurements are assumed to be independently generated over time, we assume that $g^*(k)$'s evolve smoothly in the sense that
\begin{equation} \label{eq:smoothness_def_intro}
    \sum_{k=1}^{T-1} \norm{g^*(k) - g^*(k+1)}_2^2 \leq S_T, 
\end{equation}
for a suitably small smoothness parameter $S_T$. 
The above condition essentially stipulates that the quadratic variation of $g^*(k)$ w.r.t the path graph on $T$ vertices is small. A similar assumption was recently used in \cite{AKT_dynamicRankRSync} in the context of dynamic translation synchronization, which is also the main motivation for the present paper.

Our goal then is to recover the vector $g^* \in \matC^{nT}$, obtained by column-stacking $g^*(k)$'s.  Specifically, we aim to output an estimate $\est g$ of $g^*$ that is weakly-consistent w.r.t $T$ in terms of the mean squared error (MSE), i.e., 
\begin{equation}\label{eq:consistency_intro}
    \frac1T \|\est g-g^*\|^2_2 = \frac1T \sum_{k=1}^T \norm{\est g(k) - g^*(k)}_2^2 \stackrel{T\to\infty}{\longrightarrow}0.
\end{equation}
The following points are useful to keep in mind.
\begin{enumerate}
    \item When all the graphs are connected, one could estimate each $g^{*}(k)$ by only using the pairwise information at time $k$. However this will typically lead to $\frac1T \|\est g-g^*\|^2_2 = \Omega(1)$ as $T\to\infty$. Exploiting the smoothness information over $k$ is then, in a sense, necessary to ensure \eqref{eq:consistency_intro}. This is particularly relevant in the setting where the noise level is large.

    \item If each measurement graph is disconnected, then estimating $g^*(k)$'s individually at each $k$ will be meaningless. However, one might still expect, at least intuitively, that a weakly consistent estimate of $g^*$ can be obtained by leveraging smoothness. For instance, if $S_T = 0$, then this would be possible provided the union graph $([n], \cup_{k} \calE_k)$ is connected, where we would simply take the average of all the pairwise information.
\end{enumerate}
%

\paragraph{Main contributions.} We summarize our main contributions as follows.
\begin{itemize}
    \item We propose three algorithms for this problem, based on different relaxation-based approaches.
    \begin{enumerate} 
        \item \emph{Global sphere relaxation} (\globaltrs; Algorithm \ref{algo:TRS_global}): Optimizes a smoothness-penalized data fidelity term with solutions restricted to a sphere in $\matC^{nT}$. This leads to a trust-region subproblem (TRS) which is efficiently solvable.

        \item \emph{Local sphere relaxation with global smoothing} (\localtrs; Algorithm \ref{algo:TRS_local}): Solves sphere-relaxed synchronization problems at each time point $k$ (via a TRS formulation), stacks the local solutions, and projects the stacked vector onto the low-frequency eigenspace of a smoothness operator (motivated by the Laplacian eigenmaps estimator \cite{SadhanalaTV16}).

        \item \emph{Global matrix denoising and local synchronization} (\matdenoising; Algorithm \ref{algo:mat_denoise_proj}): Projects the measurement matrix onto the low-frequency eigenspace of a smoothness operator (matrix denoising), followed by locally solving (at each time $k$) the \rev{sphere}-relaxation of the angular synchronization problem \rev{via a TRS formulation}.
    \end{enumerate}

    \item \rev{As the TRS estimator is hard to analyze theoretically, we analyze a slight modification of Algorithm \ref{algo:mat_denoise_proj}, namely Algorithm \ref{algo:mat_denoise_proj_locspec}, where the TRS estimator for the local synchronization step is replaced by an alternative spectral approach (see also Remark \ref{rmk:spectral_local_sync})}. We provide theoretical guarantees for \rev{Algorithm \ref{algo:mat_denoise_proj_locspec}}, in the form of non-asymptotic high probability bounds for the global $\ell_2$ estimation error, under the dynamic versions of the AGN and Outliers model (see Corollaries \ref{corr:error_wigner_spec} and \ref{cor:error_outliers_spec} respectively). In particular, under mild smoothness assumptions, we prove that \eqref{eq:consistency_intro} holds for the output of \rev{Algorithm \ref{algo:mat_denoise_proj_locspec}}, and give precise non-asymptotic bounds from which the convergence rates can be deduced. 
    
    \begin{enumerate}
        \item  In the case of the Outliers model, our results apply even when the measurement graphs are very sparse -- each being potentially disconnected (see Remarks \ref{rem:outlier_thm_int_case} and \ref{rem:outlier_cor_spec}). 
        
        \item For the AGN model, our results show that \rev{Algorithm \ref{algo:mat_denoise_proj_locspec}} succeeds even when the noise level $\sigma$ is large w.r.t $n$, and is allowed to grow with $T$ as $O(T^{\alpha})$ for $\alpha \in [0,1/4)$ (see Remarks \ref{rem:wigner_thm_int_case} and \ref{rem:wigner_spec_rec}). 
    \end{enumerate}

    \item Finally, we support our theoretical findings with extensive experiments on synthetic data, for the aforementioned AGN and Outliers models. Notably, our experiments demonstrate that \rev{Algorithm \ref{algo:mat_denoise_proj} usually outperforms other competing methods in terms of accuracy}.

\end{itemize}
While we have focused on the $\text{SO}(2)$ group for concreteness, the framework in the present paper can be extended to other compact groups such as $\matZ_2$ and $\text{SO}(d)$.

\subsection{Related work} \label{subsec:rel_work}
\paragraph{Ranking and angular synchronization.} The work of \cite{syncRank} considered the problem of ranking $n$ items given a set of pairwise comparisons on their rank offsets, over a single measurement graph. The \textsc{SyncRank} algorithm, introduced in \cite{syncRank}, formulates the above problem as an instance of the angular synchronization problem, following a group compactification procedure of the real line. 
Concretely, the approach in \cite{syncRank} starts by mapping the noisy rank offsets $C_{ij} \in \set{-(n-1),\dots,n-1}$ to angles $\Theta_{ij} \in [0, 2 \pi  \delta )$  via  the transformation 
$ 	\Theta_{ij} :=  2 \pi  \delta  \frac{ C_{ij}}{n-1} 
	\label{transfToCircle_bis}
$, where $\delta \in [0,1)$. The resulting problem is then treated as an instance of the angular synchronization problem, and solved via the tools therein.  A post-processing step is applied to mod out the best circular permutation, and recover the final rankings. While we do not integrate our algorithmic framework with \textsc{SyncRank} in this paper, it would be interesting to do so as future work, in order to evaluate its performance on dynamic ranking problems.
More recently, \cite{huang2017translation, SVDRank} investigated algorithms for recovering the latent strengths $r^*_i, i=1, \ldots, n$ from noisy versions of $r^*_i - r^*_j$. This is an instance of the group synchronization over the real line, also dubbed as \textit{translation synchronization}. 
The work \cite{huang2017translation} proposed an iterative procedure which involves solving a truncated least-squares problem at each iteration. Conditions were derived under which the iterates were shown to converge to the ground truth. 
In \cite{SVDRank}, an algorithm based on the singular value decomposition was introduced, along with a detailed theoretical analysis.

%
\paragraph{Ranking from dynamic pairwise information.}
A recent line of work initiated a systematic study of ranking from dynamic pairwise information, along with statistical guarantees \cite{dynamicBTL_2020, KT_JMLR_BTL_NN, AKT_dynamicRankRSync}. Here, the latent strengths of the items are assumed to evolve smoothly over $T$ time points, with the pairwise data generated independently over time. These works are the main motivation behind the present paper. 
\begin{itemize}
\item The setup in \cite{dynamicBTL_2020, KT_JMLR_BTL_NN} assumed a dynamic Bradley-Terry-Luce (BTL) model where the strength vector satisfies a Lipschitz condition in the (continuous) time domain $[0,1]$. Pointwise estimation error bounds were derived for a given $t \in [0,1]$ via a two stage procedure: the first stage involved suitably ``smoothing'' the pairwise data while the second stage involved applying an existing ranking method from the static setup. In particular, \cite{KT_JMLR_BTL_NN} proposed a nearest-neighbor based averaging procedure in conjunction with Rank Centrality \cite{negahban2015rank},  while \cite{dynamicBTL_2020} proposed a kernel based smoothing procedure in conjunction with the MLE for the static BTL model. The results in both papers imply pointwise consistency w.r.t $T$, with \cite{KT_JMLR_BTL_NN} showing this even when the individual graphs are very sparse (\ER graphs) and potentially disconnected.

\item The work \cite{AKT_dynamicRankRSync} considered the dynamic translation synchronization problem which is related to the BTL model for ranking. Here, the latent strength vector $r^*(k) \in \matR^n$ ($k=1,2,\dots,T$) satisfies a similar smoothness assumption as \eqref{eq:smoothness_def_intro}. Given a measurement graph $\calH_k = ([n], \calE_k)$ at time $k$, we are given noisy information $r_i^*(k) - r^*_j(k) + \epsilon_{ij}(k)$ for each $\set{i,j} \in \calE_k$,  where $\epsilon_{ij}(k)$ is zero-mean subgaussian noise.
Two estimators were proposed, the first of which involves solving an unconstrained smoothness-penalized least squares problem. The second estimator involves first estimating $r^*(k)$ via least-squares minimization, at each $k$, and then projecting the vector of estimated solutions on to the low-frequency space of a smoothness operator. High-probability bounds were obtained for the MSE, establishing their consistency when $T$ is large and the rest of parameters are fixed. In particular, these results imply that the proposed estimators can tolerate larger levels of noise compared to the static case. Our Algorithms \ref{algo:TRS_global} and \ref{algo:TRS_local} are motivated from these two methods, the key difference being that we now need to incorporate the (relaxations of) the constraints, imposed by the group $\text{SO}(2)$. This in particular leads to solving a \rev{TRS-based sphere-relaxation} which is challenging to analyze theoretically. We also remark that the analysis in \cite{AKT_dynamicRankRSync} requires each measurement graph to be connected due to technical reasons. Our analysis for \rev{Algorithm \ref{algo:mat_denoise_proj_locspec}}, however, 
does not suffer from these issues, and applies even when the graphs are very sparse and disconnected. We believe that a similar procedure can be applied for the dynamic translation synchronization problem, leading to similar guarantees as in the present paper (Theorem \ref{thm:error_outliers} and Corollary \ref{cor:error_outliers_spec}). 
\end{itemize}
The work \cite{li2021recovery} considered a generalization of the translation synchronization model where the time-dependent model parameters evolve in a Markovian manner over time. Theoretical results were obtained for estimating a latent pairwise comparison matrix at time $T$. In general, however, existing results for the dynamic ranking problem are essentially empirical works \cite{fahrmeir1994dynamic,glickman1998state,lopez2018often,maystre2019pairwise, cattelan2013dynamic,jabin2015continuous} with limited theoretical guarantees.

\subsection{Paper structure} 
The remainder of this paper is structured as follows. In Section \ref{sec:setup} we present the setup for dynamic synchronization. In Section \ref{subsec:prob_setup} we introduce the dynamic versions of the AGN and Outliers model, togheter with our global smoothness assumptions. Our proposed estimators and the algorithms to compute them are described in Section \ref{subsec:smooth_estimators}. Section \ref{sec:analysis}, which consists of the theoretical analysis of \rev{Algorithm \ref{algo:mat_denoise_proj_locspec}}, is divided into three parts. In Sections \ref{sec:analysis_spiked_wigner} and \ref{sec:analysis_outliers} we analyse the matrix denoising stage for the AGN and Outliers model respectively. In Section \ref{sec:recovery} we provide an analysis for the local synchronization stage of \rev{Algorithm \ref{algo:mat_denoise_proj_locspec}}, which is valid for both noise models. Section \ref{sec:experiments} contains numerical experiments on synthetic data and Section \ref{sec:conclusion} contains concluding remarks.

\section{Problem setup and algorithms}\label{sec:setup}
In Section \ref{subsec:notation} we describe general notation used throughout the paper. Section \ref{subsec:prob_setup} contains a formal description of the problem, while Section \ref{subsec:smooth_estimators} formally outlines our proposed algorithms. In particular, Section \ref{subsec:smooth_estimators} details additional notation which is needed for describing the algorithms and is also used in the analyses later in Section \ref{sec:analysis}.

%
%
\subsection{Notation} \label{subsec:notation}
For any $n\in \matN$, define $[n]=\{1,\cdots,n\}$. For $A\in\matC^{n\times p}$, $\conj A$ denotes its Hermitian conjugate transpose. We define the operators $\real$ and $\imag$, from $\matC^{n\times p}$ to $\matR^{n\times p}$, which for any matrix give their real and imaginary parts, respectively. Denote $e_1,\dots,e_n \in \matR^n$ to be the canonical vectors. We represent the vector of all ones as $\ones$, with the dimension being clear from the context. 

Let $\|z\|_2=\sqrt{|z_1|^2+\cdots+|z_n|^2}$ denote the Euclidean norm for any $z\in \matC^{n}$. If $Z,Z'\in \matC^{n\times m}$, their Frobenius inner product is defined by $\langle Z,Z'\rangle_F=\Tr (\conj{Z}Z')$, where $\Tr(\cdot)$ denotes the trace of a matrix. The Frobenius norm $\|\cdot\|^2_F$ is the norm induced by this inner product. We denote the Kronecker product of $Z$ and $Z'$ by $Z \otimes Z'$, and their Hadamard product by $Z \circ Z'$. For a matrix $Z\in \matC^{n\times m}$, $\|Z\|_{op}$ is the operator norm of $Z$, which is defined as its largest singular value.

Define, for $n\in \matN$, the set $\calC_n:=\{z\in \matC^{n}: |z_i|=1,\text{ for all }i\in [n]\}$. Recall that the special orthogonal group $\operatorname{SO}(2)$ can be represented by the group of complex numbers $z$, such that $|z|=1$, with the usual complex multiplication. 
The concatenation operator is defined for matrices $X$ in $\matC^{n\times m}$ and $Y\in \matC^{n'\times m}$, denoted as $\concat(X,Y)=\begin{bmatrix}X\\Y\end{bmatrix}\in \matC^{(n+n')\times m}$. This definition extends to any sequence $(X_k)^K_{k=1}$, with $K\in \matN$, and $X_k\in\matC^{n_k\times m}$, through the recurrence $\concat(X_1,\ldots,X_k)=\concat(\concat(X_1,\ldots,X_{k-1}),X_k)$. Similarly, for any natural number $m$, $\blkdiag(X_1,\ldots,X_m)$ denotes the block-diagonal matrix with square matrices $X_1,\ldots,X_m$ on the main diagonal.

Recall that the subgaussian norm of a random variable $X$ is given by 
$$\norm{X}_{\psi_2}:= \sup_{p \geq 1} p^{-1/2} (\expec \abs{X}^p)^{1/p},$$ see e.g., \cite{vershynin_2012} for other equivalent definitions. Note that if $\abs{X} \leq M$ a.s, then $\norm{X}_{\psi_2} \leq M$. When $X$ follows a normal distribution with mean $\mu$ and variance $\sigma^2$, we write $X\sim \calN(\mu,\sigma^2)$. We say $X \sim C\calN(0,1)$ if $X$ is complex-valued and 
\[
\real(X) , \ \imag (X) \stackrel{\text{ind.}}{\sim} 
    \calN(0,1/2). 
\]

We will use the classic asymptotic notation, defined as follows. We say that $f(t)=O(g(t))$ if there exists positive real numbers $t_0,M$, such that for all $t\geq t_0$ we have $f(t)\leq M g(t)$. Similarly, we write $f(t)=o(g(t))$ is for every constant $c>0$, there exists a constant $t_0$, such that, $f(t)\leq c g(t)$, for all $t\geq t_0$. Additional, we say $f(t)=\Omega (g(t))$, if $f(t)=O(g(t))$, and $f(t)=\omega (g(t))$, if $g(t)=o(f(t))$. Symbols used for denoting constants (e.g., $c, C, c_1$ etc.) may change from line to line. 
%

%
\subsection{Problem setup} \label{subsec:prob_setup}
Consider a set of items $[n]$ and a sequence of time-indexed undirected comparison graphs $\calH_k = ([n],\calE_k)$ for $k = 1,\dots, T$. The set of edges $\calE_k$ determines which items are being compared at time $k$. For each $i\in [n]$, we associate time-dependent latent phases $g^*_i(k)\in\operatorname{SO}(2)$ which will constitute the ground truth signal that we aim to recover from noisy pairwise measurements. More formally, at time $k\in [T]$, we observe for every pair $\{i,j\}\in \calE_k$ a noisy version of  $g^*_i(k)\conj{ g^*_j(k)}$. Notice that even without the addition of noise, the recovery of the ground truth phases $g^*_i(k)$, from the observation of $g^*_i(k)\conj{g^*_j(k)}$, can only be accomplished up to a global phase-shift. That is, given angles $\theta_1,\cdots, \theta_T\in [0,2\pi)$, the quantities $(g^*_i(k)\cdot e^{\iota\theta_k})^T_{k=1}$ define the same relative phases as $(g^*_i(k))^T_{k=1}$. To avoid identifiability issues, we will make the following assumption on the phase of the first element of each block.
\begin{assumption}[Anchoring]\label{assump:anchoring}
    For all $k\in[T]$, we assume that $g^*_1(k)=1$.  
\end{assumption}
For each $k\in[T]$, denote $g^*(k)=\concat(g^*_1(k),\cdots,g^*_n(k)) \in \calC_n$, 
and $g^*$ to be formed by column-wise stacking as 
\begin{equation}\label{eq:ground_truth_definition}
    g^* =\concat(g^*(1),\cdots,g^*(T)) =\begin{pmatrix} g^*(1) \\ \vdots \\ g^*(T)  \end{pmatrix} \in \calC_{nT}.
\end{equation}
We now describe two statistical models for generating the noisy pairwise measurements at each $k$.

%
\paragraph{Additive Gaussian noise (AGN) model.} 
Denoting for each $k\in [T]$, $G^*(k):=g^*(k)\conj{g^*(k)}$, we are given the measurements $A(k) \in \matC^{n \times n}$ where
\begin{equation} \label{eq:model_Wigner}
A(k) = (G^*(k) - I_n) + \sigma W(k)\enspace \text{ for }k\in [T], 
\end{equation}  
where $W(k)$ is a random complex Hermitian (noise) matrix with independent entries above the diagonal and $W_{ii}(k) = 0$ (for all $i$). In particular, for each $i \neq j$, $W_{ij}(k) \sim C\calN(0,1)$. The parameter $\sigma\geq0$ denotes the noise level. Implicit in \eqref{eq:model_Wigner} is the assumption that the comparison graphs $\calH_k$ are all complete. We remark that in \eqref{eq:model_Wigner}, $A_{ii}(k) = 0$ for each $i$, in contrast to the Spiked Gaussian Wigner model discussed in Section \ref{sec:introduction}. Furthermore, we assume that  $W(1),\dots,W(T)$ are independent.

%

\paragraph{Outliers model.}
In this model, we assume that the observation graphs are generated independently from \erdos-R\'enyi distributions, i.e., $\calH_k \stackrel{\text{ind.}}{\sim} ER(n, p(k))$, where $p(k)$ represents the connection probability at time $k$. For each $k\in [T]$ and edge $\{i,j\}\in \calE_k$, we observe the exact relative phase $g^*_i(k)\conj{g^*_j(k)}$, with probability $1 - \eta$, and pure noise (chosen uniformly at random in SO($2$)), with probability $\eta$.  The matrix of observations $A(k)$ can then be written as
\begin{equation*}
     A_{ij}(k) =
    \begin{cases}
        g^*_i(k)\conj{g^*_j(k)} \text{ with probability }\left(1-\eta\right)p(k),\\
        e^{\iota \varphi_{ij}(k)} \enskip \enskip\quad\text{ with probability }\eta p(k),\\
        0 \enskip\qquad\qquad\text{ with probability }(1-p(k)),
    \end{cases}
\end{equation*}
where $\phi_{ij}(k) \stackrel{\text{ind.}}{\sim} \text{Unif}[0,2\pi)$. 
Since there are no observations when $i=j$, we choose to set $(A(k))_{ii}=0$, for all $i\in [n]$. Note that $(A_{ij}(k))_{i \leq j}$ are independent, and $A(k)$ can be expressed as
\begin{equation}\label{eq:model_outlier}
    A(k)= \underbrace{\left(1-\eta\right)p(k)(G^*(k) - I_n)}_{=\expec{[A(k)]}}+R(k),
\end{equation}
where $R(k):=A(k)-\expec{[A(k)]}$. It is easy to see that $(R_{ij}(k))_{i \leq j}$ are independent centered sub-Gaussian random variables.  

%
\paragraph{Smoothness assumption.} 
As discussed in the introduction, we will assume that the latent phases $g^*_i(k)$ do not change too quickly with time as outlined below.
\begin{assumption}[Global $\ell_2$ smoothness]\label{assump:smooth}
For a given parameter $S_T>0$,
\begin{equation}\label{eq:smoothness}
    \sum_{k=1}^{T-1} \norm{g^*(k) - g^*(k+1)}_2^2 \leq S_T.
\end{equation}
Denoting \rev{$M\in\set{0,1,-1}^{T\times(T-1)}$} to be the \rev{incidence matrix of any directed path graph\footnote{\rev{The rows of $M$ correspond to the $T$ vertices, and the columns correspond to the $T-1$ directed edges. For the column corresponding to edge $i \rightarrow j$, we set its $i$th entry to $+1$, the $j$th entry to $-1$, and the rest of the entries in that column to $0$.}}} on $T$ vertices, this can be written equivalently as
\[ 
\rev{\norm{(M^{\top}\otimes I_n) g^*}_2^2} = \conj{g^*} ((MM^\top) \otimes I_n)g^* \leq S_T,
\]
\rev{where $MM^\top$ corresponds to the Laplacian matrix of the undirected path graph.}
\end{assumption}
The above assumption states that $g^*(k)$ does not change too quickly with $k$ on ``average'', and resembles the notion of quadratic variation of a function. \rev{It can also be viewed as stating that $g^*$ lies ``close'' (depending on $S_T$) to the subspace spanned by the ``smallest few'' eigenvectors of $(MM^{\top}) \otimes I_n$. The closed-form expressions of the eigenvectors (and also eigenvalues) of $MM^{\top}$ are well-known (e.g., \cite{brouwer12}), and can be viewed as the Fourier basis for signals residing on that graph (with the smallest eigenvector corresponding to the lowest frequency signal, and so on).} Since Assumption \ref{assump:anchoring} is in place, we do not need to worry about the invariance of \eqref{eq:smoothness} with respect to constant shifts.  

%
\subsection{Smoothness constrained estimators} \label{subsec:smooth_estimators}
We now describe algorithms to estimate $g^*$, each of which incorporates Assumption \ref{assump:smooth} in different ways. We first recall that in the static setting (for some fixed time $k\in [T]$), the SO($2$)-synchronization problem can be formulated as solving
\begin{equation}\label{eq:synchro_static}
    \max_{g(k)\in \calC_n}\conj{g(k)}A(k)g(k),
\end{equation}
which, under the AGN model, corresponds to finding the MLE. As a sanity check, one could solve \eqref{eq:synchro_static} for each $k$, and stack the solutions to produce an estimate of $g^*$. However, this suffers from the drawback of treating each time block independently. In light of Assumption \ref{assump:smooth}, there is a temporal smoothness in the observations, which can be exploited to improve the estimation. Before proceeding, it will be useful to introduce some notation. 
\begin{itemize}
    \item We will use the diacritical mark $\sim$ to denote the removal of information relative to the first item of each time block. Thus $\tilde{g}^*(k)$ and $\tilde{g}(k)$, are defined as
\begin{equation*}
  g^{*}(k) = \concat(1,\tilde{g}^*(k))=\begin{bmatrix} 1 \\  \tilde{g}^{*}(k)  \end{bmatrix},  \quad  \mbox{ where } \tilde{g}^{*}(k) \in \calC_{n-1}, \; 
\end{equation*} 
\begin{equation*}
  g(k) = \concat(1,\tilde{g}(k))=\begin{bmatrix} 1 \\  \tilde{g}(k)  \end{bmatrix},  \quad  \mbox{ where } \tilde{g}(k) \in \matC^{n-1}. \; 
\end{equation*} 
Similarly, $\tilde{A}(k)\in \matC^{(n-1)\times (n-1)}$ and $b(k)\in \matC^{n-1}$ are defined as 
\begin{equation*}  
A(k) = \begin{bmatrix}  A_{11}(k) & \conj{b}(k)  \\ b(k)  & \widetilde{A}(k)  \end{bmatrix}. 
\end{equation*}

\item Let us define the set 
\[
\calG:=
\Big\{
\begin{bmatrix} 1 \\  \tilde{g}  
\end{bmatrix}
\begin{bmatrix} 1 &  \conj{\tilde{g}} 
\end{bmatrix}
: \tilde{g}\in \calC_{n-1}\Big\}.
\]
Note that $G^*(k) \in \calG$ for each $k$.

\item Whenever we do not specify the time index, it will mean that we stack all time blocks, as 
\begin{equation}  \label{eq:concat_mats}
\begin{split}
A &= \concat\big(A(1), \cdots A(T)\big)=\begin{bmatrix} A(1) \\ A(2) \\ \vdots \\ A(T)  \end{bmatrix},\; G^* = \concat\big(G^*(1),\cdots,G^*(T)\big)=\begin{bmatrix} G^*(1) \\ G^*(2) \\ \vdots \\ G^*(T)  \end{bmatrix}\;,  \text{ and }  \\
 G &=\concat\big(G(1),\cdots,G(T)\big)= \begin{bmatrix} G(1) \\ G(2) \\ \vdots \\ G(T)  \end{bmatrix}\;\ \in\matC^{nT\times n}.
 \end{split}
\end{equation}

\item For any  $z=(z_1,\cdots,z_m)\in \matC^m$ we define its projection onto $\calC_m$ by 
\begin{equation}\label{eq:proj_so2}
    \calP_{\calC_m} (z)_{i}: = \left\lbrace\begin{array}{cc}
        \frac{z_{i}}{\abs{z_{i}}}& \mbox{if} \; z_{i}\neq 0\\
        1 & \mbox{otherwise.} 
    \end{array}\right. 
\end{equation}

\item Finally, for any $\tau \in [T]$, define 
$$\mathcal{L}_{\tau,n}: = \operatorname{span}\left\{v_{T-k} \otimes e_j\right\}_{k=0, j=1}^{\tau-1, n},$$ 
where $\{v_k\}^T_{k=1}$ are the eigenvectors of $MM^\top$, the Laplacian of the path graph on $T$ vertices, with corresponding eigenvalues $\lambda_1\geq\lambda_2 \cdots > \lambda_T (=0)$. Denote $\calP_{\tau,n}$ to be the projection matrix onto the space $\calL_{\tau,n}$, i.e., 
\begin{equation} \label{eq:proj_smooth_op}
\calP_{\tau,n}=\left(\sum^{\tau-1}_{k=0}v_{T-k}v^\top_{T-k}\right)\otimes I_n.
\end{equation}
\end{itemize}

The following simple observation shows that Assumption \ref{assump:smooth} can be equivalently stated (up to factors depending only on $n$) as a condition on the matrices $G^*(k)$. Note that, for all $k\in[T]$,
\begin{equation*}
     \|\tilde{g}^*(k)-\tilde{g}^*(k+1)\|^2_2=\|g^*(k)-g^*(k+1)\|^2_2,
\end{equation*}
and 
\begin{equation*}
    \|G^*(k)-G^*(k+1)\|^2_F=2\|\tilgt(k)-\tilgt(k+1)\|^2_2+\|\tilgt(k)\conj{\tilgt(k)}-\tilgt(k+1)\conj{\tilgt(k+1)}\|^2_F,
\end{equation*}
hence 
\begin{equation}\label{eq:matrix_smooth_equiv}
     2\|\gt(k)-\gt(k+1)\|^2_2\leq \|G^*(k)-G^*(k+1)\|^2_F\leq (2+4n)\|\gt(k)-\gt(k+1)\|^2_2.
\end{equation}
This implies, in particular, that the matrices $G^*(k)$ do not evolve too quickly, as given by 
\begin{equation}\label{eq:matrix_smooth}
     \sum^{T-1}_{k=1}\|G^*(k)-G^*(k+1)\|^2_F\leq (2+4n)S_T.
\end{equation}

%
\subsubsection{Global sphere relaxation}
\paragraph{Denoising optimization problem.} Since \eqref{eq:matrix_smooth} holds, under Assumption \ref{assump:smooth}, it is natural to consider the following denoising optimization problem, which promotes smoothness via penalization.
\begin{align}\label{eq:opt_denoising}
    \mathop{\min} \limits_{\substack{G=\concat(G(1),\cdots,G(T))\\ \text{ s.t }\forall k: G(k)\in \calG}} \quad  \underbrace{\sum_{k=1}^T\|A(k)-G(k)\|_F^2}_{\mbox{data fidelity term}} +  \underbrace{ \lambda \sum_{k=1}^{T-1}\|G(k)-G(k+1)\|_F^2}_{\mbox{smoothness term}},
\end{align}
where $\lambda\geq0$ is the parameter that controls the amount of regularization in \eqref{eq:opt_denoising}. On the other hand, for $G(k)\in \cal \calG$, we have
\begin{equation}\label{eq:matrix_tovector}
    \|A(k)-G(k)\|^2_F=\|A(k)\|^2_F+\underbrace{\|G(k)\|^2_F}_{=n^2}-\conj{\tilde{g}(k)}\tilde{A}(k) \tilde{g}-2\real{\big(\conj{b}(k)\tilde{g}(k)\big)},
\end{equation}
which will help us write \eqref{eq:opt_denoising} more compactly (with variables $\tilde{g}(k)$). Specifically, using \eqref{eq:matrix_tovector} and \eqref{eq:matrix_smooth_equiv}, the following problem is equivalent to \eqref{eq:opt_denoising}
\begin{equation}
    \label{eq:opt_denoising_vector}
    \mathop{\max} \limits_{\substack{\tilde{g}=\concat(\tilde{g}(1),\cdots,\tilde{g}(T))\\ \text{ s.t }\forall k: \tilde{g}(k)\in \calC_{n-1}}} \quad  \underbrace{\conj{\tilde{g}}\tilde{A}_{block} \tilde{g}+2\real{(\conj{b}\tilde{g})}}_{\mbox{data fidelity term}} - \underbrace{\lambda \conj{\tilde{g}}(MM^\top\otimes I_{n-1})\tilde{g}}_{\mbox{smoothness term}},
\end{equation}
where $\tilde{A}_{block}:=\blkdiag(\tilde{A}(1),\cdots,\tilde{A}(T))$.
\begin{remark}[Static anchored synchronization]
In the case $T=1$, the smoothness penalty term will be absent from \eqref{eq:opt_denoising}, and we recover the usual anchored synchronization MLE for the AGN model 
\begin{equation} \label{eq:opt_static}
\min _{G \in \mathcal{G}} \|A-G\|_F^2  \equiv \max _{\tilde{g} \in \calC_{n-1}}\, \conj{\tilde{g}}\tilde{A} \tilde{g}+2\real{(\conj{b}\tilde{g})}.
\end{equation} 
\end{remark}

\begin{remark}[Complexity of \eqref{eq:opt_denoising}]\label{rmk:complexity_denosing}
As mentioned in \cite[Sec.2]{bandeira2017tightness}, the problem \eqref{eq:opt_static} is an instance of \emph{complex quadratic programming}, which is known to be NP-Hard \cite[Prop.3.5]{qp_nphard2006}. Without the penalty term, problem \eqref{eq:opt_denoising} amounts to solve $T$ independent instances of \eqref{eq:opt_static}, clearly remaining NP-Hard. However, it is unclear whether the hardness holds for all $\lambda>0$. In particular, if one allows the value $\lambda=\infty$, the problem becomes trivial as any perfectly smooth $G$ (that is $G(k)=G(1)$ for all $k\in[T]$) will be a solution. Nevertheless, the problem remains highly non-convex even for large values of $\lambda$, which makes the existence of an efficient algorithm for solving \eqref{eq:opt_denoising} (for an arbitrary $A$) unlikely.
\end{remark}
In light of Remark \ref{rmk:complexity_denosing}, we consider the following relaxation of \eqref{eq:opt_denoising_vector}
\begin{equation}\label{eq:opt_sphere_relax}
    \max_{\substack{\tilde{g}=\concat(\tilde{g}(1),\cdots,\tilde{g}(T)) \\\text{ s.t  }  \|\tilde{g}\|^2_2=(n-1)T}} \conj{\tilde{g}}\big(\tilde{A}_{block}-\lambda(MM^\top\otimes I_{n-1})\big)\tilde{g}+2\real{(\conj{b}\tilde{g})}.
\end{equation}
Problem \eqref{eq:opt_sphere_relax} is an instance of the trust region subproblem (TRS) with equality constraints, which consists of minimizing a general quadratic function (non-necessarily convex), under a sphere constraint. Although it will not be used in the sequel, it is worth mentioning that a version of TRS with $\ell_2$ ball constraints exists (and it is, in some respect, more standard). There is a vast literature on the theory of TRS, including the characterization of its solutions \cite{TRS_Hager}, and several efficient algorithms that solve it, in both the spherical and ball constraints \cite{TRS_nakatsukasa}. 
The complete procedure is outlined in Algorithm \ref{algo:TRS_global}.
\begin{algorithm}
    \caption{Global TRS for Dynamic Synchronization (\globaltrs)}\label{algo:TRS_global}
    \begin{algorithmic}[1]
    \State {\bf Input:} Matrix $A\in\matC^{nT\times n}$ of observations, smoothness parameter $\lambda>0$.
    \State Form the matrix $\tilde{A}_{block}=\blkdiag\big(\tilde{A}(1),\cdots,\tilde{A}(T)\big)$.
    \State Obtain $\est{\tilde{g}}=\concat(\est{\tilde{g}}(1),\cdots,\est{\tilde{g}}(T))\in \matC^{(n-1)T}\leftarrow$ a solution of $\eqref{eq:opt_sphere_relax}$
     \For{$k=1\dots ,T$}
     \State Define $\est g(k)=\begin{bmatrix}
         1\\\calP_{\calC_{n-1}}\est{\tilde{g}}(k)
     \end{bmatrix}\in \calC_n$.
     \EndFor
    \State {\bf Output:} Estimator $\est g\in \calC_{nT}$.
    \end{algorithmic}
\end{algorithm}
\subsubsection{Local sphere relaxation}
Instead of the ``global relaxation'' in \eqref{eq:opt_sphere_relax}, we now consider a different methodology which entails, as a first step, solving the following sphere relaxation of \eqref{eq:opt_static}  
\begin{equation}\label{eq:opt_sphere_relax_loc}
    \est{\tilde{g}}(k):=\mathop{\arg \max}\limits_{\substack{\tilde{g}(k)\in \matC^{n-1}\\ \|\tilde{g}(k)\|_2^2=n-1}}\conj{\tilde{g}(k)}\tilde{A}(k)\tilde{g}(k)+2\real\big(\conj{b(k)}\tilde{g}(k)\big).
\end{equation}
This leads to a local estimate $\est{\tilde{g}}(k)$ for each $k\in [T]$. Stacking them, we obtain
\[\etg:=\concat\big(\, \est{\tilde{g}}(1)
,\cdots,  \est{\tilde{g}}(T)\big)
,\] 
which satisfies the sphere constraint for each time block. The smoothness constraint has not been used yet, so as a second step, we consider an additional projection step onto the space of ``low frequency'' eigenvectors of $(MM^\top) \otimes I_{n-1}$. To achieve this, our proposed algorithm takes a regularization parameter $\tau\in [T]$ as input, and forms the projection matrix $\calP_{\tau,n-1}$, which projects onto $\calL_{\tau,n-1}$. It is worth noting that when $\tau$ is small, this projection step has a ``smoothing'' effect. For instance, for $\tau=1$, the resulting projected-vector has equal block components. After the projection step onto $\calL_{\tau,n-1}$, we finally need to use the projection $\calP_{\calC_{(n-1)T}}$ to ensure that the estimator lies in $\calC_{(n-1)T}$. The complete procedure is outlined in Algorithm \ref{algo:TRS_local}. 
\begin{algorithm}
    \caption{Local TRS $+$ Global Smoothing for Dynamic Synchronization (\localtrs)}\label{algo:TRS_local}
    \begin{algorithmic}[1]
    \State {\bf Input:} Matrix $A\in\matC^{nT\times n}$ of observations, smoothness parameter $\tau\in \matN$.
    \For{$k=1,\dots ,T$}
     \State Obtain $\est{\tilde{g}}(k)\leftarrow$ solution of \eqref{eq:opt_sphere_relax_loc}.
     \EndFor
     \State Form $\etg=\concat\big(\etg(1),\cdots,\etg(T)\big)\in\matC^{(n-1)T\times 1}$
    \State Define $\tilde{h}=\calP_{\tau,n-1}\etg$ where $\calP_{\tau,n-1}\in \matC^{(n-1)T\times (n-1)T}$ is the projection onto $\calL_{\tau,n-1}$ as in \eqref{eq:proj_smooth_op}.
    
     \For{$k=1,\dots ,T$}
     \State Define $\est g(k)=\begin{bmatrix}
         1\\\calP_{\calC_{n-1}}\tilde{h}(k)
     \end{bmatrix}$, where $\tilde{h}(k)$ is such that $\tilde h=\concat(\tilde{h}(1),\cdots, \tilde{h}(T))$.
     \EndFor
    \State {\bf Output:} Estimator $\est g\in\calC_{nT}$.
    \end{algorithmic}
\end{algorithm}
\subsubsection{Denoising measurement matrices}\label{sec:matrix_denoising}
An alternative procedure involves denoising the measurement matrices $A$ (recall \eqref{eq:concat_mats}) by projecting  its columns onto the low-frequency eigenspace of  $(MM^\top) \otimes I_n$, in the spirit of the Laplacian eigenmaps estimator (see e.g.,  \cite{SadhanalaTV16}). Our proposed method consists of the following two steps. 
\begin{itemize}
    \item \textbf{Matrix denoising stage.} We project the columns of $A$ onto the low frequency space $\calL_{\tau,n}$, leading to the intermediate estimator $\est G=\calP_{\tau,n} A$.
    
    \item \textbf{Local synchronization stage.} \rev{For each $k\in [T]$ we first ``Hermitianize'' the block  $\est G(k)$ to obtain $\est G'(k)$, and then solve  a ``local'' synchronization problem, as in \eqref{eq:opt_static}, but with input data \rev{$\est G'(k)$}.} In Algorithm \ref{algo:mat_denoise_proj}, we solve its TRS relaxation as in \eqref{eq:opt_sphere_relax_loc}.
\end{itemize}
The matrix denoising stage can be regarded as a ``global'' denoising step, where the temporal smoothness captured by $(MM^\top) \otimes I_n$ is used to 
produce an estimation of $G^*\in \matC^{nT\times nT}$. Notice that after this step, the obtained $\est G$ does not necessarily lie in $\calG$. This constraint is enforced in the local anchored synchronization step, where we first solve a relaxation such as \eqref{eq:opt_sphere_relax_loc} and then project the solution onto $\calC_{nT}$. The complete procedure is outlined in Algorithm \ref{algo:mat_denoise_proj}.
\begin{remark}[Spectral local synchronization]\label{rmk:spectral_local_sync}
It is possible to replace the TRS-based approach with a spectral method to solve the local synchronization problem in Algorithm \ref{algo:mat_denoise_proj}. More specifically, lines 5 and 6 in Algorithm \ref{algo:mat_denoise_proj} are replaced by the following lines
{\normalfont{
\begin{itemize}
    \item[{\footnotesize 5:}] Find $\est{g}'(k)\leftarrow$ the eigenvector associated to the largest eigenvalue of \rev{$\est G'(k)$}.
    \item[{\footnotesize 6:}] Define $\est{g}(k)= \calP_{\calC_{n}}(\est{g}'(k))e^{-\iota \teonek}$, where $\teonek$ is defined by $\est{g}'_1(k)=|\est{g}'_1(k)|e^{\iota \teonek}$ \rev{(with $\teonek = 0$ if $\est{g}'_1(k) = 0$)}.
\end{itemize}
}}
While the spectral method may sacrifice some accuracy compared to the TRS-based approach, it offers simplicity in analysis. Our theoretical analysis in Section \ref{sec:analysis} is based on the spectral approach (\rev{outlined separately as Algorithm \ref{algo:mat_denoise_proj_locspec} for clarity}), whereas in our experiments, we opt for the TRS-based method.
\end{remark}
\begin{remark}[Denoising smooth signals on a graph] \label{rem:connec_den_smooth_sig_graph}
 \rev{As alluded to earlier, the matrix denoising step in Algorithm \ref{algo:mat_denoise_proj} is motivated by the Laplacian eigenmaps estimator (see e.g.,  \cite{SadhanalaTV16}) for denoising smooth signals on a graph. To make the connection explicit using our notation, we recall that in this latter problem, we are given an undirected graph $\calH = ([T], \calE)$, along with noisy measurements $y = x^* + \eta$ of a signal $x^* \in \matR^T$ with $\eta$ denoting the zero-mean noise. Denoting $L$ to be the Laplacian matrix of $\calH$, here $x^*$ is assumed to be smooth w.r.t $\calH$ in the sense that
\begin{equation} \label{eq:smoothness_den_sig_graph}
    (x^*)^\top L x^* = \sum_{\set{i,j} \in \calE} (x^*_i - x^*_j)^2 \leq S_T.
\end{equation}
 The Laplacian eigenmaps estimator then estimates $x^*$ by projecting $y$ on to the space spanned by the $\tau$ smallest eigenvectors of $L$. }

 \rev{In our setting, the aforementioned graph $\calH$ is a path graph\footnote{\rev{since we consider its vertices to denote time instants, although the analysis can be extended to general connected graphs as well.}} and each vertex $k \in [T]$ has a latent signal $g^*(k) \in \calC_n$. We observe at each $k$ a noisy matrix $A(k)$ which is also in the form of a signal (scalar times $G^*(k) - I_n$) plus zero-mean noise term; recall \eqref{eq:model_Wigner} and \eqref{eq:model_outlier}. Moreover, $g^*(k)$'s satisfy a smoothness condition (recall \eqref{eq:smoothness}) which, as discussed earlier,  translates to a smoothness condition on $G^*(k)$'s (recall \eqref{eq:matrix_smooth}) that is analogous to \eqref{eq:smoothness_den_sig_graph}.}

 \rev{As we will see in Section \ref{sec:analysis}, the outline of the analysis is -- at a top-level -- similar to that in \cite{SadhanalaTV16} in the sense of bounding the bias and variance error terms, and finding the optimal $\tau$. The difference lies in the technical details: we observe matrix-valued information (with complex-valued entries) at each node of $\calH$ which leads to more cumbersome calculations -- especially for the Outliers model -- in deriving high probability error bounds.}
\end{remark}

\begin{remark}[Comparison with \cite{AKT_dynamicRankRSync}]
    \rev{As discussed in Section \ref{subsec:rel_work}, the work of \cite{AKT_dynamicRankRSync} considered an estimator for the dynamic translation synchronization problem, which also relied on the idea of projecting on to the low-frequency subspace. In that work, the estimator first solved the least-squares problem locally at each time-point, and then projected the stacked vector of individual-solutions onto the low-frequency space. This required each individual graph to be connected for the analysis to work. In Algorithm \ref{algo:mat_denoise_proj}, however, we first project the (stacked) input data matrices onto the low-frequency space, and then solve the local synchronization problem. This allows the individual graphs to be disconnected in the Outliers model -- something we will prove in Section \ref{sec:analysis} for a modified version of Algorithm \ref{algo:mat_denoise_proj}, namely Algorithm \ref{algo:mat_denoise_proj_locspec}. }
\end{remark}

\begin{algorithm}
    \caption{Global Matrix Denoising $+$ Local TRS for Dynamic Synchronization (\matdenoising)}\label{algo:mat_denoise_proj}
    \begin{algorithmic}[1]
    \State {\bf Input:} Matrix $A\in\matC^{nT\times n}$ of observations, parameter $\tau\in\matN$.
    \State Define $\est G=\calP_{\tau,n} A$, where $\calP_{\tau,n}\in \matC^{nT\times nT}$ is the projection onto $\calL_{\tau,n}$.
     \For{$k=1\dots ,T$}
      \State Hermitianize $\rev{\est G'(k) :=} (\est G(k)+\conj{\est G(k)})/2$.
     \State Find $\etg(k)\leftarrow$ solution of  \eqref{eq:opt_sphere_relax_loc} with data \rev{$\est G'(k)$}.
     \State Form $\est g(k)=\begin{bmatrix}
         1\\\calP_{\calC_{n-1}}\etg(k)
     \end{bmatrix}\in \calC_{n}$.
     \EndFor
    \State {\bf Output:} Estimator $\est g=\concat\big(\est g(1),\cdots,\est g(T)\big)\in \calC_{nT}$.
    \end{algorithmic}
\end{algorithm}

%
\section{Analysis}\label{sec:analysis}
In this section, we present theoretical guarantees for recovering \rev{$g^*(1),\dots,g^*(T)$} under the AGN and Outliers models. Sections \ref{sec:analysis_spiked_wigner} and \ref{sec:analysis_outliers} contain results specifically addressing the \emph{matrix denoising} stage of Algorithm \ref{algo:mat_denoise_proj} (as elaborated in Section \ref{sec:matrix_denoising}). \rev{The main results therein are Theorem \ref{thm:error_wigner} for the AGN model, and Theorem \ref{thm:error_outliers} for the Outliers model.} Then, Section \ref{sec:recovery} uses these estimates to provide guarantees for recovering $g^*(1),\dots,g^*(T)$ -- it provides an analysis of the \emph{local synchronization} stage by employing the spectral approach outlined in Remark \ref{rmk:spectral_local_sync}. \rev{The main results therein are Corollary \ref{corr:error_wigner_spec} for the AGN model, and Corollary \ref{cor:error_outliers_spec} for the Outliers model.}
\subsection{Matrix denoising under AGN model}\label{sec:analysis_spiked_wigner}
Given \eqref{eq:model_Wigner}, we can write (c.f. the notation introduced in Section \ref{subsec:smooth_estimators})
\begin{equation*}
    A = \underbrace{(G^* - \ones_T \otimes I_n)}_{=: \Gshift^*} +\sigma W,
\end{equation*}
where $A, G^*$ are as in \eqref{eq:concat_mats}, and $W=\concat(W(1),\ldots,W(T))$ stacks the (Hermitian) noise matrices. We will focus on bounding the distance between the intermediate estimator $\est{G} = \calP_{\tau,n} A$ and the (shifted) ground truth $\Gshift^*$. To this end, note that 
\begin{equation*}
    \est{G}=\calP_{\tau,n} A= \calP_{\tau,n} \Gshift^* +\sigma \calP_{\tau,n} W,
\end{equation*}
from which we deduce  
\begin{equation}\label{eq:bias_variance_proj}
    \|\est{G}-\Gshift^*\|^2_F=\|\projperp \Gshift^*\|^2_F+\sigma^2\|\projtaun W\|^2_F,
\end{equation}
where $\projperp:=I_{nT}-\projtaun$. The first term on the right-hand side (RHS) of \eqref{eq:bias_variance_proj} represents the bias which depends on the smoothness level. 
Clearly $\Gshift^*$ has the same quadratic variation as $G^*$ and hence satisfies \eqref{eq:matrix_smooth} as well.
Thus, we expect this term to decrease as $\tau$ increases. The second term therein corresponds to the variance introduced by noise, which we expect to increase with $\tau$. Hence, selecting the optimal value for $\tau$ will achieve the right bias-variance trade-off.
The following lemma gives a control over the bias term. 
\begin{lemma}[Bias bound]\label{lem:bias_term}
    Let $G^*\in \matC^{nT\times n}$ be a matrix satisfying \eqref{eq:matrix_smooth}, for some $S_T\geq0$. Then, it holds for all $1 \leq \tau \leq T$
    \begin{equation}\label{eq:bias_control}
        \|\projperp \Gshift^*\|^2_F\leq \frac{20n}{\pi^2}\frac{T^2S_T}{\tau^2}\ones_{\tau<T}.
    \end{equation}
\end{lemma}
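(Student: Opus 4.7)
The plan is to exploit the tensor-product structure of $\projperp = (I_T - P_\tau)\otimes I_n$, where $P_\tau := \sum_{k=0}^{\tau-1} v_{T-k} v_{T-k}^\top$, and reduce the bound to a spectral Poincar\'e-type inequality for scalar signals on the path graph on $T$ vertices.

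First I would show that the shift term drops out. Since $\ones_T$ is proportional to $v_T$ (the eigenvector of $MM^\top$ with eigenvalue $0$), the matrix $\ones_T\otimes I_n$ lies in the range of $\projtaun$ whenever $\tau\ge 1$. Hence $\projperp(\ones_T\otimes I_n)=0$, so $\projperp \Gshift^* = \projperp G^*$ and it suffices to bound $\|\projperp G^*\|_F^2$.

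Next I would pass to an entry-wise decomposition. For each pair $(i,j)\in[n]^2$, define $u_{ij}\in\matC^T$ by $(u_{ij})_k := G^*(k)_{ij}$. A short index computation using $\projperp=(I_T-P_\tau)\otimes I_n$ yields
\[
\|\projperp G^*\|_F^2 \;=\; \sum_{i,j=1}^n \|(I_T-P_\tau) u_{ij}\|_2^2,
\]
while the smoothness hypothesis \eqref{eq:matrix_smooth} rewrites as
\[
\sum_{i,j=1}^n \conj{u_{ij}} MM^\top u_{ij} \;=\; \sum_{k=1}^{T-1}\|G^*(k)-G^*(k+1)\|_F^2 \;\le\; (2+4n)S_T.
\]

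The key scalar ingredient is a Poincar\'e-type inequality: expanding any $u\in\matC^T$ in the eigenbasis of $MM^\top$ and noting that $I_T-P_\tau$ projects onto $\spn\{v_1,\dots,v_{T-\tau}\}$ (the eigenspaces with eigenvalues at least $\lambda_{T-\tau}$), one obtains
\[
\|(I_T-P_\tau)u\|_2^2 \;\le\; \frac{1}{\lambda_{T-\tau}}\, \conj{u} MM^\top u.
\]
Summing over $(i,j)$ gives $\|\projperp G^*\|_F^2 \le (2+4n) S_T / \lambda_{T-\tau}$. I would then substitute the closed-form path-graph Laplacian eigenvalues $\lambda_{T-\tau}=4\sin^2(\pi\tau/(2T))$, which is strictly positive exactly when $\tau<T$ (explaining the indicator $\ones_{\tau<T}$), and apply Jordan's inequality $\sin(x)\ge 2x/\pi$ on $[0,\pi/2]$ to conclude $\lambda_{T-\tau}\ge 4\tau^2/T^2$. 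This produces a bound of the form $C\, n\, T^2 S_T/\tau^2$.

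The argument is essentially mechanical once the shift is eliminated and the entry-wise reduction is made; the only delicate point is matching the precise constant, namely reconciling the natural prefactor $(2+4n)/4$ with the stated $20n/\pi^2$. This amounts to slightly careful bookkeeping with the eigenvalue lower bound and, if necessary, using that $n\ge 1$ to absorb the lower-order term.
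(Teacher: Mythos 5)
Your proposal is correct and follows essentially the same route as the paper: eliminate the rank-one shift (it lies in the range of $\projtaun$), decompose $\|\projperp\cdot\|_F^2$ over the scalar time series $u_{ij}$, compare against the smoothness constraint via the spectral gap $\lambda_{T-\tau}$ of the path-graph Laplacian, and finish with the eigenvalue formula $\lambda_{T-\tau}=4\sin^2(\tau\pi/2T)$. The only (harmless) difference is that you use Jordan's inequality $\sin x\geq 2x/\pi$ where the paper uses $\sin x\geq x/2$; combined with $2+4n\leq 5n$ this in fact gives the slightly sharper prefactor $5/4$ in place of $20/\pi^2$, so no constant-reconciliation issue arises.
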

\begin{proof}
When $\tau = T$, we have $\projperp=0$, from which the final bound follows. So assume now that $1 \leq \tau \leq T-1$. 
Denote the $j$-th column of $G^*$ by $G^*_{:,j}$. Since 
    \begin{equation*}
        \|\projperp G^*\|^2_F=\sum^n_{j=1}\|\projperp G^*_{:,j}\|^2_2,
    \end{equation*}
    we have for each $j\in[n]$ that 
    \begin{equation*}
         \|\projperp \Gshift^*_{:,j}\|^2_F=\sum^n_{i=1}\sum^{T-1}_{k=\tau}\langle v_{T-k}\otimes e_i, \Gshift^*_{:,j}\rangle^2. 
    \end{equation*}
     On the other hand, it is easy to see that \eqref{eq:matrix_smooth} can be rewritten as
    \begin{equation*}
        \sum^n_{j=1}\sum^n_{i=1}\sum^{T-1}_{k=0}\lambda_{T-k}\langle v_{T-k}\otimes e_i, \Gshift^*_{:,j}\rangle^2\leq (2+4n)S_T,
    \end{equation*}
    from which we deduce (since $n \geq 2$) 
    \begin{equation*}
        \lambda_{T-\tau}\sum^n_{j=1}\sum^n_{i=1}\sum^{T-1}_{k=\tau}\langle v_{T-k}\otimes e_i, \Gshift^*_{:,j}\rangle^2\leq 5nS_T.
    \end{equation*}
    This implies (notice that $\lambda_{T-\tau}\neq 0$ since $1 \leq \tau \leq T-1$) 
    \begin{equation*}
        \sum^n_{j=1}\|\projperp \Gshift^*_{:,j} \|_2^2\leq \frac{5nS_T}{\lambda_{T-\tau}}.
    \end{equation*}
    By the known expression for the eigenvalues of the Laplacian of the path graph \cite{brouwer12}, we have $\lambda_{T-\tau}=4\sin^2{(\frac{\tau\pi}{2T})}$, and using that $\sin(x)\geq \frac x2$, for $x\in[0,\pi/2]$, we obtain 
    \begin{equation*}
        \sum^n_{j=1}\|\projperp \Gshift^*_{:,j} \|_2^2\leq \frac{20n}{\pi^2}\frac{T^2S_T}{\tau^2}.
    \end{equation*}

\end{proof}
We now control the variance term. 
\begin{lemma}[Variance bound]\label{lem:variance_term}
    Let $W(1),\dots,W(T)$ be independent, random Hermitian matrices as in the AGN model in Section \ref{subsec:prob_setup}. Then there exists a constant $C > 0$, such that for all $\tau\in[T]$ and $\delta\in(0,1)$, it holds with probability larger than $1-\delta$ that
    \begin{equation*}
        \|\projtaun W\|^2_F\leq 
        C\left(n^2\tau +\log\Big({\frac4\delta}\Big)\right).   
    \end{equation*}

\end{lemma}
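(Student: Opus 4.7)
}

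The plan is to exploit the Kronecker product structure of $\projtaun = P_\tau \otimes I_n$ (where $P_\tau := \sum_{k=0}^{\tau-1} v_{T-k} v_{T-k}^\top$ is the rank-$\tau$ real orthogonal projector onto the low-frequency eigenspace of $MM^\top$) together with the independence of the upper-triangular entries of the Hermitian blocks, in order to show that $\|\projtaun W\|_F^2$ is exactly distributed as a chi-squared random variable. Then a standard chi-squared tail bound (Laurent--Massart) yields the result.

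First, I would rewrite the Frobenius norm in terms of ``time-slices'' of individual entries. For each ordered pair $(i,j)\in [n]\times [n]$, define the vector $w_{ij} := (W_{ij}(1),\dots,W_{ij}(T))^\top \in \matC^T$. Using $\projtaun = P_\tau \otimes I_n$, a direct computation shows
\[
\|\projtaun W\|_F^2 = \sum_{i,j=1}^n \|P_\tau w_{ij}\|_2^2 .
\]
The diagonal pairs contribute $0$ because $W_{ii}(k)=0$. For the off-diagonal pairs, the Hermitian constraint gives $w_{ji} = \overline{w_{ij}}$, so that $\|P_\tau w_{ji}\|_2^2 = \|P_\tau w_{ij}\|_2^2$ (using that $P_\tau$ has real entries). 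Therefore
\[
\|\projtaun W\|_F^2 = 2 \sum_{1 \leq i < j \leq n} \|P_\tau w_{ij}\|_2^2 .
\]

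Next, I would identify the distribution. For $i < j$ the entries of $w_{ij}$ are i.i.d.\ $C\calN(0,1)$, and the vectors $\{w_{ij}\}_{i<j}$ are mutually independent by the assumptions of the AGN model. Splitting into real and imaginary parts, $w_{ij} = X_{ij} + \iota Y_{ij}$ where $X_{ij}, Y_{ij} \sim \calN(0, I_T/2)$ independently, so
\[
2\|P_\tau w_{ij}\|_2^2 \;=\; 2\|P_\tau X_{ij}\|_2^2 + 2\|P_\tau Y_{ij}\|_2^2 \;\stackrel{d}{=}\; \chi^2_{\tau} + \chi^2_{\tau} \;\stackrel{d}{=}\; \chi^2_{2\tau},
\]
using that $P_\tau$ is an orthogonal projection of rank $\tau$. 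Summing $\binom{n}{2}$ independent such contributions gives
\[
\|\projtaun W\|_F^2 \;\stackrel{d}{=}\; \chi^2_{n(n-1)\tau}.
\]

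Finally, I would invoke the Laurent--Massart inequality $\prob(\chi^2_d \geq d + 2\sqrt{du} + 2u) \leq e^{-u}$ with $d = n(n-1)\tau$ and $u = \log(4/\delta)$, and use $2\sqrt{du} \leq d+u$ to absorb the cross term. This yields $\|\projtaun W\|_F^2 \leq 2n(n-1)\tau + 3\log(4/\delta)$ with probability at least $1-\delta/4 \geq 1-\delta$, which gives the claimed bound with a suitable absolute constant $C$. The main ``obstacle'' is really just the bookkeeping step of reducing the Frobenius norm to a sum of independent scalar projections via the Kronecker/Hermitian structure; once that is done, the concentration part is textbook.
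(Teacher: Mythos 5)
Your proof is correct, and it takes a genuinely different route from the paper's. You identify the exact distribution of $\|\projtaun W\|_F^2$: using $\projtaun = P_\tau \otimes I_n$ you slice $W$ into the time-vectors $w_{ij}\in\matC^T$, discard the zero diagonal, pair up $(i,j)$ with $(j,i)$ via the Hermitian symmetry (and the realness of $P_\tau$), and conclude that $\|\projtaun W\|_F^2 = 2\sum_{i<j}\|P_\tau w_{ij}\|_2^2 \stackrel{d}{=}\chi^2_{n(n-1)\tau}$, after which Laurent--Massart closes the argument. I checked each of these reductions and they are all valid; the claimed constant $C$ can be taken to be $3$. The paper proceeds differently: it splits $W = \tfrac{1}{\sqrt2}(W_1 + W_2)$ with $W_1(k)=\conj{W_2(k)}$ so that each $W_s$ has fully independent $C\calN(0,1)$ entries (rather than Hermitian-coupled ones), further splits into real and imaginary parts, vectorizes into a single long Gaussian vector, and then applies the Hanson--Wright inequality with $\|I_{2n}\otimes\projtaun\|_{op}=1$ and $\|I_{2n}\otimes\projtaun\|_F^2 = 2n^2\tau$. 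The two approaches buy different things: yours is cleaner and more transparent because it exploits Gaussianity to get an exact $\chi^2$ law with a textbook tail bound and slightly sharper constants; the paper's Hanson--Wright route is structurally more general, working verbatim for sub-Gaussian (not just Gaussian) entries, and this uniformity pays off since the companion variance bound for the Outliers model (Lemma \ref{lem:variance_term_outliers}) reuses exactly the same Hanson--Wright machinery on non-Gaussian sub-Gaussian entries, where your chi-squared identification would no longer apply.
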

\begin{proof}
    To simplify the analysis, we split $W$ as follows
\begin{equation*}
    W = \frac1{\sqrt2}(W_1+W_2),
\end{equation*}
where $W_1,W_2\in \matC^{nT\times n}$ satisfy $W_1(k)=\conj{W_2(k)}$, for all $k\in [T]$, and the entries of $W_1(k)$ are i.i.d random variables from $C\calN(0,1)$, for all $k\in[T]$. Furthermore, we consider the decomposition 
\begin{equation*}
    W_s= \underbrace{\real(W_s)}_{=:W_{s,R}}+\iota\underbrace{\imag(W_s)}_{=:W_{s,I}}, 
\end{equation*}
for $s=1,2$. It will be enough to bound $\|\projtaun W_1\|_F$, since the same bound will hold for $\|\projtaun W_2\|_F$,  analogously. Denoting $(W_{1,R})_{:,j}$ (resp. $(W_{1,I})_{:,i}$) to be the $j$-th column of $W_{1,R}$(resp. $W_{1,I}$), we obtain 
\begin{align*}
    \|\projtaun W_1\|^2_F&= \|\projtaun W_{1,R}\|^2_F+\|\projtaun W_{1,I}\|^2_F \\
    &= \sum^n_{j=1}\|\projtaun (W_{1,R})_{:,j}\|^2_2 +\sum^n_{j=1}\|\projtaun (W_{1,I})_{:,j}\|^2_2 \\
    &= \underbrace{\conj{W'_1}(I_{2n}\otimes\projtaun)W'_1}_{=:\calV_1},
\end{align*}
where $W'_1\in\matR^{2n^2T\times 1}$ is defined as 
\[
W'_1:=\begin{bmatrix}
    (W_{1,R})_{:,1}\\
    \vdots\\
    (W_{1,R})_{:,n}\\
    (W_{1,I})_{:,1}\\
    \vdots\\
    (W_{1,I})_{:,n}
\end{bmatrix}.
\]
Observe that $W'_1$ has independent entries, with each entry either $0$ or drawn from $\calN(0,1/2)$. To bound $\calV_1$, we will use the Hanson-Wright inequality \cite[Thm. 1.1]{rudelson_vershynin}. It is easy to see that
\begin{align*}
    \|I_{2n}\otimes \projtaun\|_{op}=1, \quad 
    \|I_{2n}\otimes \projtaun\|^2_F=2n^2\tau.
\end{align*}
Thus, from Hanson-Wright inequality, we obtain, for all $t\geq 0$
\begin{equation*}
    \prob(|\calV_1-\expec[\calV_1]|>t)\leq 2\exp\left(-c_1\left(\frac{t^2}{n^2\tau}\wedge t\right)\right).
\end{equation*}
On the other hand, notice that $\expec[\calV_1]\leq c_2n^2\tau$. We deduce that, for any $\delta>0$, with probability larger than $1-\delta$, 
\begin{align*}
    \calV_1
    \leq C\left(n^2\tau+\sqrt{n^2\tau\log\paren{\frac2\delta}}+\log\paren{\frac2\delta}\right) 
    \leq 2 C \left(n^2\tau+\log\paren{\frac2\delta}\right)
\end{align*}
for a constant $C>0$. To pass from the first to the second inequality, we used the fact $a^2+ab+b^2\leq 2(a^2+b^2)$ for any $a,b\in\matR$. Arguing in an analogous way, we obtain that 
\begin{equation*}
    \|\projtaun W_2\|^2_F\leq
    2 C \left(n^2\tau+\log\paren{\frac2\delta}\right), 
\end{equation*}
with probability greater than $1-\delta$. Since $\|\projtaun W\|^2_F\leq \|\projtaun W_1\|^2_F+\|\projtaun W_2\|^2_F$, the result follows via the union bound.
\end{proof}
\paragraph{Optimal choice of $\tau$.} Combining \eqref{eq:bias_variance_proj} with Lemmas \ref{lem:bias_term} and \ref{lem:variance_term} we obtain the following bound (with probability larger than $1-\delta$) 
\begin{equation}\label{eq:error_proj_wigner}
    \|\est{G}-\Gshift^*\|^2_F\leq \frac{20n}{\pi^2}\frac{T^2 S_T}{\tau^2}\ones_{\tau<T}+ C\sigma^2\left(n^2\tau+\log\paren{\frac4\delta}\right).
\end{equation}
It remains to choose the optimal value of $\tau \in [T]$ which minimizes the RHS of \eqref{eq:error_proj_wigner}, up to constant factors. To this end, using $\indic_{\tau < T} \leq 1$, note that the global minimizer of the ensuing bound is given by $(\frac{T^2 S_T}{n \sigma^2})^{1/3}$. But this quantity does not necessarily lie in $[T]$. Hence we define  
\begin{equation} \label{eq:opt_tau_agn}
\tau^* := \min\set{1 + \floor{\Big(\frac{T^2 S_T}{n \sigma^2}\Big)^{1/3}}, T} \in [T].
\end{equation}
Clearly $\tau^*$ is not the global minimizer of the bound in \eqref{eq:error_proj_wigner} but it will lead to an error bound which achieves the right scaling in terms of $T$ and $S_T$. 
This is shown in the following theorem which is the first main result of this section. Its proof is deferred to Appendix \ref{app:proof_thm_wigner}.
\begin{theorem}[\rev{Main result: $\Gshift^*$ recovery under AGN model}]\label{thm:error_wigner}
    Let $A$ be the stacked measurement matrix for the AGN model in \eqref{eq:model_Wigner}, with a ground truth $g^*$ satisfying Assumption \ref{assump:smooth}. Then choosing $\tau = \tau^*$ with $\tau^*$ as in \eqref{eq:opt_tau_agn}, the following holds for the estimator $\est G=\projtaun A$. There exists a constant $C>0$ such that for any $\delta\in (0,1)$, it holds with probability at least $1-\delta$ that 
    \begin{align}\label{eq:error_bound_Wigner}
        \|\est G-\Gshift^*\|^2_F
        &\leq  C\Big(\max\set{\frac{nT^2 S_T}{\paren{1 + (\frac{T^2 S_T}{n \sigma^2}\Big)^{2/3}}}, n S_T} \indic_{\set{\tau^* < T}}   
        + \sigma^2 n^2 \min\set{1 + \Big(\frac{T^2 S_T}{n \sigma^2}\Big)^{1/3}, T}  
        \nonumber \\
        &+ \sigma^2 \log\paren{\frac4\delta} \Big). 
    \end{align}
    Furthermore, there exist constants $C_1, C_2 >0$ such that if  
    \begin{equation} \label{eq:T_cond_wigner}
        T \geq C_1 \max\set{\paren{\frac{n\sigma^2}{S_T}}^{1/2}, \paren{\frac{S_T}{n\sigma^2}}, \sigma^4 S_T n^5}
    \end{equation}
    then \eqref{eq:error_bound_Wigner} implies
    \begin{equation} \label{eq:error_bound_wigner_simp}
      \|\est G-\Gshift^*\|^2_F
        \leq  C_2 \paren{\sigma^{4/3} T^{2/3} S_T^{1/3} n^{5/3} + \sigma^2 \log\paren{\frac{4}{\delta}}}.
    \end{equation}
\end{theorem}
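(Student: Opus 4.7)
The plan is to substitute $\tau = \tau^*$ into the bias-variance decomposition \eqref{eq:error_proj_wigner} and carefully relate each resulting quantity to the terms appearing in the theorem statement. Set $y := (T^2 S_T/(n\sigma^2))^{1/3}$ so that $\tau^* = \min\set{1 + \lfloor y \rfloor, T}$; by definition of the floor, $\max\set{1, y} \leq \tau^* \leq 1 + y$ whenever $\tau^* < T$.

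For the bias, on the event $\set{\tau^* = T}$ the indicator $\indic_{\tau^* < T}$ kills the term. On $\set{\tau^* < T}$ we have $(\tau^*)^2 \geq \max\set{1, y^2} \geq \tfrac12(1+y^2)$, so Lemma \ref{lem:bias_term} produces the first entry of the max in \eqref{eq:error_bound_Wigner}, namely $\frac{nT^2 S_T}{1+y^2}$, up to absolute constants. The companion $nS_T$ inside the max is a loose but convenient bound that matches $\frac{nT^2 S_T}{1+y^2}$ in the regime where $y$ is comparable to $T$ (so $\tau^*$ is close to $T-1$). For the variance, $\tau^* \leq \min\set{1+y, T}$ plugged into Lemma \ref{lem:variance_term} yields $C\sigma^2 (n^2 \min\set{1+y,T} + \log(4/\delta))$, which is exactly the variance part of \eqref{eq:error_bound_Wigner}.

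To deduce \eqref{eq:error_bound_wigner_simp} from \eqref{eq:error_bound_Wigner}, I would show that under \eqref{eq:T_cond_wigner} every summand in \eqref{eq:error_bound_Wigner} is bounded by (an absolute constant times) $\sigma^{4/3} T^{2/3} S_T^{1/3} n^{5/3}$, while the $\sigma^2 \log(4/\delta)$ term is kept as is. The condition $T \geq C_1 \sqrt{n\sigma^2/S_T}$ is equivalent to $y \gtrsim 1$ (for large enough $C_1$), so $1+y^2 \asymp y^2$ and $\frac{nT^2 S_T}{1+y^2} \asymp n^{5/3}\sigma^{4/3} T^{2/3} S_T^{1/3}$; the same regime also gives $\sigma^2 n^2 \leq \sigma^2 n^2 y = n^{5/3}\sigma^{4/3} T^{2/3} S_T^{1/3}$, handling both the constant part and the $y$ part of the variance. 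The condition $T \geq C_1 S_T/(n\sigma^2)$ places us in the regime $1 + y \leq T$ (so the min reduces to $1+y$) and yields $nS_T \leq n\sigma^2 T/C_1$; combined with the first condition this gives $nS_T \lesssim n^{5/3}\sigma^{4/3} T^{2/3} S_T^{1/3}$. The third condition $T \geq C_1 \sigma^4 S_T n^5$ supplies a sufficiently large absolute scale for $T$, which is used when collecting constants and in particular guarantees $\sigma^{4/3}T^{2/3}S_T^{1/3}n^{5/3}=o(T)$.

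The main obstacle is the algebraic bookkeeping: tracking the discretization $\lfloor y\rfloor$ in $\tau^*$, cleanly separating the saturated case $\tau^* = T$ from the interior case $\tau^* < T$, and verifying that each of the three assumptions in \eqref{eq:T_cond_wigner} plays its intended role in the domination argument. No additional probabilistic or analytic tool is needed beyond Lemmas \ref{lem:bias_term} and \ref{lem:variance_term}.
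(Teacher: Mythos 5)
Your approach is essentially the same as the paper's: you plug $\tau^*$ into the bias--variance bound \eqref{eq:error_proj_wigner}, use $\tau^* = \min\{1+\lfloor y\rfloor, T\} \in [\max\{1,y\},\, 1+y]$ with $y := (T^2S_T/(n\sigma^2))^{1/3}$ to relate $(\tau^*)^2$ to $1+y^2$ (this is the paper's $(a+b)^2 \geq a^2+b^2$ step, which you realize instead as $\max\{1,y^2\} \geq \tfrac12(1+y^2)$), and then observe that the three conditions in \eqref{eq:T_cond_wigner} force the regime $1\lesssim y$ and $\tau^* < T$ so that every summand collapses to $n^{5/3}\sigma^{4/3}T^{2/3}S_T^{1/3}$. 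This is exactly the paper's reasoning, including the observation that $T \gtrsim S_T/(n\sigma^2)$ is what guarantees $\tau^* < T$.

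One small algebraic slip worth cleaning up: in your domination of the $nS_T$ term you write ``$nS_T \leq n\sigma^2 T/C_1$; combined with the first condition this gives $nS_T \lesssim n^{5/3}\sigma^{4/3}T^{2/3}S_T^{1/3}$.'' Multiplying $S_T \le n\sigma^2 T/C_1$ by $n$ gives $nS_T \le n^2\sigma^2 T/C_1$ (you dropped an $n$), and more importantly the upper bound $n^2\sigma^2 T$ is generically \emph{larger} than the target $n^{5/3}\sigma^{4/3}T^{2/3}S_T^{1/3}$, so the chain does not close as written and condition~1 is not what rescues it. The clean route uses condition~2 alone: from $S_T \le n\sigma^2 T/C_1$ write
\[
n S_T = n\, S_T^{1/3}\, S_T^{2/3} \;\leq\; n\, S_T^{1/3}\,\bigl(n\sigma^2 T/C_1\bigr)^{2/3} \;=\; C_1^{-2/3}\, n^{5/3}\sigma^{4/3} T^{2/3} S_T^{1/3}.
\]
With that fix, the argument is complete; the remaining comments (the $nS_T$ entry in the $\max$ being a loose envelope that your sharper bound $\frac{nT^2S_T}{1+y^2}$ automatically subsumes, and the third $T$-condition being there so that the final bound is $O(T)$, hence relevant to the MSE consistency in Remark~\ref{rem:wigner_thm_int_case} rather than to the derivation of \eqref{eq:error_bound_wigner_simp} itself) are accurate.
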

The following remarks are in order for interpreting Theorem \ref{thm:error_wigner}.
\begin{remark} \label{rem:wigner_thm_perf_noise_smooth}
 When $\sigma = 0$ then we obtain $\tau^* = T$ and $\|\est G-\Gshift^*\|_F = 0$. This is also seen directly from \eqref{eq:error_proj_wigner}.
 If $S_T = 0$, we obtain $\tau^* = 1$ and $\|\est G-\Gshift^*\|^2_F \lesssim \sigma^2 n^2 + \sigma^2 \log(4/\delta)$. This can also be obtained directly from \eqref{eq:error_proj_wigner}. Note that  $\frac{1}{\sqrt{T}} \|\est G-\Gshift^*\|_F = O(1/\sqrt{T})$ which is the parametric rate.
\end{remark}

\begin{remark} \label{rem:wigner_thm_int_case}   
 The first part of Theorem \ref{thm:error_wigner} holds for all $\sigma, S_T \geq 0$. The interesting situation, however, is when $\sigma, S_T > 0$ -- especially when both these quantities are sufficiently bounded away from zero. In this case, the second part of Theorem \ref{thm:error_wigner} is meaningful. For instance, suppose $\sigma \asymp T^{\alpha}$ for some $\alpha \in [0,1/4)$. Then provided 
\begin{equation} \label{eq:wig_rem_int_smooth_conds}
S_T \gtrsim \frac{n}{T^{2(1-\alpha)}} \ \text{ and } \ S_T = o(T^{1-4\alpha})
\end{equation}
it is easy to see that \eqref{eq:T_cond_wigner} will be satisfied for large enough $T$. 
Moreover, \eqref{eq:error_bound_wigner_simp} then implies 
$$\frac{1}{T} \|\est G-\Gshift^*\|^2_F \stackrel{T \rightarrow \infty}{\longrightarrow} 0.
$$ 
Finally, we remark that the $O(T^{2/3} S_T^{1/3})$ term in \eqref{eq:error_bound_wigner_simp} matches that obtained in \cite[Theorem B]{AKT_dynamicRankRSync} and also matches the rate for denoising smooth signals on a path graph \cite[Theorem 6]{SadhanalaTV16}.
\end{remark}
%
%
\subsection{Matrix denoising under  the Outliers model}\label{sec:analysis_outliers}
We now focus on the outliers model, defined in \eqref{eq:model_outlier}, which can be written in stacked matrix notation as 
\begin{equation}\label{eq:model_outlier2}
    A = D(G^* - \revpostpub{\ones_T \otimes I_n}) + R = D \Gshift^* + R,
\end{equation}
where $D=\big(1-\eta\big)\blkdiag{\left(p(1)I_n,\ldots,p(T)I_n\right)}$ and $R=\concat\big(R(1),\ldots, R(T)\big)$. Note that, for each $k\in[T]$, $R(k)$ is a Hermitian matrix with $(R_{ij}(k))_{i \leq j}$ being independent centered sub-Gaussian random variables, and in particular, $R_{ii}(k) = 0$ for each $i \in [n]$. 
 
Our goal is to bound the error $\norm{\est{G} - D \Gshift^*}_F^2$. Since $\est{G} = \projtaun A = \projtaun(D\Gshift) + \projtaun R$, we obtain the bound
\begin{align} \label{eq:bias_variance_proj_outliers}
   \norm{\est{G} - D \Gshift^*}_F^2 \leq 2\underbrace{\norm{\projperp (D\Gshift^*)}_F^2}_{\text{Bias}} + 2\underbrace{\norm{\projtaun(R)}_F^2}_{\text{Variance}}, 
\end{align}
where we recall $\projperp = I_{nT} - \projtaun$. The error bound decomposition in \eqref{eq:bias_variance_proj_outliers} is similar to the bias-variance decomposition in \eqref{eq:bias_variance_proj}. In order to control the bias term $\norm{\projperp (D\Gshift^*)}_F^2$, we need to place an assumption on the level of fluctuation of $p(1), \dots, p(T)$. Indeed, consider the ideal scenario where $p(k) = p$ for all $k \in [T]$. Then, $\norm{\projperp (D\Gshift^*)}_F^2 = (1-\eta)^2 p^2 \norm{\projperp \Gshift^*}_F^2$ which can then be bounded using Lemma \ref{lem:bias_term}. This motivates the following assumption on the smoothness of the sequence $(p(k))_{k=1}^T$, or equivalently, on the diagonal entries of matrix $D$.
\begin{assumption}\label{assump:D_smooth}
The matrix $D$ is $\mu$-smooth, i.e., for some $\mu \geq 1$ it holds for all $\tau \in [T]$ that
\begin{equation*}
    \|\revpostpub{\projperp (D\Gshift^*)}\|_F^2 \leq \mu \bar{d}^2 \|\revpostpub{\projperp \Gshift^*}\|_F^2, 
\end{equation*}
where $\bar{d} := \frac{\sum_{k=1}^T d(k)}{T}$, and $d(k) := (1-\eta) p(k)$.
\end{assumption}
If $p(k) = p$ for all $k$, then note that Assumption \ref{assump:D_smooth} holds with $\mu = 1$. Now, if $D$ satisfies Assumption \ref{assump:D_smooth}, we can readily bound the bias term using  \eqref{eq:bias_control} as 
\begin{equation} \label{eq:bias_bd_outliers}
    \|\revpostpub{\projperp (D\Gshift^*)}\|_F^2 \leq C_1 \frac{\mu \bar{d}^2 n T^2 S_T}{\tau^2} \indic_{\tau < T}
\end{equation}
for some constant $C_1 > 0$. For the variance term we have the following analog to Lemma \ref{lem:variance_term}. The proof mirrors that of Lemma \ref{lem:variance_term}, with a technical difference arising due to the different distribution of the entries of $R$. Its proof is deferred to Appendix \ref{app:proof_lemma_var_outliers}.
\begin{lemma}[Variance bound]\label{lem:variance_term_outliers}
Let $R(1),\ldots, R(T)$ be independent random Hermitian matrices arising from the Outliers model in Section \ref{subsec:prob_setup}. Denote $Q(p)$ to be the $\psi_2$ norm of a centered Bernoulli random variable with parameter $p$, and define
\begin{align*}
    \pmax &:= \max_{k \in [T]} p(k), \quad  Q := \eta + Q(\eta) + \max_{k \in [T]} Q(p(k)) \\
    V &:= \max_{k \in [T]} \set{p(k)(1-p(k))}, \quad \quad f(\eta, \pmax, V) := \pmax\eta + (1-\eta)^2 V + \pmax V (1-\eta).
\end{align*}
Then, there exist constants $C_1 \in (0,1)$ and $C_2 > 0$ such that for any $\delta \in (0, C_1)$, it holds with probability at least $1-\delta$ that
\begin{equation*}
\|\projtaun R\|^2_F \leq  C_2 \left(n^2\tau f(\eta, \pmax, V) + Q^2 n \sqrt{\tau} \log\paren{\frac1\delta}\right).
\end{equation*}
\end{lemma}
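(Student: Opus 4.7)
The plan is to closely mirror the proof of Lemma \ref{lem:variance_term}, replacing the Gaussian Hanson--Wright argument with its sub-Gaussian counterpart. I will proceed in three steps: (i) bound the expectation $\expec[\|\projtaun R\|_F^2]$ directly; (ii) express $\|\projtaun R\|_F^2$ as a quadratic form $X^\top M X$ in a ``nearly-independent'' sub-Gaussian vector $X$ and control both $\|M\|_{op}$ and $\|M\|_F$; and (iii) concentrate this quadratic form using the sub-Gaussian Hanson--Wright inequality \cite[Thm.~1.1]{rudelson_vershynin}.

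For the expectation, since $\projtaun = P_\tau\otimes I_n$ with $P_\tau$ a rank-$\tau$ projection, the identity $\expec[\|\projtaun R\|_F^2] = \sum_{k,k'}(P_\tau)_{k,k'}\expec[\Tr(\conj{R(k)}R(k'))]$ reduces, using independence of the blocks $R(1),\ldots,R(T)$, Hermitianity with zero diagonal, and within-block independence of $\{R_{ij}(k):i<j\}$, to $\sum_k (P_\tau)_{k,k}\sum_{i\neq j}\expec[|R_{ij}(k)|^2]\leq n(n-1)\tau\max_{k,i\neq j}\expec[|R_{ij}(k)|^2]$. A direct computation from \eqref{eq:model_outlier} gives $\expec[|R_{ij}(k)|^2]=p(k)(1-(1-\eta)^2 p(k))$, which can be split as $\eta p(k)+(1-\eta)^2 p(k)(1-p(k))+\eta(1-\eta)p(k)$ and bounded (up to a universal constant) by $f(\eta,\pmax,V)$.

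For the quadratic form setup, let $X\in\matR^{n(n-1)T}$ stack the real and imaginary parts of $\{R_{ij}(k):i<j,\ k\in[T]\}$; Hermitianity of $R(k)$ with zero diagonal makes the map $F:X\mapsto R$ linear, so $\|\projtaun R\|_F^2=X^\top M X$ with $M:=F_\projtaun^\top F_\projtaun$ and $F_\projtaun:=\projtaun\circ F$. The identity $\|R\|_F^2=2\|X\|_2^2$ gives $\|F\|_{op}=\sqrt 2$, hence $\|M\|_{op}\leq 2$; evaluating $F$ on each canonical basis vector produces a Hermitian matrix with exactly two nonzero entries in a single block, and summing $\|\projtaun F(e_\ell)\|_F^2=2(P_\tau)_{k_\ell,k_\ell}$ yields $\|F_\projtaun\|_F^2=2n(n-1)\tau$, whence $\|M\|_F^2\leq\|M\|_{op}\|F_\projtaun\|_F^2\leq 4n(n-1)\tau$, i.e.\ $\|M\|_F=O(n\sqrt\tau)$. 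For the sub-Gaussian bound, decompose $R_{ij}(k)=g^*_i(k)\conj{g^*_j(k)}\,Y_{ij}(k)+Z_{ij}(k)$ with $Y_{ij}(k)$ a centered Bernoulli of parameter $(1-\eta)p(k)$ and $Z_{ij}(k):=\chi_{ij}(k)\zeta_{ij}(k)e^{\iota\phi_{ij}(k)}$ (bounded by $1$ and nonzero with probability $\eta p(k)$); combining $\|Y_{ij}(k)\|_{\psi_2}\leq\max_k Q(p(k))$ and $\|Z_{ij}(k)\|_{\psi_2}\lesssim\eta+Q(\eta)$ via the triangle inequality yields $\|X_\ell\|_{\psi_2}\lesssim Q$ for every coordinate.

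Plugging these bounds into \cite[Thm.~1.1]{rudelson_vershynin} with deviation level $u=\log(1/\delta)$ gives, with probability at least $1-\delta$,
\[
\bigl|X^\top M X-\expec[X^\top M X]\bigr|\leq CQ^2\bigl(n\sqrt{\tau\log(1/\delta)}+\log(1/\delta)\bigr).
\]
Restricting $\delta\leq C_1<1/e$ so that $\sqrt{\log(1/\delta)}\leq\log(1/\delta)$ collapses the right-hand side to $O(Q^2 n\sqrt\tau\log(1/\delta))$, and combining with the expectation bound from the previous paragraph yields the stated inequality. I expect the main obstacle to be the precise computation of $\|M\|_F$: the Hermitian coupling between the $(i,j,k)$ and $(j,i,k)$ entries means that both columns $i$ and $j$ of the stacked matrix share the same coordinate of $X$, and naive bookkeeping easily loses an extra factor of $\sqrt n$ in the tail (producing, e.g., $n\sqrt{n\tau}$ in place of $n\sqrt\tau$). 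A secondary subtlety is that $\expec[|R_{ij}(k)|^2]\leq Cf(\eta,\pmax,V)$ fails with $C=1$ in some regimes (e.g., $p(k)=\eta=1/2$), so the constant $C_2$ in the statement is needed to absorb the slack.
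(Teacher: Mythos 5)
Your approach is fundamentally the same as the paper's: express $\|\projtaun R\|_F^2$ as a quadratic form in a sub-Gaussian vector, bound its expectation directly, bound the operator and Frobenius norms of the quadratic-form matrix, estimate the per-coordinate $\psi_2$ norms, apply the Hanson--Wright inequality, and absorb $\sqrt{\log(1/\delta)}$ into $\log(1/\delta)$ by restricting $\delta$. Your expectation computation $\expec|R_{ij}(k)|^2=p(k)(1-(1-\eta)^2p(k))$ and its bound by $f$ are correct, and your $\|M\|_F=O(n\sqrt\tau)$ and $\|M\|_{op}\le 2$ calculations are right and in fact avoid the ``extra $\sqrt n$'' pitfall you worry about.

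There is, however, a genuine gap in step (iii). You stack \emph{both} the real and imaginary parts of each $R_{ij}(k)$ ($i<j$) into a single vector $X$ and apply the Hanson--Wright inequality of \cite{rudelson_vershynin} to $X^\top M X$. That theorem requires the coordinates of $X$ to be independent, but $\real(R_{ij}(k))$ and $\imag(R_{ij}(k))$ are strongly dependent — for instance, if the edge $\{i,j\}$ is absent both equal the deterministic value $-(1-\eta)p(k)(\Gshift^*(k))_{ij}$, and if the edge is present and is not an outlier both are deterministic again. Calling $X$ ``nearly-independent'' signals awareness of the problem but does not resolve it. The fix is essentially what the paper does: since $\projtaun=P_\tau\otimes I_n$ is real, $\|\projtaun R\|_F^2=\|\projtaun\real(R)\|_F^2+\|\projtaun\imag(R)\|_F^2$ decouples exactly, the real parts $\{\real(R_{ij}(k)):i<j,k\}$ are jointly independent (similarly for imaginary parts), so Hanson--Wright can be applied to each half separately and the two bounds combined by a union bound. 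The paper additionally splits $R$ into upper- and lower-triangular pieces $R_1,R_2$ before taking real/imaginary parts, which is a cosmetic difference.

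One secondary error: the claim $\|Y_{ij}(k)\|_{\psi_2}=Q((1-\eta)p(k))\le\max_k Q(p(k))$ is false, because $Q(\cdot)$ is not monotone — it peaks at $1/2$, so e.g.\ $p(k)=1$, $\eta=1/2$ gives $Q((1-\eta)p(k))=Q(1/2)=1/\sqrt8>0=Q(1)$. The final conclusion $\|X_\ell\|_{\psi_2}\lesssim Q$ is nonetheless salvageable: writing a Bernoulli$((1-\eta)p)$ as a product of independent Bernoulli$(1-\eta)$ and Bernoulli$(p)$ and centering via $B_1B_2-(1-\eta)p=[B_1-(1-\eta)]B_2+(1-\eta)[B_2-p]$ gives $Q((1-\eta)p)\le Q(1-\eta)+Q(p)=Q(\eta)+Q(p)\le Q$, using $Q(1-\eta)=Q(\eta)$ and boundedness of the factors. (The paper avoids this by decomposing $R_{ij}(k)=[Z_{ij}(k)-(1-\eta)\Gshift^*_{ij}(k)]H_{ij}(k)+(1-\eta)\Gshift^*_{ij}(k)[H_{ij}(k)-p(k)]$ rather than multiplying the two Bernoulli indicators together first.)
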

Using \eqref{eq:bias_bd_outliers} and Lemma \ref{lem:variance_term_outliers}, we arrive at the following theorem which is the second main result of this section. The proof uses similar calculations as for that of Theorem \ref{thm:error_wigner} and is deferred to Appendix \ref{app:proof_thm_outliers}.
\begin{remark}
    The exact expression for $Q(p)$ is known to be \cite{Ostrovsky2014ExactVF}
    \begin{equation*}
        Q(p) = \sqrt{\frac{1-2p}{4 \log(\frac{1-p}{p})}}.
    \end{equation*}
    Moreover, $Q(\cdot)$ is non-negative continuous on $[0,1]$, with $Q(0+0) = Q(1-0) = 0$, and $Q^{2}(1/2) = 1/8$ (see \cite{Ostrovsky2014ExactVF}). It is easy to also verify that $Q(p) \leq \frac{1}{\sqrt{2\log(1/p)}}$ if $p \leq 1/4$.
\end{remark}
%
%
\begin{theorem}[\rev{Main result: $\Gshift^*$ recovery under Outliers model}]\label{thm:error_outliers}
Let $A$ be the stacked measurement matrix under the outliers model \eqref{eq:model_outlier}, with a ground truth $g^*$ satisfying Assumption \ref{assump:smooth}, and the matrix $D$ (c.f. \eqref{eq:model_outlier2}) satisfies Assumption \ref{assump:D_smooth} with parameter $\mu \geq 1$. Then there exist constants $C_1 \in (0,1)$ and $C_2,C_3, C_4, C_5 > 0$ such that for any $\delta \in (0,C_1)$, the following is true (recall the notation of Lemma \ref{lem:variance_term_outliers}). 
\begin{enumerate}
    \item Denote $\tilde{f}(\eta,\pmax,V,Q,\delta) := f(\eta,\pmax,V) + Q^2 \log(1/\delta)$, and choose 
    \begin{equation} \label{eq:opt_tau_outlier}
        \tau = \tau^* := \min\set{1+\floor{\paren{\frac{\mu \bar{d}^2 T^2 S_T}{n \tilde{f}(\eta,\pmax,V,Q,\delta)}}^{1/3}}, T}.
    \end{equation}
    Then with probability at least $1-\delta$, 
    \begin{align} \label{eq:outlier_gen_matden_bd}
        \norm{\est{G} - D\Gshift^*}_F^2 
        &\leq C_2 \max\set{\frac{\mu\bar{d}^2 n T^2 S_T}{1+\paren{\frac{\mu \bar{d}^2 T^2 S_T}{n \tilde{f}(\eta,\pmax,V,Q,\delta)}}^{2/3}}, \mu \bar{d}^2 n S_T} \indic_{\tau^* < T} \nonumber \\
        &+ C_3 \tilde{f}(\eta,\pmax,V,Q,\delta) n^2 \min\set{1+\paren{\frac{\mu \bar{d}^2 T^2 S_T}{n \tilde{f}(\eta,\pmax,V,Q,\delta)}}^{1/3}, T}.
    \end{align}

    \item Furthermore, if $T$ satisfies 
    \begin{equation} \label{eq:T_cond_outliers_simp}
        T \geq C_4 \max\set{\paren{\frac{n \tilde{f}(\eta,\pmax,V,Q,\delta)}{\mu \bar{d}^2 S_T}}^{1/2}, \frac{\mu \bar{d}^2 S_T}{n \tilde{f}(\eta,\pmax,V,Q,\delta)}}
    \end{equation}
    then \eqref{eq:outlier_gen_matden_bd} implies
\begin{align} \label{eq:outlier_simp_matden_bd}
 \norm{\est{G} - D\Gshift^*}_F^2 
  &\leq C_5 \tilde{f}^{2/3}(\eta,\pmax,V,Q,\delta) (\mu \bar{d}^2)^{1/3} n^{5/3} T^{2/3} S_T^{1/3}.
\end{align}
\end{enumerate}
\end{theorem}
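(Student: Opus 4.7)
The starting point is the bias-variance decomposition \eqref{eq:bias_variance_proj_outliers}, which reduces the problem to bounding the bias $\|\projperp(D\Gshift^*)\|_F^2$ and the variance $\|\projtaun R\|_F^2$ separately. For the bias, I would directly invoke \eqref{eq:bias_bd_outliers}, already derived from Assumption \ref{assump:D_smooth} and Lemma \ref{lem:bias_term}, yielding $\lesssim \mu\bar{d}^2 n T^2 S_T/\tau^2 \cdot \indic_{\tau<T}$. For the variance, Lemma \ref{lem:variance_term_outliers} produces a high-probability bound of the form $n^2\tau f(\eta,\pmax,V) + Q^2 n\sqrt{\tau}\log(1/\delta)$; since $n,\tau \geq 1$ implies $n\sqrt{\tau} \leq n^2\tau$, the second piece is absorbed into a term proportional to $n^2\tau \tilde{f}(\eta,\pmax,V,Q,\delta)$. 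Combining yields the master bound
\begin{equation} \label{eq:plan_master_outliers}
\|\est{G} - D\Gshift^*\|_F^2 \lesssim \frac{\mu\bar{d}^2 n T^2 S_T}{\tau^2}\indic_{\tau<T} + n^2\tau\,\tilde{f}(\eta,\pmax,V,Q,\delta),
\end{equation}
holding with probability at least $1-\delta$ for every $\tau \in [T]$.

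The next step is to optimize \eqref{eq:plan_master_outliers} over $\tau$. Ignoring the indicator and the integer constraint, the stationary point is $\tau_0 := (\mu\bar{d}^2 T^2 S_T/(n\tilde{f}))^{1/3}$, leading to the feasible choice $\tau^* := \min\{1+\floor{\tau_0}, T\}$ as in \eqref{eq:opt_tau_outlier}. The analysis splits into two regimes: if $1+\floor{\tau_0} \geq T$, then $\tau^* = T$, the bias term vanishes by the indicator, and only the variance $\lesssim n^2 T\tilde{f}$ survives. Otherwise $\tau^* = 1+\floor{\tau_0} < T$, and elementary floor estimates give $\tau^{*2} \asymp 1+\tau_0^2$ together with $\tau^* \lesssim 1+\tau_0$. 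Plugging these asymptotics into \eqref{eq:plan_master_outliers} and merging the two cases yields the first claim \eqref{eq:outlier_gen_matden_bd}; the alternative $\mu\bar{d}^2 n S_T$ inside the maximum corresponds to the trivial bound $T^2/\tau^{*2} \lesssim 1$ that becomes active when $\tau^*$ is pushed near $T$ (equivalently, when $\tau_0$ is of order $T$), in which case the denominator $1+\tau_0^{2/3}$ nearly saturates.

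For the second claim, the condition \eqref{eq:T_cond_outliers_simp} on $T$ is calibrated so that (i) $\tau_0 \leq T-1$, placing us in the regime where $\tau^* < T$ and the indicator is on, and (ii) $\tau_0 \geq 1$, so that $1+\tau_0^2 \asymp \tau_0^2$ and $1+\tau_0 \asymp \tau_0$ and the residual ``$1+$'' terms can be absorbed. Under (i)-(ii), both the bias $\lesssim \mu\bar{d}^2 n T^2 S_T/\tau_0^2$ and the variance $\lesssim n^2\tau_0\tilde{f}$ evaluate to the same order $(\mu\bar{d}^2)^{1/3}n^{5/3}T^{2/3}S_T^{1/3}\tilde{f}^{2/3}$, which gives \eqref{eq:outlier_simp_matden_bd}.

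The principal technical obstacle is the careful handling of the floor function and the indicator at the boundaries: verifying $\tau^{*2} \asymp 1+\tau_0^2$ cleanly across both regimes $\tau_0 \leq 1$ and $\tau_0 \geq 1$, checking that \eqref{eq:T_cond_outliers_simp} does simultaneously secure $1 \leq \tau_0 \leq T-1$, and tracking constants through the maximum in \eqref{eq:outlier_gen_matden_bd}. None of this is mathematically deep, but it is precisely what produces the somewhat intricate form of the first part of the theorem.
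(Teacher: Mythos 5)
Your proof follows essentially the same route as the paper's Appendix~C: start from the bias--variance decomposition \eqref{eq:bias_variance_proj_outliers}, invoke the bias bound \eqref{eq:bias_bd_outliers} (which already combines Assumption~\ref{assump:D_smooth} with Lemma~\ref{lem:bias_term}) and the variance bound of Lemma~\ref{lem:variance_term_outliers}, absorb the $Q^2 n\sqrt{\tau}\log(1/\delta)$ piece into $n^2\tau\,\tilde{f}$ via $n\sqrt{\tau}\le n^2\tau$, and then optimize the resulting master bound over $\tau\in[T]$ using $\tau^*=\min\{1+\lfloor\tau_0\rfloor,T\}$, with the floor estimates $\tau^{*2}\asymp 1+\tau_0^2$ and $\tau^*\lesssim 1+\tau_0$ when $\tau^*<T$. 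This matches the paper's argument, which explicitly mirrors the steps of Section~\ref{sec:analysis_spiked_wigner} and then simplifies under \eqref{eq:T_cond_outliers_simp}.

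One small slip in your commentary: the $\mu\bar{d}^2 n S_T$ alternative inside the max does \emph{not} come from a ``trivial bound $T^2/\tau^{*2}\lesssim 1$ that becomes active when $\tau_0$ is of order $T$.'' Indeed, whenever the indicator fires we have $\tau^*<T$, which forces $\tau_0<T-1$ and hence $1+\tau_0^2<T^2$, so the first argument of the max $\mu\bar{d}^2 n T^2 S_T/(1+\tau_0^2)$ \emph{always} dominates $\mu\bar{d}^2 n S_T$ in that regime; the second term is merely harmless slack carried along for a convenient simplification in part~2, never a tighter bound. (You also wrote $1+\tau_0^{2/3}$ where you meant $1+\tau_0^2$ once.) Neither point affects the validity of your proof, since deriving the first argument of the max alone already implies the stated inequality.
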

%
%
\begin{remark} \label{rem:outlier_thm_perf_smooth}
If $S_T = 0$, then $\tau^* = 1$ and \eqref{eq:outlier_gen_matden_bd} simplifies to $\norm{\est{G} - D\Gshift^*}_F^2 \lesssim \tilde{f}(\eta,\pmax,V,Q,\delta) n^2$. Hence $\frac{1}{\sqrt{T}}\norm{\est{G} - D\Gshift^*}_F = O(1/\sqrt{T})$ which is the parametric rate. On the other hand, if $\eta = 0$ and $p(k) = 1$ for all $k$, then note that $Q = V = \tilde{f}(\eta,\pmax,V,Q,\delta) = f(\eta,\pmax,V) = 0$. This implies $\tau^* = T$ and $\norm{\est{G} - D\Gshift^*}_F = 0$.
\end{remark}

\begin{remark} \label{rem:outlier_thm_int_case}
Suppose $\eta$ is constant and let $\pmax \asymp 1/n$. This means that a given measurement graph $\calH_k$ is disconnected with high probability. Moreover, suppose $\bar{d} \asymp 1/n$, and $\mu \asymp 1$ (e.g., this will happen if $p(k) = p \asymp 1/n$ for all $k$). Then we have $Q \asymp 1$, $V \asymp 1/n$ which implies $f(\eta,\pmax,V) \asymp 1/n$  and $\tilde{f}(\eta,\pmax,V,Q,\delta) \asymp \log(1/\delta)$ (assuming $\delta$ is a constant for simplicity). Consequently, the condition in \eqref{eq:T_cond_outliers_simp} simplifies to 
\begin{equation} \label{eq:T_cond_outlier_simp_ex1}
        T \gtrsim \max \set{\paren{\frac{n^3 \log(1/\delta)}{S_T}}^{1/2}, \paren{\frac{S_T}{n^3 \log(1/\delta)}}}
    \end{equation}
    and \eqref{eq:outlier_simp_matden_bd} translates to 
    $\norm{\est{G} - D\Gshift^*}_F^2 \lesssim  n T^{2/3} S_T^{1/3} \log^{2/3}(1/\delta)$. Thus for the smoothness regime $S_T = \omega(1/T^2)$ and $S_T = o(T)$, we see that  \eqref{eq:T_cond_outlier_simp_ex1} will be satisfied for $T$ suitably large w.r.t $n$.
\end{remark}
%
%
%
\subsection{Spectral local synchronization guarantee}\label{sec:recovery}
We now proceed to establish guarantees for the local synchronization phase within Algorithm \ref{algo:mat_denoise_proj}. While Algorithm \ref{algo:mat_denoise_proj} employs a TRS-based methodology at this stage, it is easier to analyze the spectral method, as delineated in Remark \ref{rmk:spectral_local_sync}. \rev{This is outlined as a separate method in Algorithm \ref{algo:mat_denoise_proj_locspec} for clarity. Our main results for recovering $g^*$ are Corollaries \ref{corr:error_wigner_spec} and \ref{cor:error_outliers_spec} for the AGN and Outliers models respectively.}

%
%

\begin{algorithm}
    \caption{\rev{Global Matrix Denoising $+$ Local Spectral method for Dynamic Synchronization (\matdenoisingLocSpec)}}\label{algo:mat_denoise_proj_locspec}
    \begin{algorithmic}[1]
    \State \rev{{\bf Input:} Matrix $A\in\matC^{nT\times n}$ of observations, parameter $\tau\in\matN$.}
    
    \State \rev{Define $\est G=\calP_{\tau,n} A$, where $\calP_{\tau,n}\in \matC^{nT\times nT}$ is the projection onto $\calL_{\tau,n}$.}
     
     \For{\rev{$k=1\dots ,T$}}
      \State \rev{Hermitianize $\est{G}'(k) := (\est G(k)+\conj{\est G(k)})/2$.}
     
        \State \rev{Find $\est{g}'(k)\leftarrow$ the eigenvector associated to the largest eigenvalue of $\est{G}'(k)$.}
    
    \State \rev{Find $\est{g}(k)= \calP_{\calC_{n}}(\est{g}'(k))e^{-\iota \teonek}$, where $\teonek$ is defined by $\est{g}'_1(k)=|\est{g}'_1(k)|e^{\iota \teonek}$  (with $\teonek = 0$ if $\est{g}'_1(k) = 0$)}.
    \EndFor
     
    \State \rev{{\bf Output:} Estimator $\est g=\concat\big(\est g(1),\cdots,\est g(T)\big)\in \calC_{nT}$}.
    \end{algorithmic}
\end{algorithm}

Our strategy involves utilizing the Davis-Kahan inequality ``locally'', focusing on specific time instances where the error converges to zero. 
To begin with, denote for each $k$ a scaling $s(k) \geq 0$ where $s(k) = 1$ for the AGN model, and $s(k) = d(k) = (1-\eta)p(k)$ for the outliers model (recall Assumption \ref{assump:D_smooth}). Then note that since $\Gshift^*(k)$ is symmetric, \rev{we have for the Hermitian matrix $\est{G}'(k)$ defined in line $4$ of Algorithm \ref{algo:mat_denoise_proj_locspec} that}
\begin{equation} \label{eq:symmetr_norm_ineq}
\|\est{G}'(k)-s(k)\Gshift^*(k)\|_F \leq \|\est{G}(k)-s(k)\Gshift^*(k) \|_F.
\end{equation}
The next step is to use the Davis-Kahan theorem \cite{daviskahan} to relate the largest eigenvector of $\est{G}'(k)$ to $g^*(k)/\sqrt{n}$, for each $k$. Since $s(k) \Gshift^*(k)= s(k) (g^*(k)\conj{g^*(k)} - I_n)$, the spectral gap between the largest and second largest eigenvalues of $s(k) \Gshift^*(k)$ is $s(k) n$. 
%
\rev{Recall $\est{g}'(k)$ as defined in line $5$ of Algorithm \ref{algo:mat_denoise_proj_locspec}.} Then, by the Davis-Kahan theorem \cite{daviskahan}, we have for each $k$ that
\begin{align} \label{eq:dk_thm_classic}
 \left\|\Big(I - \frac{g^*(k) g^*(k)^{H}}{n} \Big) \est{g}'(k)\right\|_2 
 \leq \frac{4\|\est{G}'(k)-s(k)\Gshift^*(k)\|_{op}}{s(k) n}.  
\end{align}
A standard calculation reveals (see e.g., \cite[Appendix D.1]{SVDRank}) that there exists $\phik\in[0,2\pi)$ (depending on $g^*(k), \est{g}'(k)$) such that 
\begin{align*}
    \left\|\est{g}'(k)-e^{\iota \phik}\frac{g^*(k)}{\sqrt n}\right\|_2 
    &\leq 2 \left\|\Big(I - \frac{g^*(k) (g^*(k))^{H}}{n} \Big) \est{g}'(k)\right\|_2 
\end{align*}
which together with \eqref{eq:dk_thm_classic} implies 
\begin{equation}
     \left\|\est{g}'(k)-e^{\iota \phik}\frac{g^*(k)}{\sqrt n}\right\|_2  
     \leq 8\frac{\|\est{G}'(k)-s(k)\Gshift^*(k)\|_{op}}{s(k) n}  
    \leq 8\frac{\|\est{G}'(k)-s(k)\Gshift^*(k)\|_{F}}{s(k) n}. \label{eq:block_daviskahan} 
\end{equation}
Using \eqref{eq:block_daviskahan}, \eqref{eq:symmetr_norm_ineq}, we arrive at the following result.
\begin{proposition}\label{prop:recovery_spectral}
   Let $g^*$ be a ground truth vector satisfying Assumption \ref{assump:anchoring} as defined in \eqref{eq:ground_truth_definition}, \rev{and let $\est{g} = \concat\big(\est g(1),\cdots,\est g(T)\big)\in \calC_{nT}$ be obtained from Algorithm \ref{algo:mat_denoise_proj_locspec}.} 
    Then there exists  a constant $C > 0$ such that  
      \begin{equation*}
        \|\est g(k) - g^*(k)\|_2 
        \leq C \frac{\|\est{G}(k)-s(k)\Gshift^*(k)\|_{F}}{s(k)}.
       \end{equation*}
Denoting $\smin := \min_{k} s(k)$, it then follows that 
      \begin{equation*}
          \norm{\est{g} - g^*}_2^2 =  \sum_{k=1}^T \norm{\est{g}(k) - g^*(k)}_2^2
          \leq C^2 \frac{1}{\smin^2} \sum_{k=1}^T \|\est{G}(k)-s(k)\Gshift^*(k)\|_{F}^2.
      \end{equation*}
\end{proposition}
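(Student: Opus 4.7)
The display \eqref{eq:block_daviskahan} already delivers the main ingredient, namely
\[
\Big\|\est{g}'(k)-e^{\iota \phik}\frac{g^*(k)}{\sqrt n}\Big\|_2 \leq \frac{8}{s(k)\, n}\,\|\est{G}'(k)-s(k)\Gshift^*(k)\|_{F}.
\]
So the plan is to propagate this eigenvector bound through the two post-processing steps in lines~$5$--$6$ of Algorithm~\ref{algo:mat_denoise_proj_locspec} -- the entrywise projection $\calP_{\calC_n}$ and the anchoring rotation by $e^{-\iota \teonek}$ -- to obtain a bound directly on $\|\est{g}(k)-g^*(k)\|_2$. First I would rescale the Davis-Kahan estimate by $\sqrt n$ so that we compare two vectors whose nonzero entries have modulus on the order of $1$. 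Since $\calP_{\calC_n}$ is scale-invariant (only the phase of each entry matters), this rescaling does not change the projected vector.

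Next I would invoke the elementary unit-circle projection stability inequality: for $w\in\matC$ and $z\in\matC$ with $|z|=1$,
\[
\Big|\tfrac{w}{|w|}-z\Big| \leq 2|w-z|,
\]
with the convention $w/|w|=1$ when $w=0$ (in that boundary case the inequality still holds because both sides are at most $2$). Applying this coordinatewise to $w = \sqrt n\,\est{g}'(k)$ and $z_i = e^{\iota \phik} g^*_i(k)$ yields
\[
\|\calP_{\calC_n}(\est{g}'(k)) - e^{\iota \phik} g^*(k)\|_2 \leq \frac{16}{\sqrt n \, s(k)}\,\|\est{G}'(k)-s(k)\Gshift^*(k)\|_{F}.
\]
Multiplying the whole equality by $e^{-\iota \teonek}$ and setting $\alpha_k := \phik - \teonek$, this becomes $\|\est{g}(k)-e^{\iota \alpha_k} g^*(k)\|_2$ on the left-hand side.

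The anchoring step is what I expect to be the main (very mild) obstacle: I need to control $\alpha_k$, which a priori is an arbitrary global phase, and then convert the ``up-to-phase'' bound into an actual bound on $\|\est{g}(k)-g^*(k)\|_2$. Here I exploit Assumption~\ref{assump:anchoring}: since $g^*_1(k)=1$ and, by construction of $\teonek$, $\est{g}_1(k)=1$ as well, reading off the first coordinate of the previous inequality gives
\[
|1-e^{\iota \alpha_k}|
\leq \|\est{g}(k)-e^{\iota \alpha_k} g^*(k)\|_2,
\]
so that
\[
\|e^{\iota \alpha_k} g^*(k)-g^*(k)\|_2 = |1-e^{\iota \alpha_k}|\sqrt n \leq \sqrt n\,\|\est{g}(k)-e^{\iota \alpha_k} g^*(k)\|_2.
\]
Combining the last two displays with a triangle inequality and the symmetrization inequality \eqref{eq:symmetr_norm_ineq} gives the per-$k$ bound
\[
\|\est{g}(k)-g^*(k)\|_2 \leq \Big(1+\tfrac{1}{\sqrt n}\Big)\cdot \frac{16}{s(k)}\,\|\est{G}(k)-s(k)\Gshift^*(k)\|_{F},
\]
which is the stated inequality with $C=32$.

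Finally, the second (stacked) bound follows by squaring, summing over $k\in[T]$, and using $s(k)\geq \smin$ in each denominator. No concentration or additional probabilistic argument is needed at this stage -- this proposition is deterministic and purely algebraic, given the Davis-Kahan step already carried out in the excerpt.
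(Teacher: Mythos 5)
Your proof is correct and follows essentially the same route as the paper's: both combine the Davis--Kahan bound \eqref{eq:block_daviskahan}, the unit-modulus projection-stability inequality of \cite{Liu2017} (Proposition~\ref{prop:entry_proj_ineq}), and the anchoring $g_1^*(k)=\est{g}_1(k)=1$ to fix the global phase via the first coordinate, finishing with a triangle inequality that costs a factor of $\sqrt n$. The only difference is a harmless reordering — you apply Davis--Kahan, then the projection lemma once, and then absorb the residual phase through $\|e^{\iota\alpha_k}g^*(k)-g^*(k)\|_2$, whereas the paper projects first and then splits off the phase-discrepancy term, invoking the projection lemma a second time (coordinatewise) to control $|e^{\iota\teonek}-e^{\iota\phik}|$ — and the resulting constants agree.
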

The proof is deferred to Appendix \ref{proof:prop_recovery}. 
Combining Proposition \ref{prop:recovery_spectral} with Theorems \ref{thm:error_wigner} and \ref{thm:error_outliers} directly leads to error guarantees for Algorithm \ref{algo:mat_denoise_proj}, with spectral local synchronization, for the AGN and Outliers models. Recall that for the AGN model, $\smin = 1$, while in the outliers model, $\smin = (1-\eta)\min_k p(k)$.
%
%
\begin{corollary}[\rev{Main result: $g^*$ recovery under AGN model}]\label{corr:error_wigner_spec}
    Let $A$ be the stacked measurement matrix for the AGN model in \eqref{eq:model_Wigner}, with a ground truth $g^*$ satisfying Assumption \ref{assump:smooth}. Choosing $\tau = \tau^*$ with $\tau^*$ as in \eqref{eq:opt_tau_agn}, consider the estimator $\est g = \concat\left(\est g(1),\ldots, \est g(T)\right)$ \rev{obtained in Algorithm \ref{algo:mat_denoise_proj_locspec}}. There exist  constants $C_1, C_2, C_3, C_4 >0$ such that for any $\delta\in (0,1)$ the following is true. 
    \begin{align}  \label{eq:error_bound_spec_Wigner}
        \norm{\est{g} - g^*}_2^2
        &\leq  C\Big(\max\set{\frac{nT^2 S_T}{\paren{1 + (\frac{T^2 S_T}{n \sigma^2}\Big)^{2/3}}}, n S_T} \indic_{\set{\tau^* < T}}   
        + \sigma^2 n^2 \min\set{1 + \Big(\frac{T^2 S_T}{n \sigma^2}\Big)^{1/3}, T}  
        \nonumber \\
        &+ \sigma^2 \log\paren{\frac4\delta} \Big). 
    \end{align}
    Furthermore, there exist constants $C_1, C_2 >0$ such that if  
    \begin{equation*}  
        T \geq C_1 \max\set{\paren{\frac{n\sigma^2}{S_T}}^{1/2}, \paren{\frac{S_T}{n\sigma^2}}, \sigma^4 S_T n^5}
    \end{equation*}
    then \eqref{eq:error_bound_spec_Wigner} implies
    \begin{equation*}  
      \norm{\est{g} - g^*}_2^2
        \leq  C_2 \paren{\sigma^{4/3} T^{2/3} S_T^{1/3} n^{5/3} + \sigma^2 \log\paren{\frac{4}{\delta}}}.
    \end{equation*}
\end{corollary}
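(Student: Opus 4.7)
The plan is to obtain Corollary \ref{corr:error_wigner_spec} as a direct consequence of Proposition \ref{prop:recovery_spectral} combined with Theorem \ref{thm:error_wigner}, exploiting the fact that under the AGN model the per-block scalings are trivial. Specifically, since the AGN observation model \eqref{eq:model_Wigner} corresponds to $s(k) = 1$ for every $k \in [T]$, we have $\smin = 1$, so the second bound of Proposition \ref{prop:recovery_spectral} reduces to
\[
\norm{\est g - g^*}_2^2 \;\leq\; C^2 \sum_{k=1}^T \|\est G(k) - \Gshift^*(k)\|_F^2 \;=\; C^2\,\|\est G - \Gshift^*\|_F^2,
\]
where in the last step we used that the sum of squared block Frobenius norms equals the global squared Frobenius norm under the column-stacking convention \eqref{eq:concat_mats}.

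Next, I would choose $\tau = \tau^*$ as in \eqref{eq:opt_tau_agn} and invoke the first conclusion of Theorem \ref{thm:error_wigner}, which provides, with probability at least $1-\delta$, precisely the right-hand side appearing in \eqref{eq:error_bound_spec_Wigner} (up to absorbing the multiplicative factor $C^2$ into the constant $C$). This directly yields \eqref{eq:error_bound_spec_Wigner}.

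For the second, simplified bound, I would again appeal to the second part of Theorem \ref{thm:error_wigner}: under the assumption \eqref{eq:T_cond_wigner} on $T$, the intermediate bound \eqref{eq:error_bound_Wigner} collapses to \eqref{eq:error_bound_wigner_simp}. Composing with the reduction above gives the simplified form of \eqref{eq:error_bound_spec_Wigner}, with the constants $C_1, C_2$ in the corollary's statement inheriting (up to a multiplicative factor) the constants from Theorem \ref{thm:error_wigner}.

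There is no substantive obstacle here: all the heavy lifting has already been done, namely (i) the bias--variance decomposition \eqref{eq:bias_variance_proj}, the bias bound of Lemma \ref{lem:bias_term}, and the Hanson--Wright variance bound of Lemma \ref{lem:variance_term} (all combined to yield Theorem \ref{thm:error_wigner}), and (ii) the per-block Davis--Kahan argument culminating in Proposition \ref{prop:recovery_spectral}. The only things to verify carefully are the identification $s(k) = 1$ in the AGN case and the telescoping of the per-block Frobenius norms into the global one; both are immediate from the definitions in Section \ref{subsec:smooth_estimators} and Section \ref{sec:analysis_spiked_wigner}.
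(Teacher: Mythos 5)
Your proposal is correct and matches the paper's own argument exactly: the paper derives Corollary \ref{corr:error_wigner_spec} by combining Proposition \ref{prop:recovery_spectral} (with $\smin = 1$ for the AGN model) with Theorem \ref{thm:error_wigner}, which is precisely your reduction. The only point worth being careful about — and which you did note — is the identification $\sum_{k=1}^T \|\est G(k) - \Gshift^*(k)\|_F^2 = \|\est G - \Gshift^*\|_F^2$ under the column-stacking convention, which is immediate.
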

\begin{remark} \label{rem:wigner_spec_rec}
The following points are useful to note regarding Corollary \ref{corr:error_wigner_spec}.
\begin{enumerate}
\item  Following Remark \ref{rem:wigner_thm_perf_noise_smooth}, when $\sigma = 0$ then we obtain $\tau^* = T$ and $\|\est{g} - g^*\|_2 = 0$. 
 If $S_T = 0$, we obtain $\tau^* = 1$ and $$\norm{\est{g} - g^*}_2^2 \lesssim  \paren{\sigma^2 n^2 + \sigma^2 \log(4/\delta)}.$$

\item Following Remark \ref{rem:wigner_thm_int_case}, suppose $\sigma \asymp T^{\alpha}$ for some $\alpha \in [0,1/4)$. Then provided $S_T$ satisfies \eqref{eq:wig_rem_int_smooth_conds}, part 2 of Corollary \ref{corr:error_wigner_spec} implies that 
 \begin{align*}
     \frac{1}{T} \|\est{g} - g^*\|_2^2 \stackrel{T \rightarrow \infty}{\longrightarrow} 0.
 \end{align*}
In particular, when $\sigma > 0$ is a constant, then $\|\est{g} - g^*\|_2^2 = O(T^{2/3} S_T^{1/3})$ which matches the \rev{(optimal)} rate for denoising smooth signals on a path graph \rev{on $T$ vertices} \cite[Theorems 5,6]{SadhanalaTV16}; \rev{recall the discussion in Remark \ref{rem:connec_den_smooth_sig_graph}}.
\end{enumerate}
\end{remark}

%
\begin{corollary}[\rev{Main result: $g^*$ recovery under Outliers model}]\label{cor:error_outliers_spec}
Let $A$ be the stacked measurement matrix under the outliers model \eqref{eq:model_outlier}, with a ground truth $g^*$ satisfying Assumption \ref{assump:smooth}, and the matrix $D$ (c.f. \eqref{eq:model_outlier2}) satisfies Assumption \ref{assump:D_smooth} with parameter $\mu \geq 1$. 
For $\tau = \tau^*$ as in \eqref{eq:opt_tau_outlier}, consider the estimator $\est g = \concat\left(\est g(1),\ldots, \est g(T)\right)$ \rev{obtained in Algorithm \ref{algo:mat_denoise_proj_locspec}} and recall the notation of Theorem \ref{thm:error_outliers}. 
There exist constants $C_1 \in (0,1)$ and $C_2, C_3, C_4 > 0$ such that for any $\delta \in (0,C_1)$, the following is true.
\begin{enumerate}
    \item  Denoting $\pmin := \min_k p(k)$ it holds with probability at least $1-\delta$,   
    \begin{align}  
     \|\est{g} - g^*\|_2^2
        &\leq \frac{C_2}{(1-\eta)^2 \pmin^2} \Bigg( \max\set{\frac{\mu\bar{d}^2 n T^2 S_T}{1+\paren{\frac{\mu \bar{d}^2 T^2 S_T}{n \tilde{f}(\eta,\pmax,V,Q,\delta)}}^{2/3}}, \mu \bar{d}^2 n S_T} \indic_{\tau^* < T} \nonumber \\
        &+ \tilde{f}(\eta,\pmax,V,Q,\delta) n^2 \min\set{1+\paren{\frac{\mu \bar{d}^2 T^2 S_T}{n \tilde{f}(\eta,\pmax,V,Q,\delta)}}^{1/3}, T} \Bigg). \label{eq:error_bound_spec_outlier}
    \end{align}

    \item Furthermore, if $T$ satisfies \eqref{eq:T_cond_outliers_simp} (with a constant $C_3 > 0$), then \eqref{eq:error_bound_spec_outlier} implies that 
\begin{align*}  
 \norm{\est{g} - g^*}_2^2 
        \leq  
   C_4 \left(\frac{\tilde{f}^{2/3}(\eta,\pmax,V,Q,\delta)}{(1-\eta)^2 \pmin^2}\right) (\mu \bar{d}^2)^{1/3} n^{5/3} T^{2/3} S_T^{1/3}.
\end{align*}
\end{enumerate}
\end{corollary}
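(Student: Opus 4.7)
The plan is to combine Proposition \ref{prop:recovery_spectral} with Theorem \ref{thm:error_outliers} in a direct manner, as the hard analytical work (bounding the matrix denoising error and controlling bias/variance under the Outliers model) has already been carried out.

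First, I would specialize Proposition \ref{prop:recovery_spectral} to the Outliers model. In this case the scaling is $s(k) = d(k) = (1-\eta)p(k)$, so $\smin = (1-\eta)\pmin$. The key observation is that stacking the blocks gives
\[
\sum_{k=1}^T \|\est{G}(k) - s(k)\Gshift^*(k)\|_F^2 = \|\est{G} - D\Gshift^*\|_F^2,
\]
since $D\Gshift^*$ is precisely the block-stacked matrix whose $k$-th block equals $d(k)\Gshift^*(k)$. Thus Proposition \ref{prop:recovery_spectral} yields
\[
\|\est{g} - g^*\|_2^2 \;\leq\; \frac{C^2}{(1-\eta)^2 \pmin^2}\,\|\est{G} - D\Gshift^*\|_F^2.
\]

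Next, I would invoke Theorem \ref{thm:error_outliers} with the same choice $\tau = \tau^*$ from \eqref{eq:opt_tau_outlier}, on the event of probability at least $1-\delta$ on which the bound \eqref{eq:outlier_gen_matden_bd} holds. Substituting that bound into the inequality above and absorbing the factor $C^2$ into the new constant $C_2$ directly yields \eqref{eq:error_bound_spec_outlier}. This establishes part (1) of the corollary.

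For part (2), I would simply note that under the additional condition \eqref{eq:T_cond_outliers_simp} on $T$, Theorem \ref{thm:error_outliers} already simplifies the matrix denoising error bound to \eqref{eq:outlier_simp_matden_bd}; dividing that bound by $(1-\eta)^2 \pmin^2$ gives the claimed simplified bound with a suitable constant $C_4$. No step here is a substantive obstacle — the proof is essentially a bookkeeping exercise. The only minor care needed is in tracking that (i) the choice of $\tau^*$ in \eqref{eq:opt_tau_outlier} is used consistently on both sides of the argument, (ii) the constants $C_1, C_3$ required for the probability and the threshold on $T$ match those in Theorem \ref{thm:error_outliers}, and (iii) the Hermitianization step in line 4 of Algorithm \ref{algo:mat_denoise_proj_locspec} is compatible with Proposition \ref{prop:recovery_spectral}, which was stated precisely with this step in mind via \eqref{eq:symmetr_norm_ineq}.
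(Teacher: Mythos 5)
Your proposal is correct and follows exactly the route the paper itself indicates: the paper states in Section \ref{sec:recovery} that Corollary \ref{cor:error_outliers_spec} follows by ``combining Proposition \ref{prop:recovery_spectral} with \dots\ Theorem~\ref{thm:error_outliers} directly,'' with $\smin = (1-\eta)\min_k p(k)$ in the Outliers model, which is precisely the substitution and stacking identity you carry out.
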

%
\begin{remark}  \label{rem:outlier_cor_spec}
Let us note the following points regarding Corollary \ref{cor:error_outliers_spec}.
\begin{enumerate}
\item Following Remark \ref{rem:outlier_thm_perf_smooth}, if $S_T = 0$, then $\tau^* = 1$ and we obtain 
$$\norm{\est{g} - g^*}_2^2  \lesssim \frac{\tilde{f}(\eta,\pmax,V,Q,\delta)}{(1-\eta)^2 \pmin^2} n^2.$$ 

On the other hand, if $\eta = 0$ and $p(k) = 1$ for all $k$, then $\tau^* = T$ and $\norm{\est{g} - g^*}_2 = 0$.
 
\item Consider the setup of Remark \ref{rem:outlier_thm_int_case} so that $\pmin \asymp 1/n$. Then if $T$ satisfies \eqref{eq:T_cond_outlier_simp_ex1} (which is the case for $T$ large enough when  $S_T = \omega(1/T^2)$ and $S_T = o(T)$ holds),  
 we obtain the bound    
$$\norm{\est{g} - g^*}_2^2 \lesssim  n^3 T^{2/3} S_T^{1/3} \log^{2/3}(1/\delta).$$ 
\end{enumerate}
\rev{The bound exhibits the same scaling\footnote{\rev{Note that the rates in \cite[Theorems 5,6]{SadhanalaTV16} are obtained for additive Gaussian noise. We believe the dependency on $T$ should be unchanged for more general sub-Gaussian noise models, however this needs to be established formally.}} w.r.t $T, S_T$ as in the AGN model; recall Remark \ref{rem:wigner_spec_rec}.}
\end{remark}

%
%
\section{Experiments}\label{sec:experiments}
We now provide some empirical results\footnote{The code is available at \url{https://github.com/ErnestoArayaV/dynamic_synch}.} on synthetic data generated by the AGN and Outliers models. Apart from Algorithms {\globaltrs}, {\localtrs} and {\matdenoising},  we also evaluate the block-wise spectral approach described in Remark \ref{rmk:spectral_local_sync}. 

\paragraph{Experiment setup.} The following remarks are in order. 
\begin{enumerate}
%
\item For specified values of $n, T$ and smoothness $S_T$, we generate a smooth signal $g^* \in \calC_{nT}$, with $g^*_1(k) = 1$ for each block $k$, as follows. Define $\tau' := \min \set{\lfloor 1 + \sqrt{T S_T} \rfloor, T}$, define $\calP_{\tau',n-1}$ as in \eqref{eq:proj_smooth_op}. Generate a Gaussian $u \sim \calN(0, I_{(n-1)T})$ and compute 
$$u' = \sqrt{T} \frac{\calP_{\tau',n-1} u}{\norm{\calP_{\tau',n-1} u}_2}.$$
Then denoting $\tilde{g}^* := \exp(\iota (0.5 \pi) u')$ which is computed entry-wise, we form $g^*$ by prepending each block of $\tilde{g}^*$ with $1$. It is a simple exercise to verify that $g^*$ (almost surely) satisfies the smoothness condition in Assumption \ref{assump:smooth} for the specified $S_T$, up to a constant.

\item For an estimate $\est{g} \in \calC_{nT}$ of the ground truth truth $g^* \in \calC_{nT}$, we will measure the Root Mean Square Error (RMSE) defined as
$\frac{1}{\sqrt{nT}} \norm{\est{g} - g^*}_2$. We remark that an additional $\sqrt{n}$ factor appears here as compared to the quantity being bounded in the analyses earlier.

\item In order to choose the appropriate value of the regularization parameters for the algorithms ($\lambda$ for \globaltrs and $\tau$ for \localtrs, \matdenoising), we consider a data-driven grid-search based heuristic. In particular, define a discretization of $[0,T]$, where (i) the first $\lfloor \sqrt{T} \rfloor + 1$ elements of the grid are the numbers $0,1,\dots,\lfloor \sqrt{T} \rfloor$; (ii) the next five elements are equi-spaced numbers in the interval $[\lfloor \sqrt{T} \rfloor, \lfloor T^{2/3} \rfloor]$; (iii) the last five elements are equi-spaced numbers in the interval $[\lfloor T^{2/3} \rfloor, T]$. For each value on the grid, call it $\beta$, we set $\tau = \min\set{\beta+1, T}$ and $\lambda = \beta \times \lambda_{\text{scale}}$ for a fixed choice of the scaling parameter $\lambda_{\text{scale}} > 0$. For the corresponding output $\est{g}$ of each algorithm, we compute a `data-fidelity' value using the input pairwise data, as for the data-fidelity term in \eqref{eq:opt_denoising_vector}  (with $\tilde{g}$ therein replaced by $\est{g}$). Then we select the optimal $\beta$ (thus optimal $\tau$ and $\lambda$) for each algorithm as follows.
\begin{enumerate}
\item For \globaltrs,  the optimal $\beta$ is the value that leads to the largest data-fidelity.

\item For \localtrs and \matdenoising, the optimal $\beta$ is chosen to be the value that leads to the maximum change in the slope of the data-fidelity (w.r.t $\beta$). 
\end{enumerate}
\end{enumerate}
The above data-driven rule for selecting the optimal $\beta$ is motivated by empirical evidence as shown in Figs. \ref{fig:plots_1_outlier_t20_singrun} and \ref{fig:plots_1_wigner_t20_singrun}. Notice that the data-fidelity for \localtrs \ and {\matdenoising}  exhibits a sharp change in the slope at $\beta = 1$ (which is where the RMSE is minimized for these methods). But it is not clear why the data-fidelity keeps increasing beyond $\beta = 1$, this needs further investigation. For \globaltrs, notice that the data fidelity peaks at the RMSE-minimizer in the outliers model (see Fig. \ref{fig:plots_1_outlier_t20_singrun}), while for the Wigner model, it peaks at a slightly sub-optimal $\beta$ (see Fig. \ref{fig:plots_1_wigner_t20_singrun}).

\begin{figure}[!htp]
\centering
  \includegraphics[scale = 0.15]{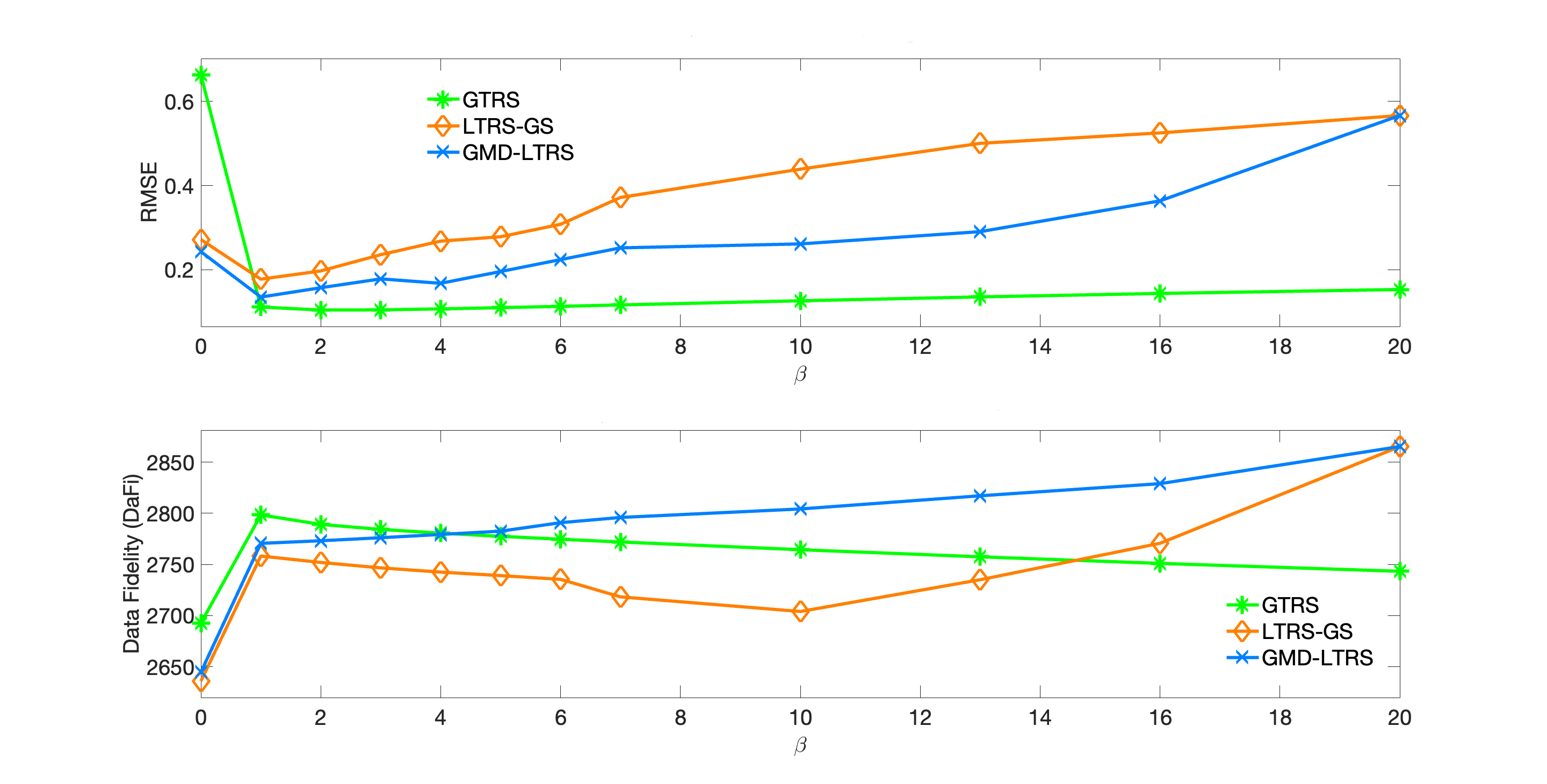}
  \caption{RMSE (top) and Data Fidelity (bottom) for the Outliers model for a single MC run ($n = 30, T = 20, S_T = 1/T$, $\eta = p = 0.2$). Notice a sharp change in slope of the data-fidelity for \localtrs \ and \matdenoising \ at $\beta = 1$, which is also where the RMSE is minimized for these two methods. For \globaltrs \ (with $\lambda_{\text{scale}} = 10$) the data-fidelity peaks at $\beta = 1$ which is where the RMSE is minimized.}
  \label{fig:plots_1_outlier_t20_singrun}
\end{figure}

\begin{figure}[!ht]
\centering
  \includegraphics[scale = 0.15]{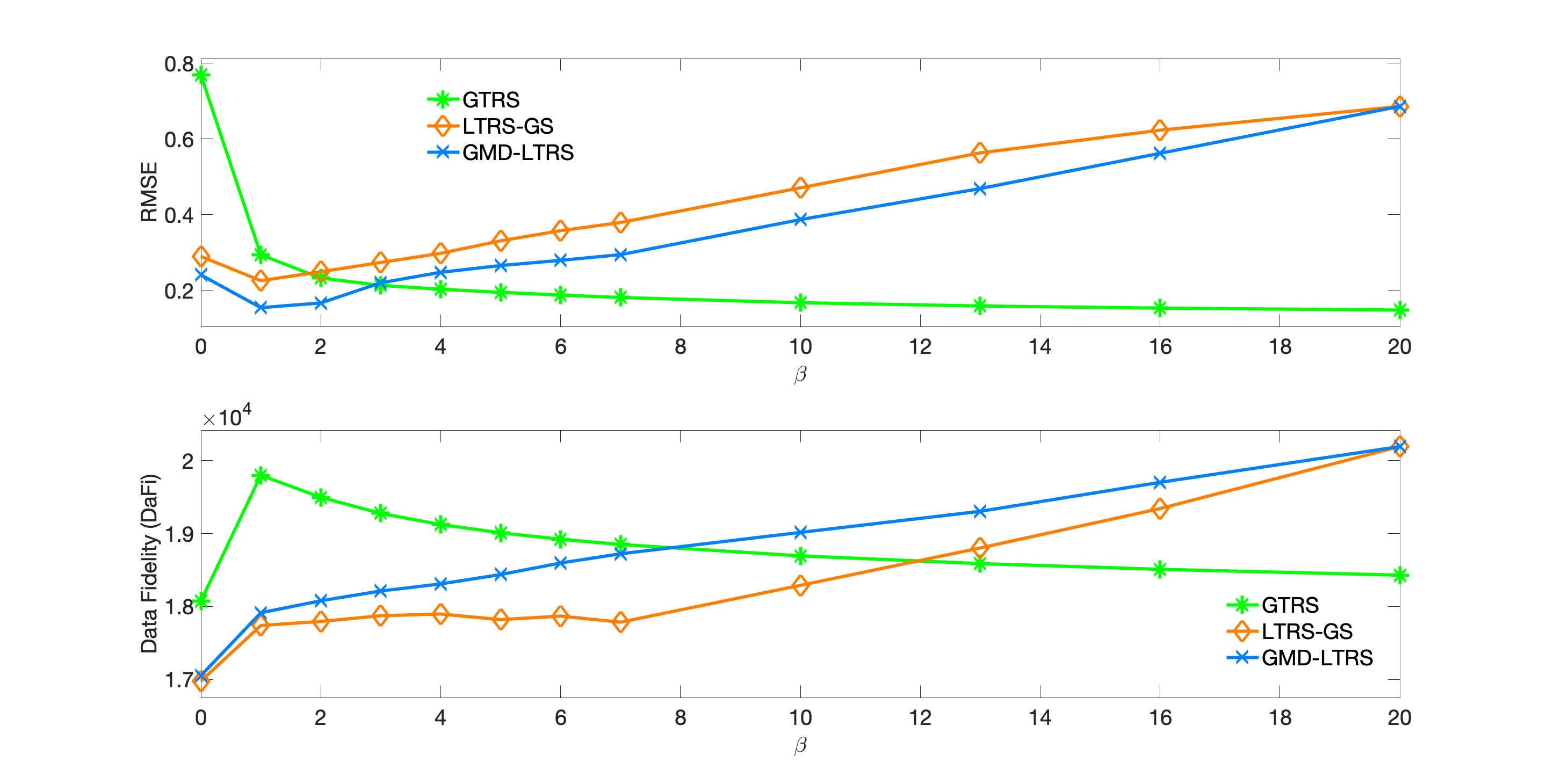}
  \caption{RMSE (top) and Data Fidelity (bottom) for the AGN model for a single MC run ($n = 30, T = 20, S_T = 1/T$, $\sigma = 3$). Notice a sharp change in slope of the data-fidelity for \localtrs \ and \matdenoising \ at $\beta = 1$, which is also where the RMSE is minimized for these two methods. For \globaltrs \ (with $\lambda_{\text{scale}} = 10$) the data-fidelity peaks at $\beta = 1$, but the RMSE is minimized for $\beta = 20$.}
  \label{fig:plots_1_wigner_t20_singrun}
\end{figure}

%
%
\paragraph{Experiment 1: RMSE versus $T$.} In this experiment, we evaluate the RMSE versus $T$ for all the above methods. Throughout, we fix $n = 30$, and vary $T$ from $10$ to $100$. We perform $20$ Monte Carlo (MC) runs for each $T$. In a single MC run, we randomly generate the ground-truth for a specified smoothness $S_T$ (as described earlier), then generate the noisy pairwise data (as per specified noise model), and then compute the RMSE for each algorithm for the regularization parameters selected by the aforementioned data-fidelity rule. The RMSE values are then averaged over the $20$ runs. The results are shown in Figs. \ref{fig:plots_123_rmse_T_wigner_sigma_3_20runs} for the Wigner model, and Fig. \ref{fig:plots_123_rmse_T_outlier_eta_0p2_p_0p2_20runs} for the Outliers model, for different smoothness conditions $S_T$. As a sanity check, note that the RMSE of the spectral method remains high for all $T$. We can make some additional observations.
\begin{enumerate}
\item For the AGN model (Fig. \ref{fig:plots_123_rmse_T_wigner_sigma_3_20runs}), we see that \matdenoising \ performs the best out of the three methods, while the RMSE for \globaltrs \ does not shrink to zero with $T$. The performance of \localtrs \  is seen to improve with $T$, albeit with a larger error than \matdenoising.

\item For the Outliers model (Fig. \ref{fig:plots_123_rmse_T_outlier_eta_0p2_p_0p2_20runs}), the RMSE for all three methods decreases with $T$. Interestingly, \globaltrs \ typically performs the best out of the three methods (followed by \matdenoising  \ and then \localtrs). 
\end{enumerate}

 \begin{figure}[!htp]
\centering
\begin{subfigure}{.5\textwidth}
  \centering
  \includegraphics[width=\linewidth]{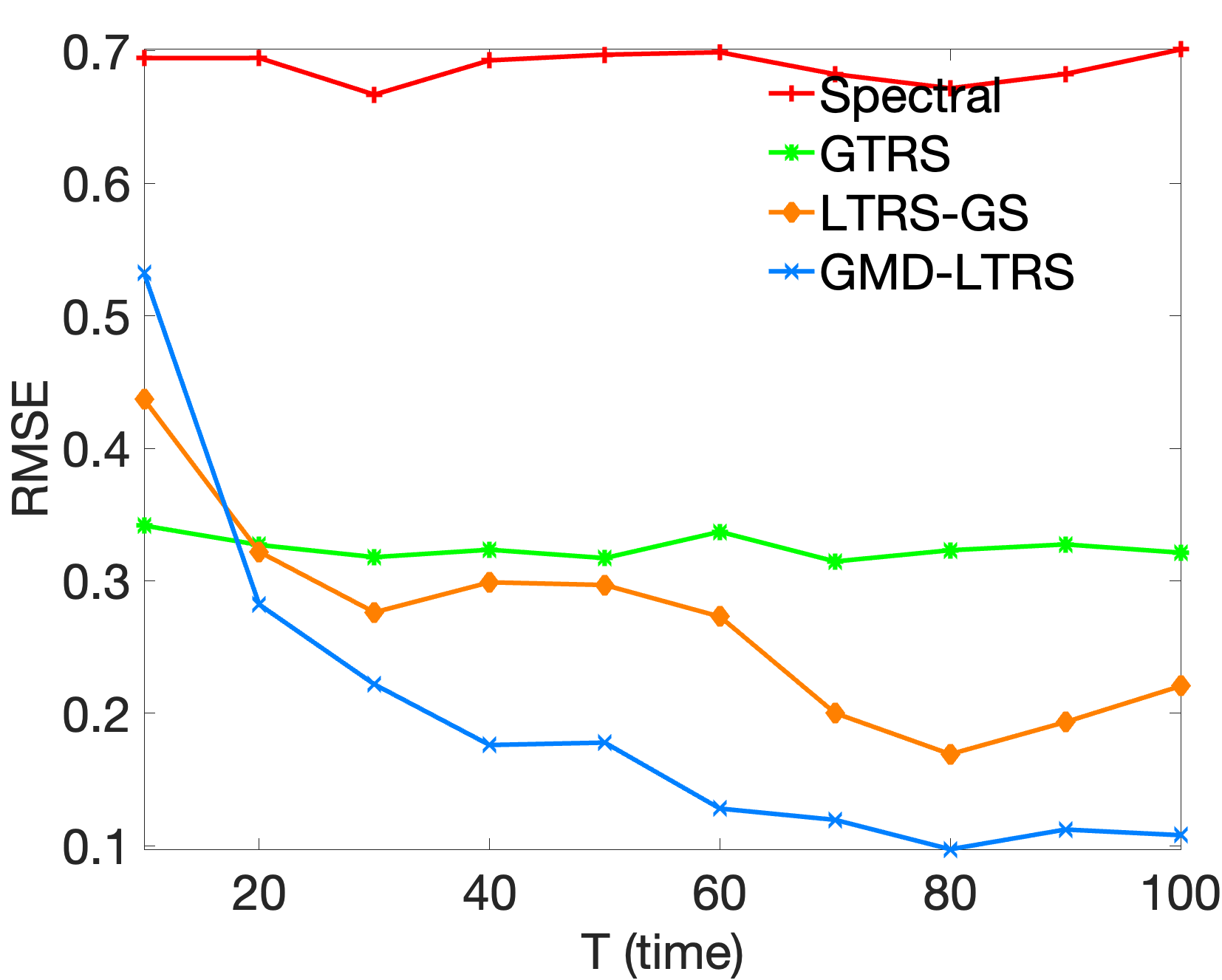}
  \caption{$S_T = 1/T$}
\end{subfigure}%
\begin{subfigure}{.5\textwidth}
  \centering
  \includegraphics[width=\linewidth]{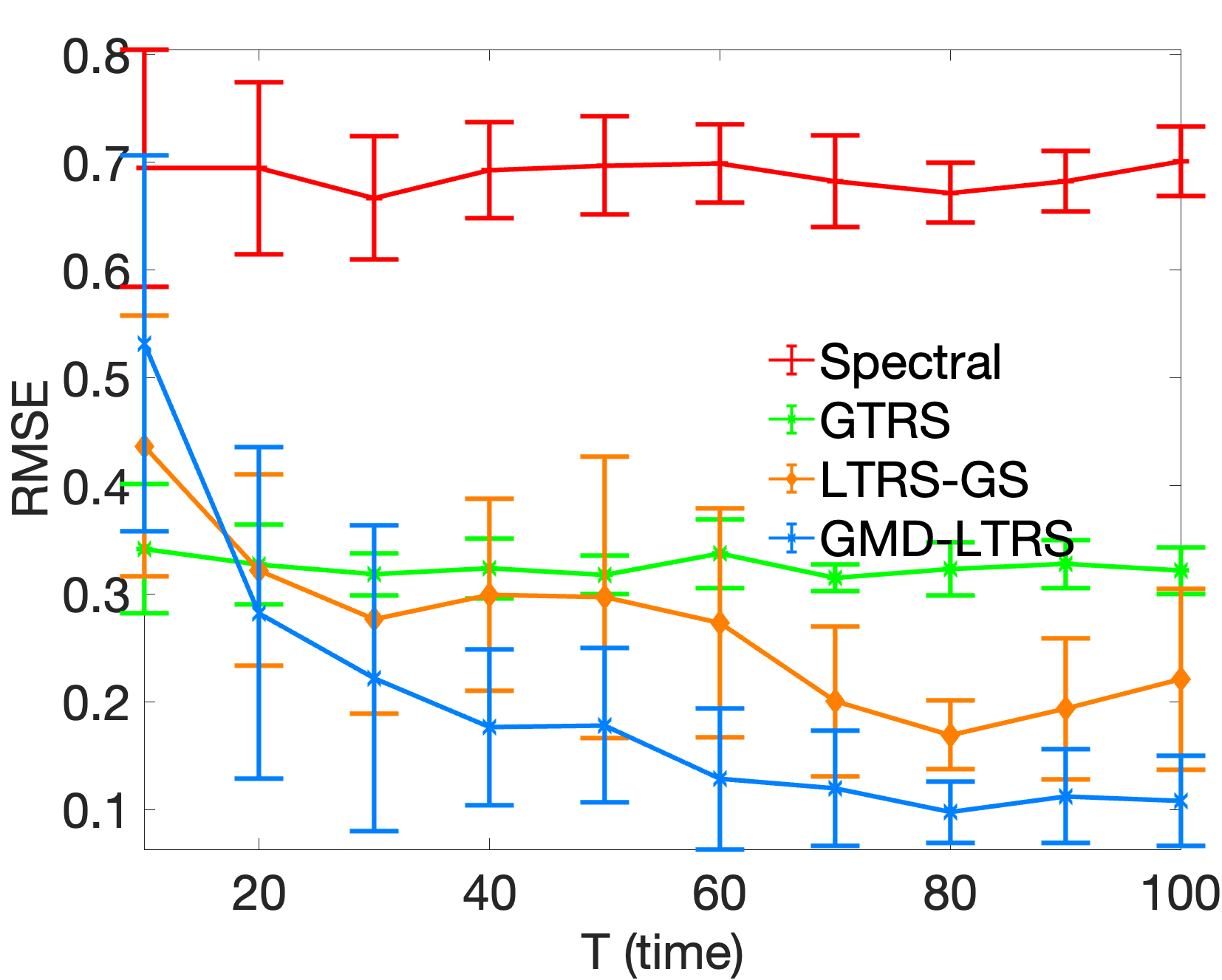}
  \caption{$S_T = 1/T$}
\end{subfigure}%
\hfill
\begin{subfigure}{.5\textwidth}
  \centering
  \includegraphics[width=\linewidth]{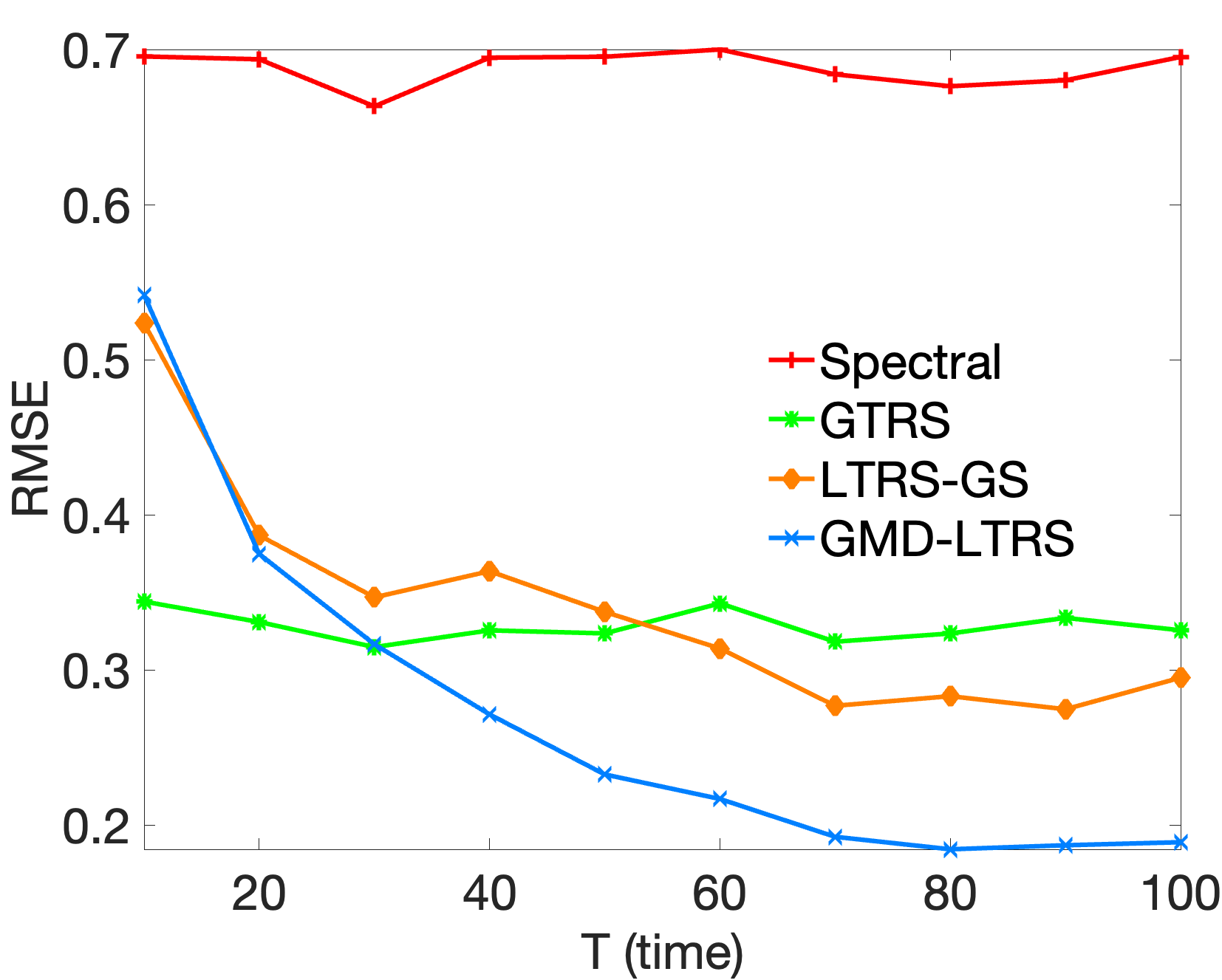}
  \caption{$S_T = 1$}
\end{subfigure}%
\begin{subfigure}{.5\textwidth}
  \centering
  \includegraphics[width=\linewidth]{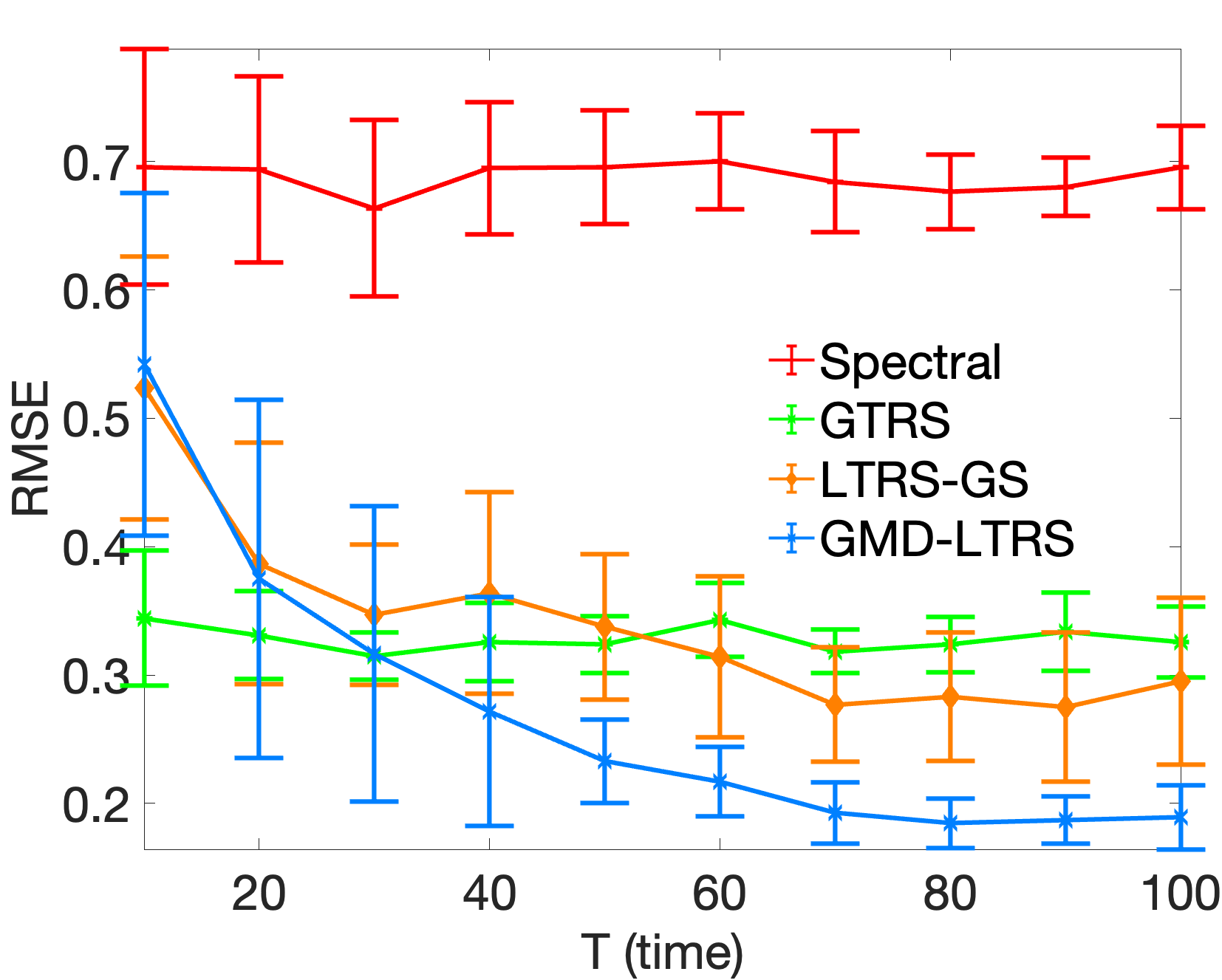}
  \caption{$S_T = 1$}
\end{subfigure}%
\hfill
\begin{subfigure}{.5\textwidth}
  \centering
  \includegraphics[width=\linewidth]{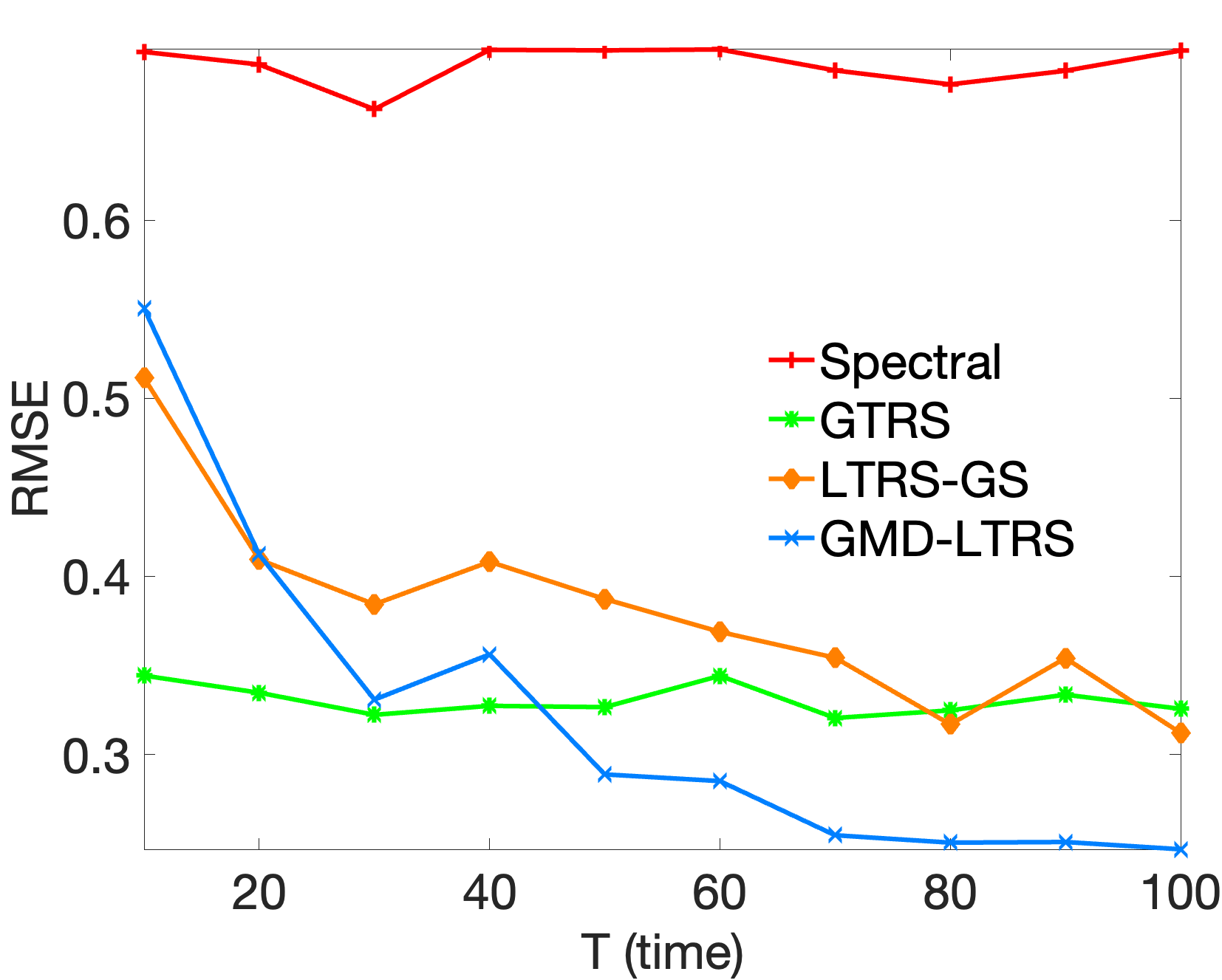}
  \caption{$S_T = T^{1/4}$}
\end{subfigure}%
\begin{subfigure}{.5\textwidth}
  \centering
  \includegraphics[width=\linewidth]{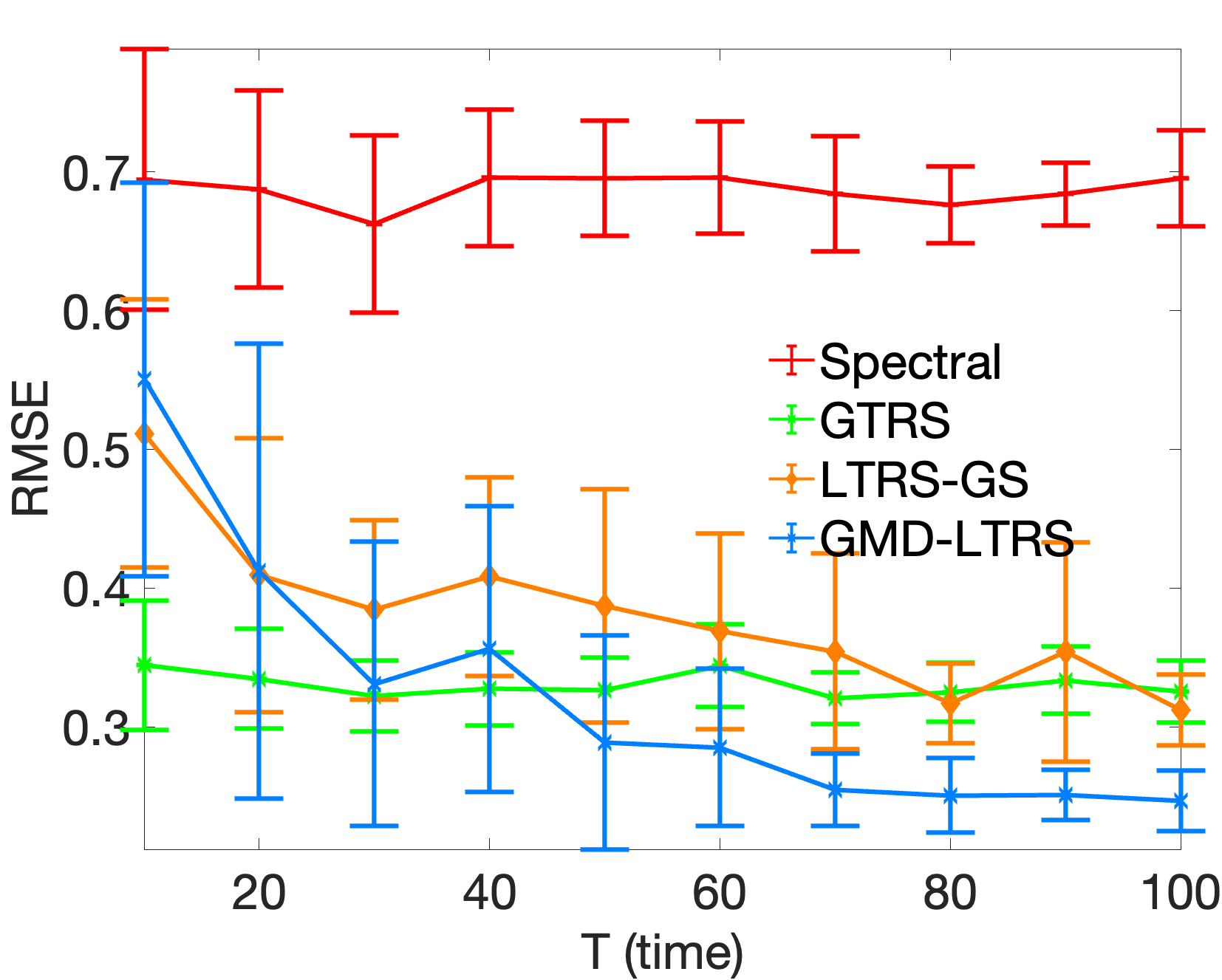}
  \caption{$S_T = T^{1/4}$}
\end{subfigure}%
\caption{RMSE versus $T$ for AGN model ($n=30$, $\sigma = 3$) with $S_T \in \set{1/T,1,T^{1/4}}$, results averaged over $20$ MC runs. We choose $\lambda_{\text{scale}} = 10$ for $\globaltrs$. Results on left show average RMSE,  the right panel shows average RMSE $\pm$ standard deviation.}
\label{fig:plots_123_rmse_T_wigner_sigma_3_20runs}
\end{figure}

 \begin{figure}[!htp]
\centering
\begin{subfigure}{.5\textwidth}
  \centering
  \includegraphics[width=\linewidth]{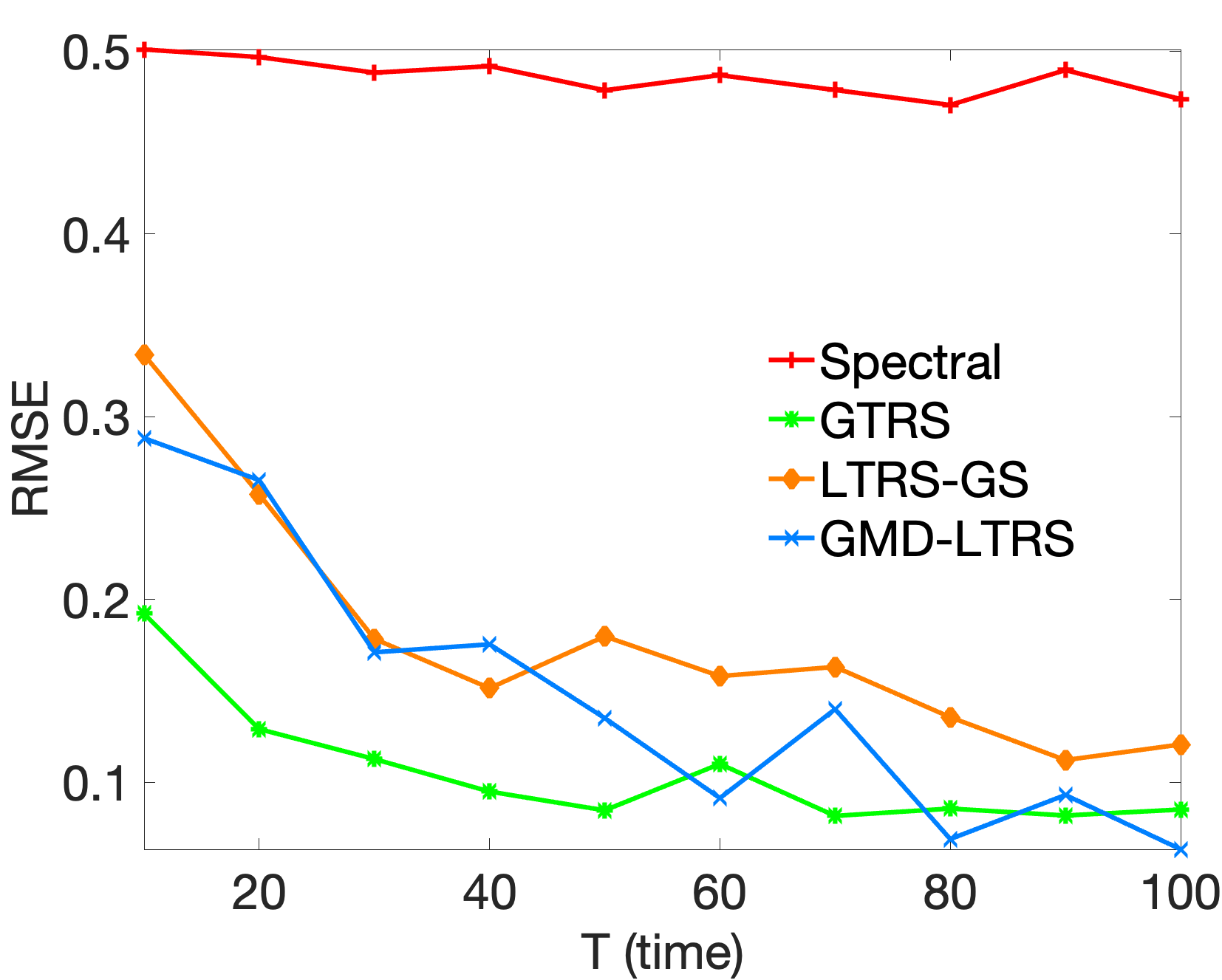}
  \caption{$S_T = 1/T$}
\end{subfigure}%
\begin{subfigure}{.5\textwidth}
  \centering
  \includegraphics[width=\linewidth]{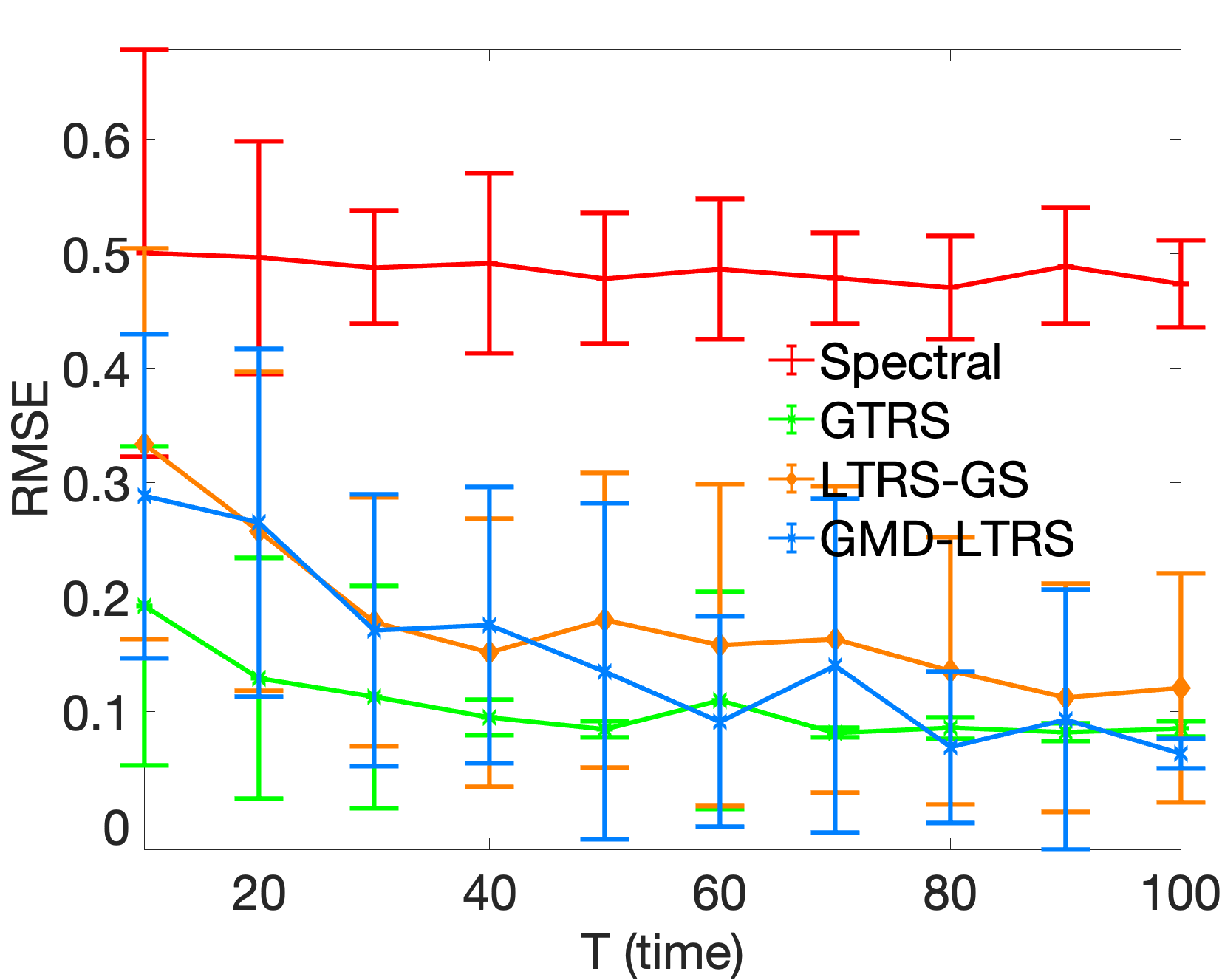}
  \caption{$S_T = 1/T$}
\end{subfigure}%
\hfill
\begin{subfigure}{.5\textwidth}
  \centering
  \includegraphics[width=\linewidth]{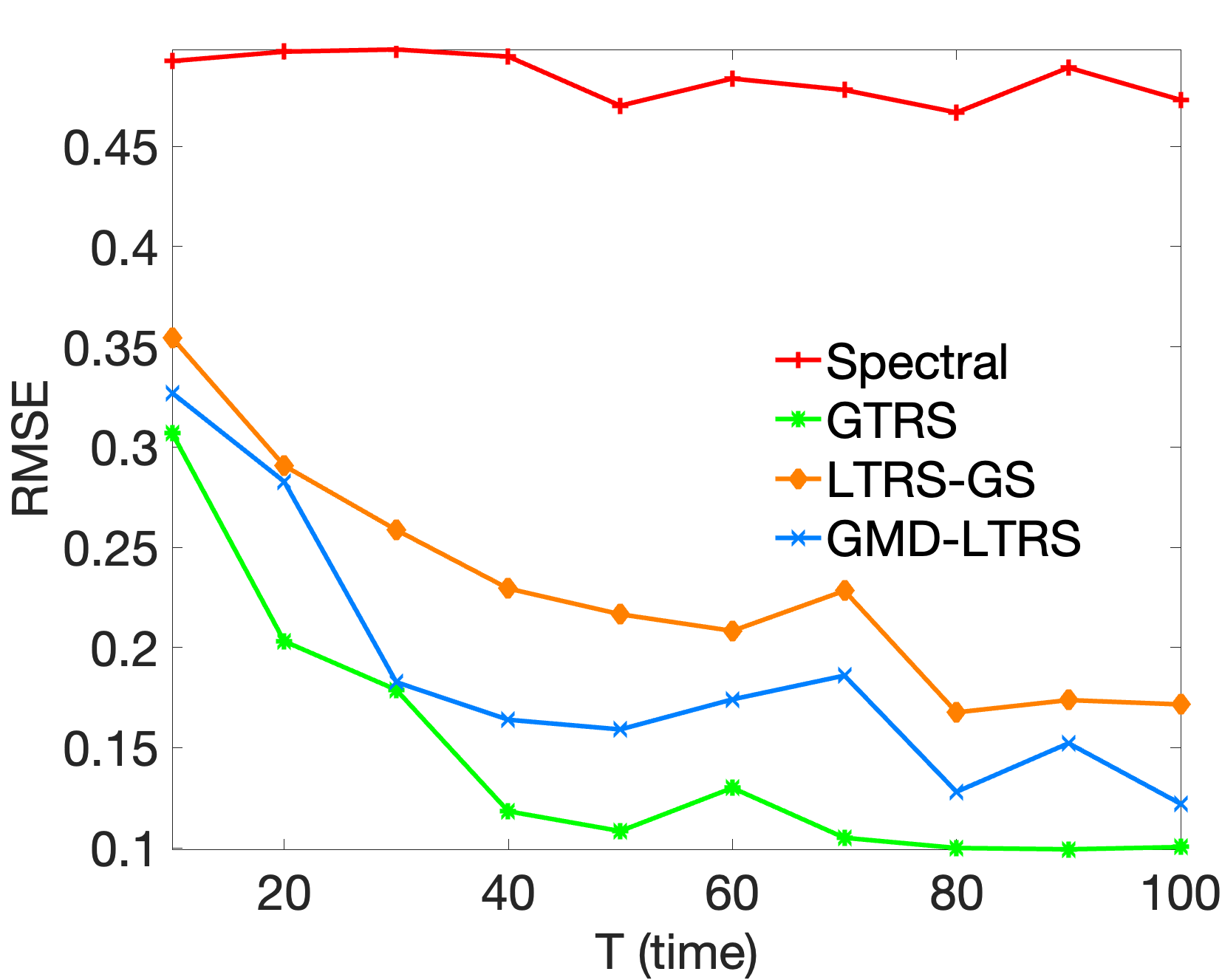}
  \caption{$S_T = 1$}
\end{subfigure}%
\begin{subfigure}{.5\textwidth}
  \centering
  \includegraphics[width=\linewidth]{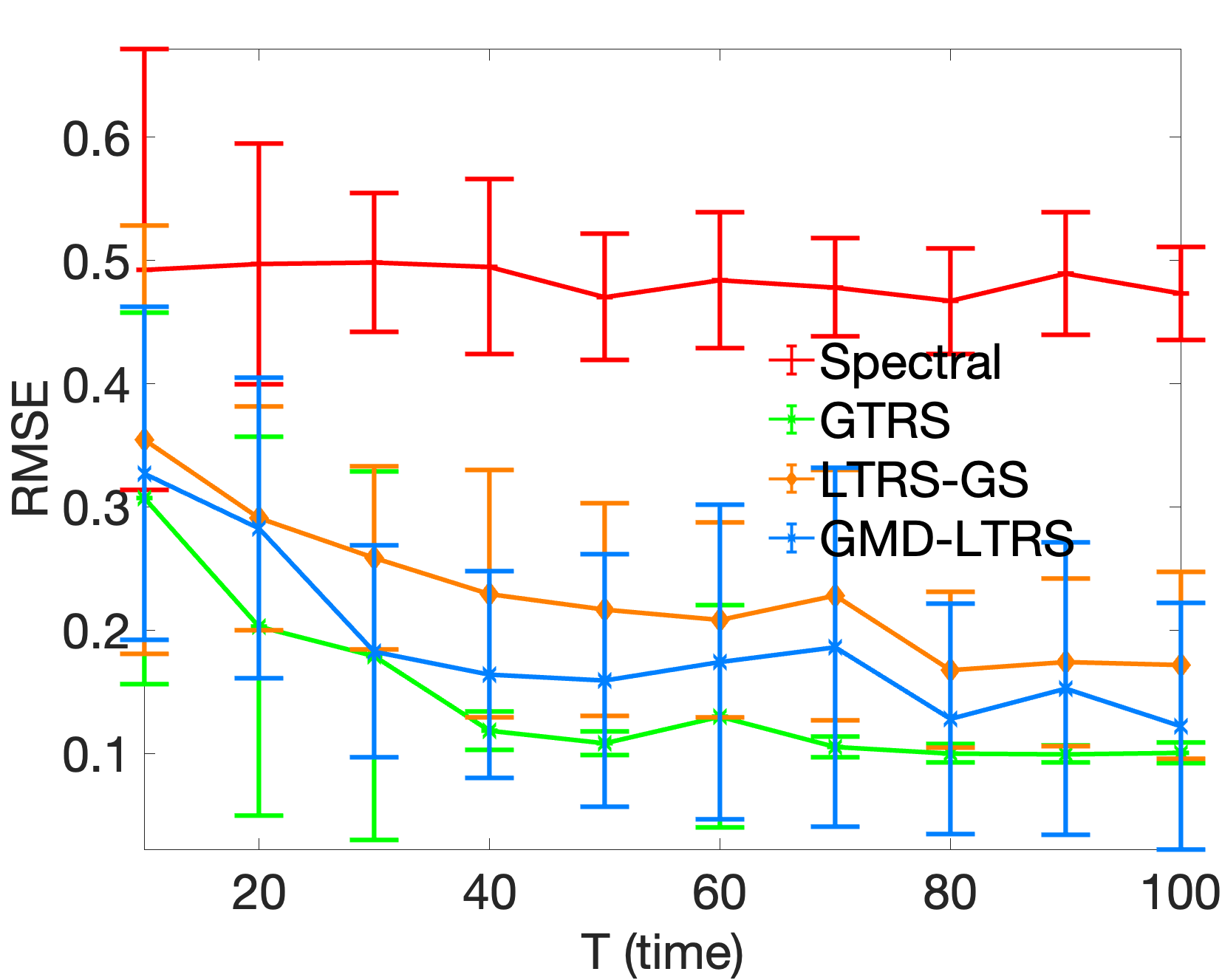}
  \caption{$S_T = 1$}
\end{subfigure}
\hfill
\begin{subfigure}{.5\textwidth}
  \centering
  \includegraphics[width=\linewidth]{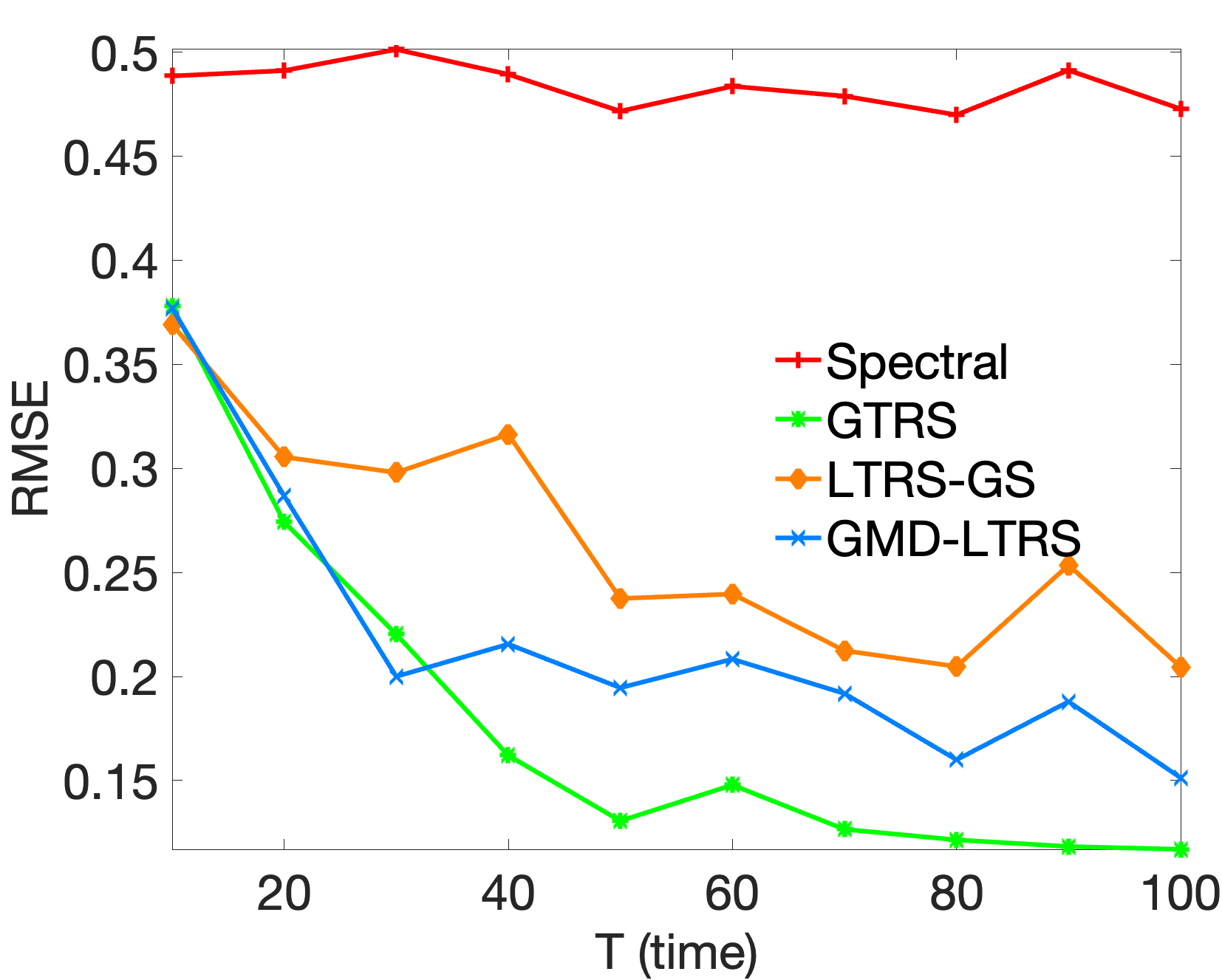}
  \caption{$S_T = T^{1/4}$}
\end{subfigure}%
\begin{subfigure}{.5\textwidth}
  \centering
  \includegraphics[width=\linewidth]{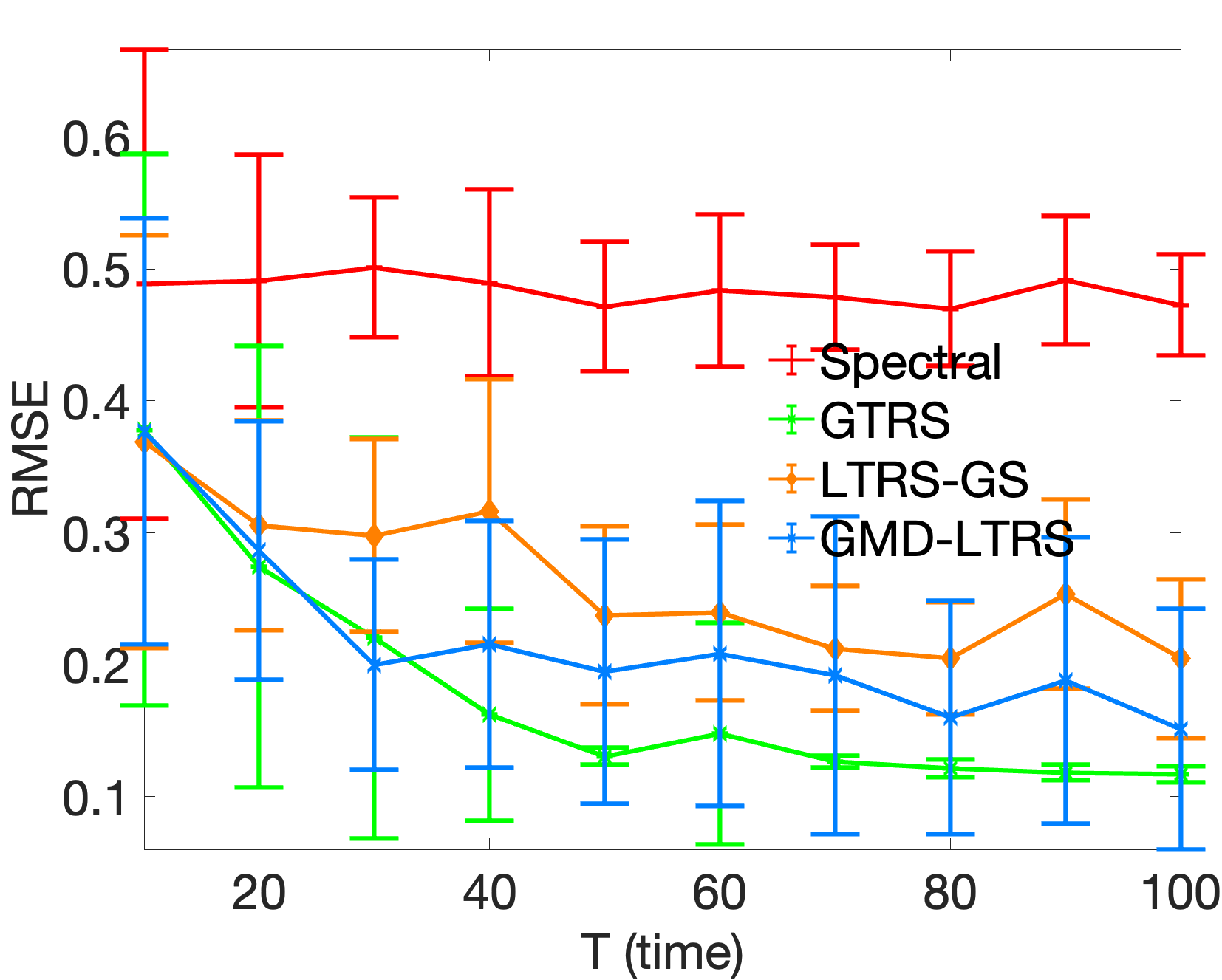}
  \caption{$S_T = T^{1/4}$}
\end{subfigure}
\caption{RMSE versus $T$ for Outliers model ($n=30$, $\eta = p = 0.2$) with $S_T \in \set{1/T, 1, T^{1/4}}$, results averaged over $20$ MC runs. We choose $\lambda_{\text{scale}} = 10$ for $\globaltrs$. Results on left show average RMSE, the right panel shows average RMSE $\pm$ standard deviation.}
\label{fig:plots_123_rmse_T_outlier_eta_0p2_p_0p2_20runs}
\end{figure}

%
%
\paragraph{Experiment 2: RMSE versus noise-level.}In this experiment, we evaluate the RMSE versus the noise-level for all the above methods. The noise-level is denoted by $\gamma$ for ease of exposition, with $\gamma = \sigma > 0$ for the AGN model (varied from $0$ to $10$), and $\gamma = \eta \in [0,1]$ for the Outliers model (varied from $0$ to $0.8$). Throughout, we fix $n = 30$, $T = 20$, and perform
$20$ Monte Carlo (MC) runs for each value of $\gamma$. In a single MC run, we randomly generate the ground-truth for a specified smoothness $S_T$ (as described earlier), then generate the noisy pairwise data (as per specified noise model), and then compute the RMSE for each algorithm for the regularization parameters selected by the aforementioned data-fidelity rule. The RMSE values are then averaged over the $20$ runs. The results are shown in Fig. \ref{fig:plots_123_rmse_noise_wigner_T20_20runs} for the Wigner model, and Fig. \ref{fig:plots_123_rmse_noise_outliers_p0p2_T20_20runs} for the Outliers model (with $p = 0.2$). The following observations can be made from these plots.
\begin{enumerate}
    \item For the AGN model (Fig. \ref{fig:plots_123_rmse_noise_wigner_T20_20runs}), note that the performance of the spectral method and \globaltrs \ are similar for low noise levels ($\sigma$ less than $1$), both being worse than the other two methods in this noise regime. The performance of the spectral method degrades for $\sigma > 1$, while that of the other three methods is similar at moderate noise levels ($1 < \sigma < 5$). For larger values of $\sigma$, we see that $\matdenoising$ performs best, while $\localtrs$ and $\globaltrs$ have similar error values.

    \item For the Outliers model (Fig. \ref{fig:plots_123_rmse_noise_outliers_p0p2_T20_20runs}), we observe that the spectral method performs much worse than the other three methods for almost the whole noise regime.  We further observe that $\globaltrs$ has a similar performance to $\matdenoising$ and $\localtrs$ up to moderate noise levels ($\eta < 0.35$), but its performance degrades for $\eta > 0.35$. On the other hand, $\matdenoising$ appears to be more robust to high noise levels ($\eta > 0.35$) as compared to the other two methods.
\end{enumerate}

%
 \begin{figure}[!htp]
\centering
\begin{subfigure}{.5\textwidth}
  \centering
  \includegraphics[width=\linewidth]{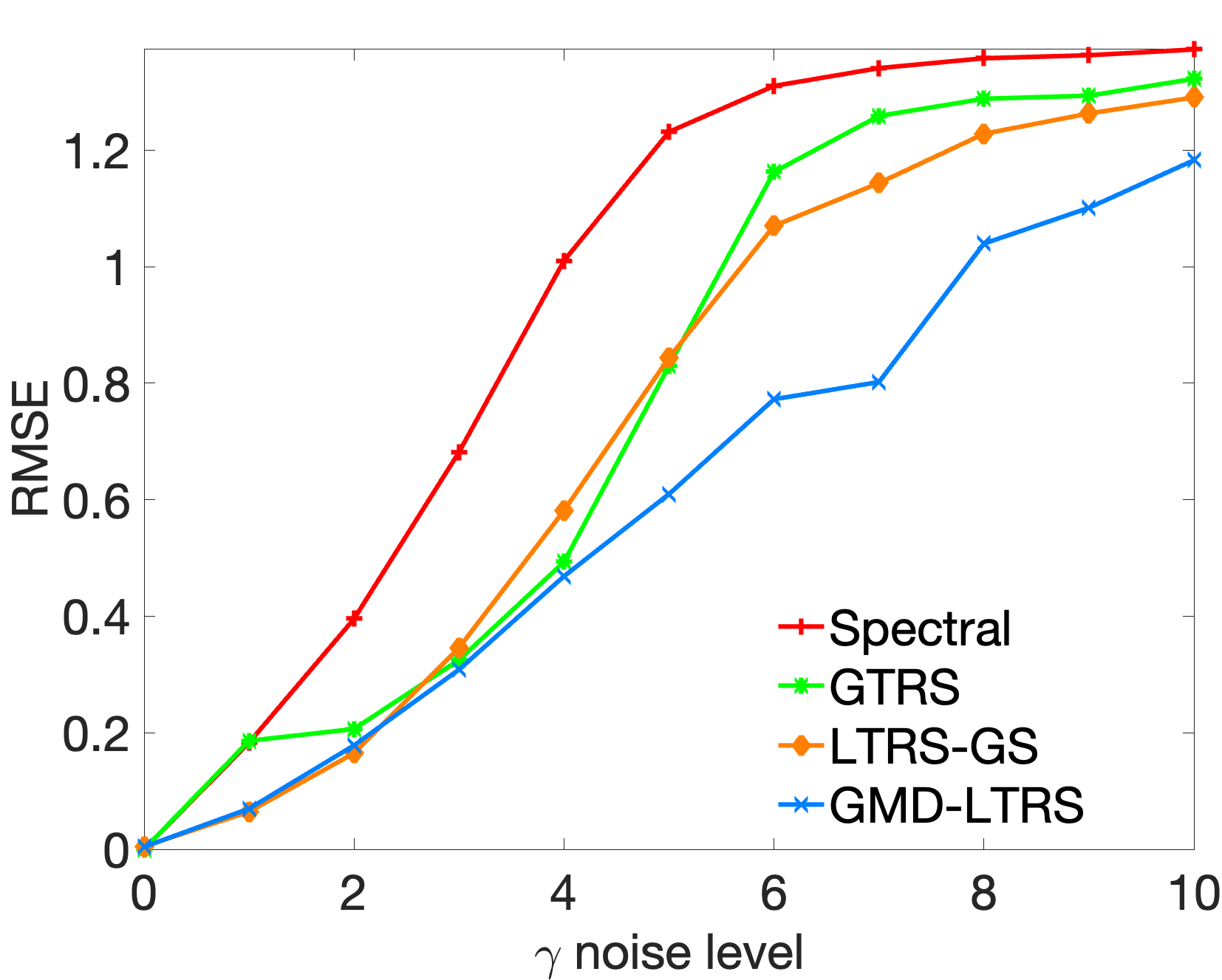}
  \caption{$S_T = 1/T$}
\end{subfigure}%
\begin{subfigure}{.5\textwidth}
  \centering
  \includegraphics[width=\linewidth]{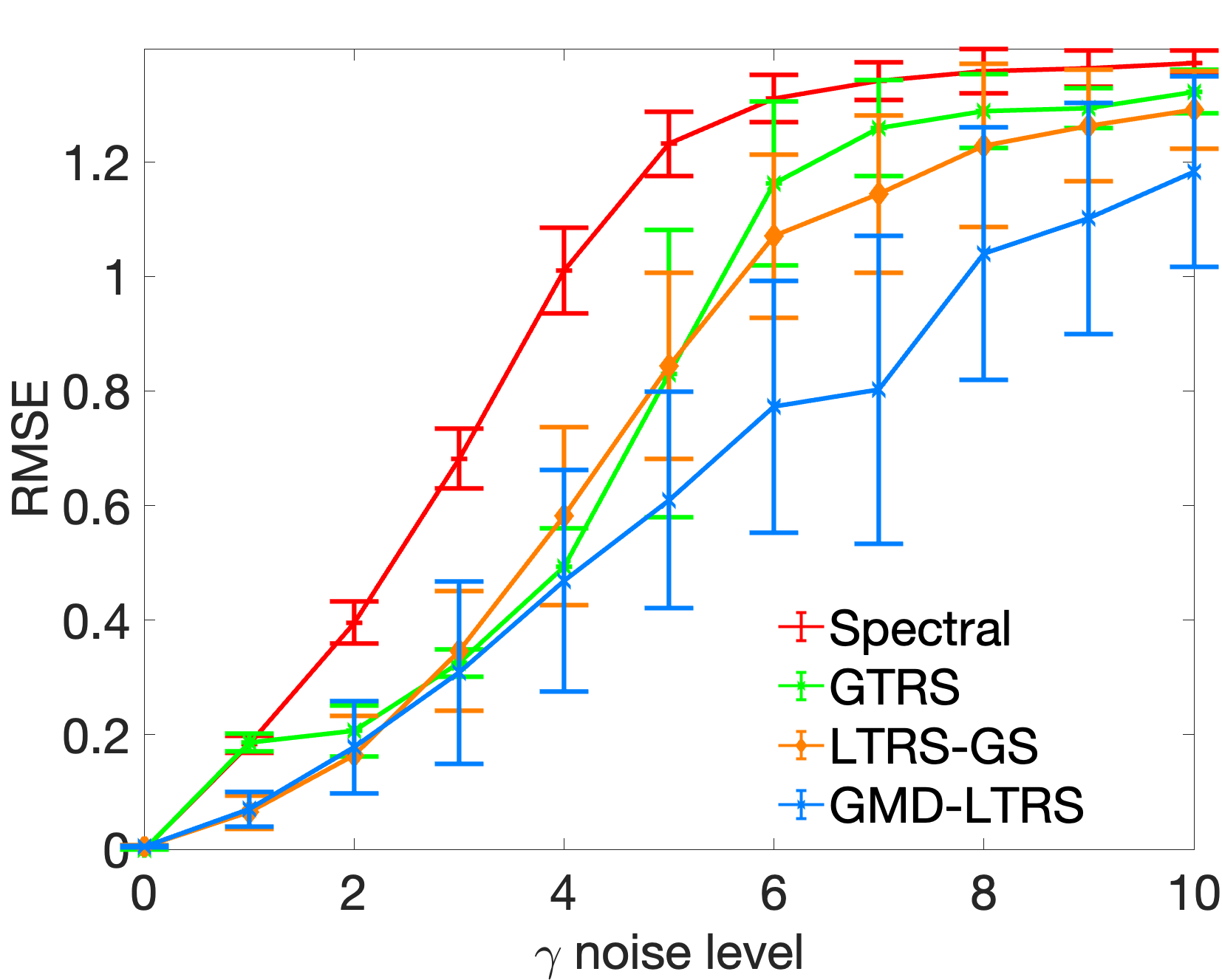}
  \caption{$S_T = 1/T$}
\end{subfigure}%
\hfill
\begin{subfigure}{.5\textwidth}
  \centering
  \includegraphics[width=\linewidth]{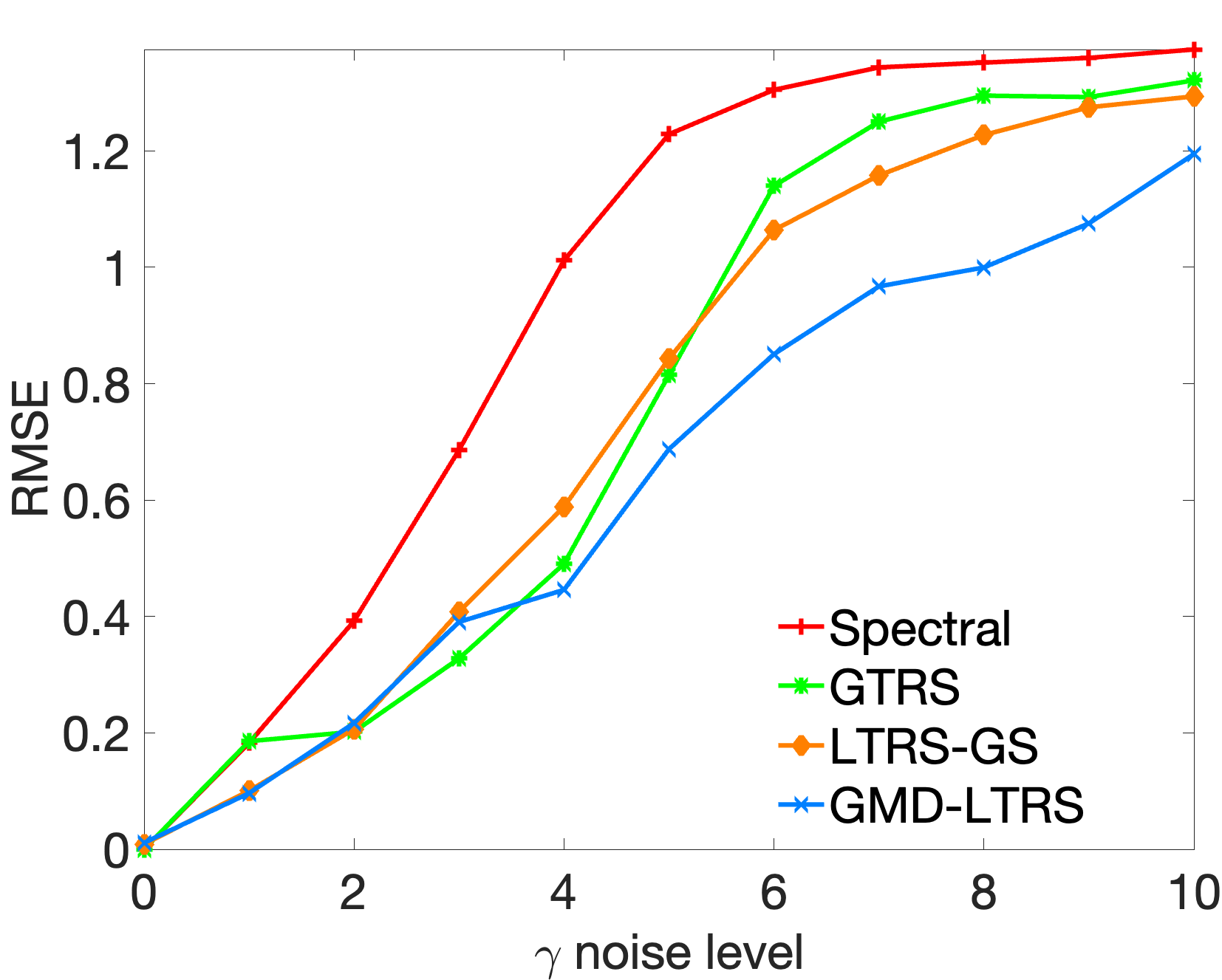}
  \caption{$S_T = 1$}
\end{subfigure}%
\begin{subfigure}{.5\textwidth}
  \centering
  \includegraphics[width=\linewidth]{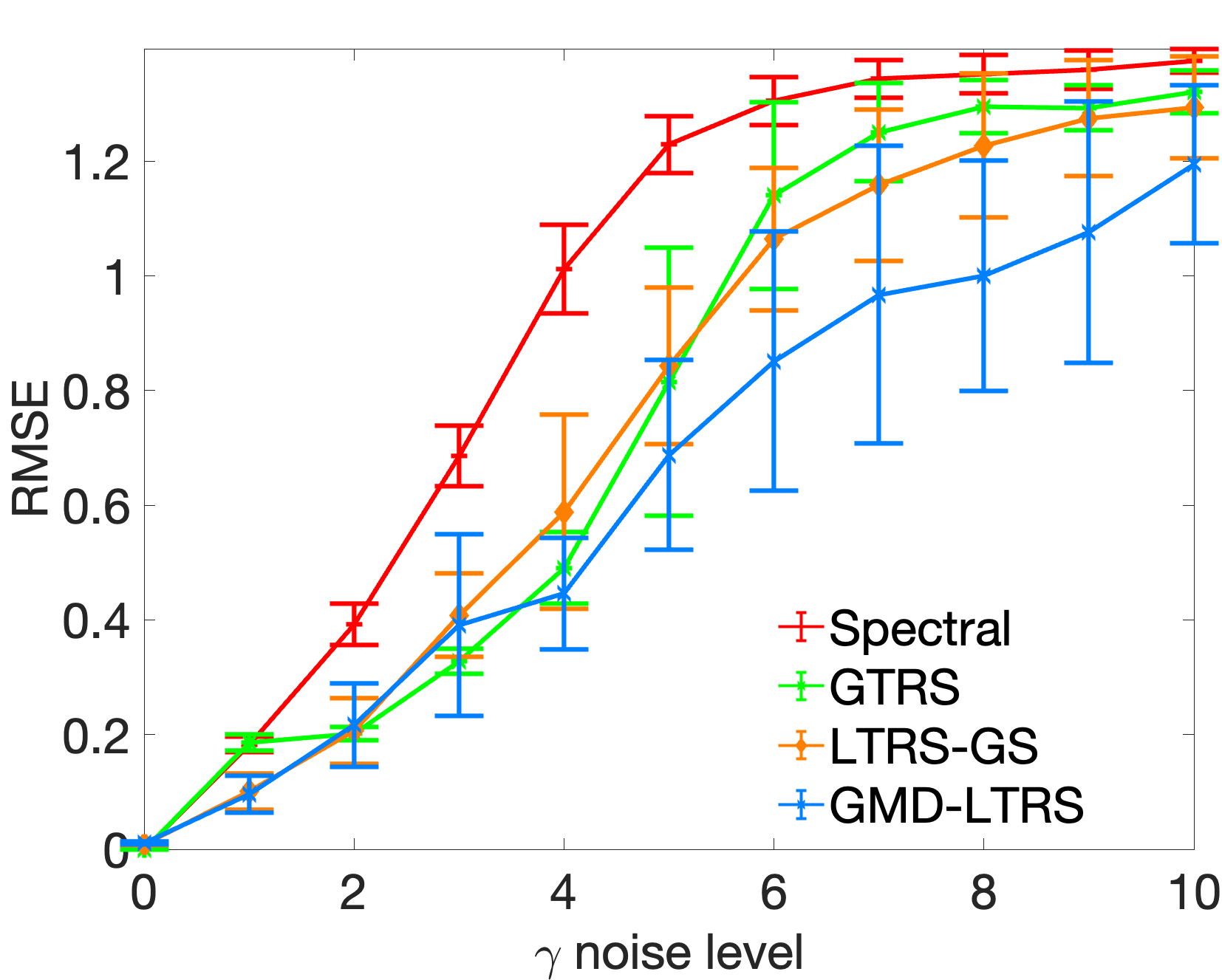}
  \caption{$S_T = 1$}
\end{subfigure}%
\hfill
\begin{subfigure}{.5\textwidth}
  \centering
  \includegraphics[width=\linewidth]{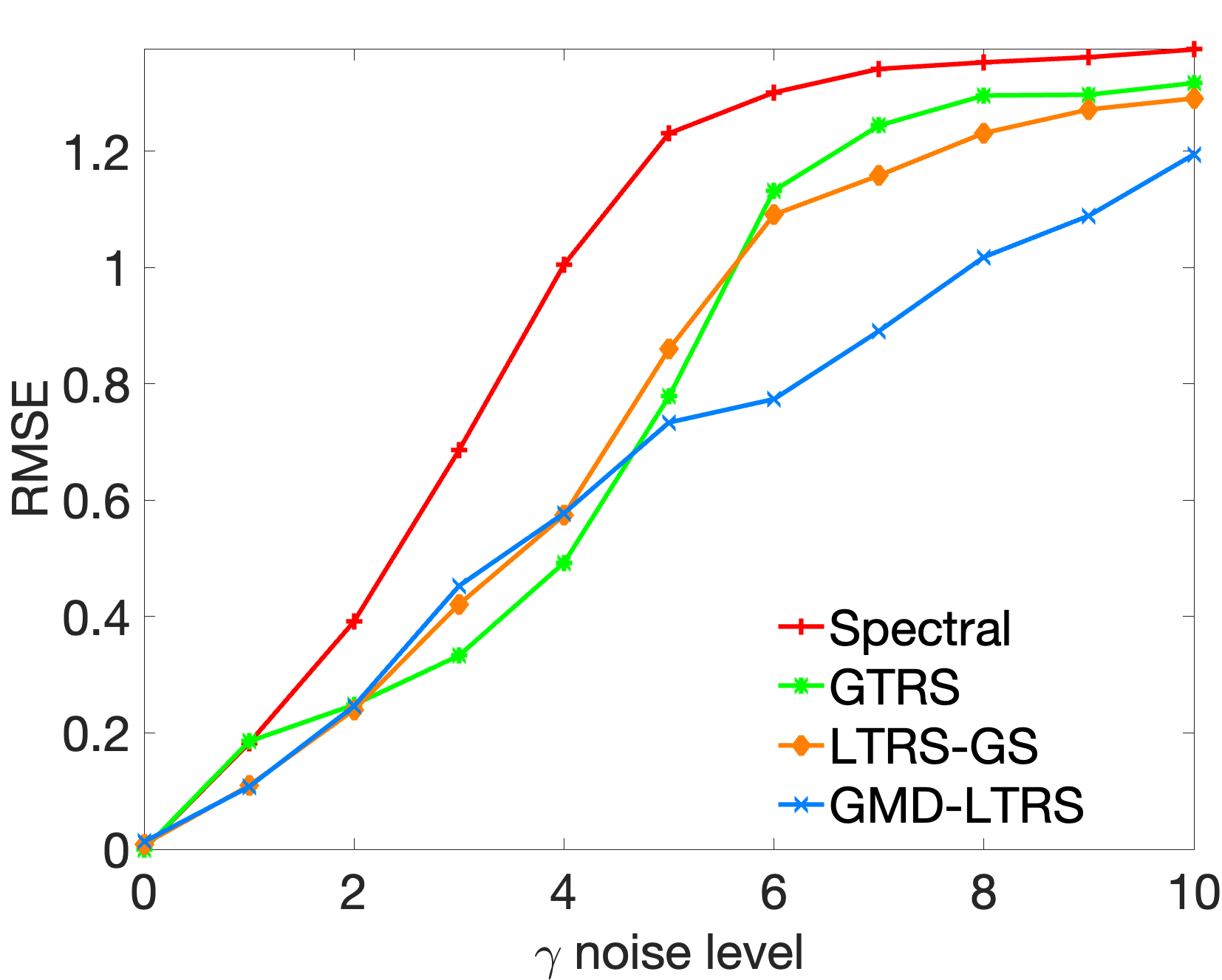}
  \caption{$S_T = T^{1/4}$}
\end{subfigure}%
\begin{subfigure}{.5\textwidth}
  \centering
  \includegraphics[width=\linewidth]{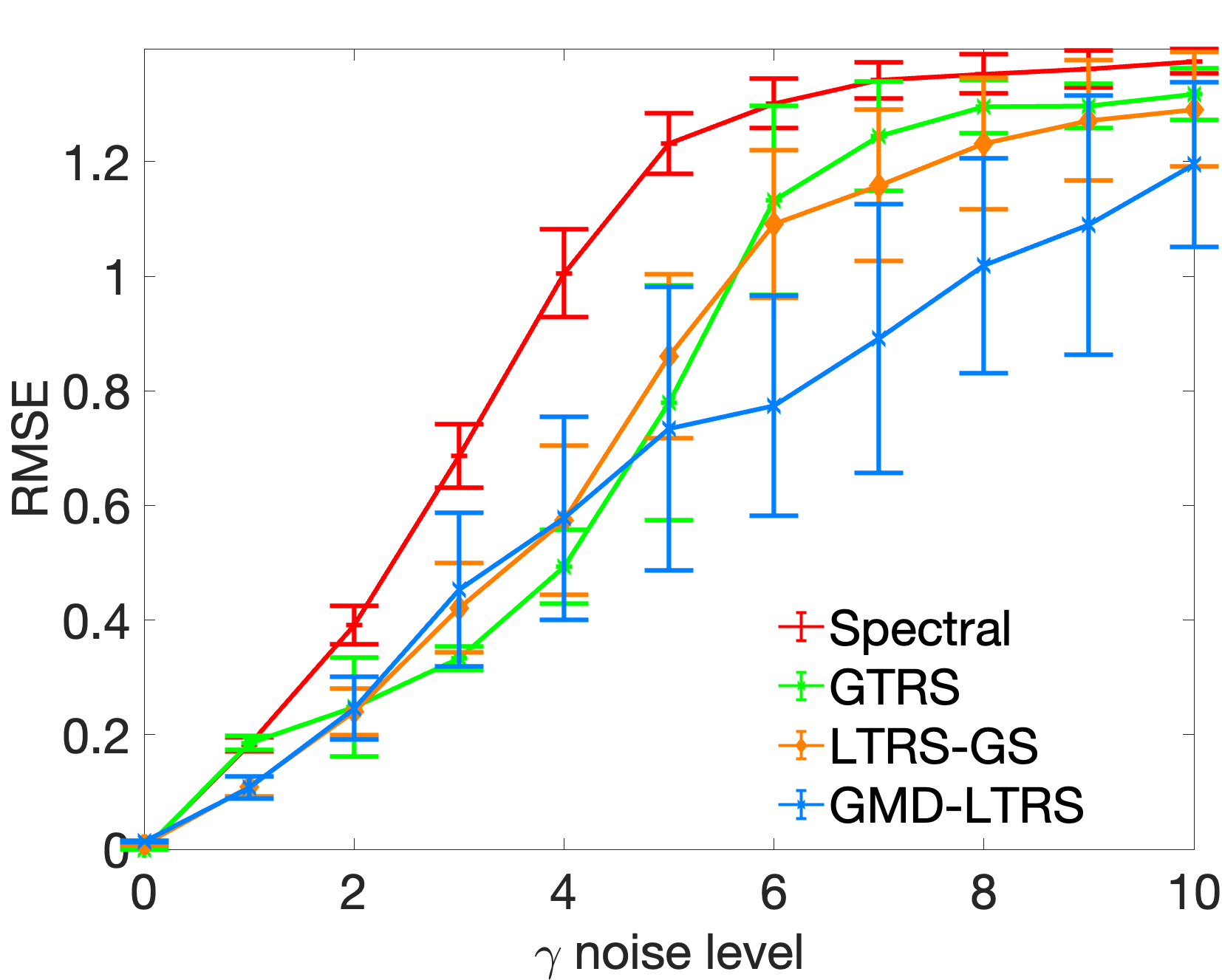}
  \caption{$S_T = T^{1/4}$}
\end{subfigure}%
\caption{RMSE versus $\gamma (= \sigma)$ for AGN model ($n=30$, $T = 20$) with $S_T \in \set{1/T, 1, T^{1/4}}$, results averaged over $20$ MC runs. We choose $\lambda_{\text{scale}} = 10$ for $\globaltrs$. Results on left show average RMSE,  the right panel shows average RMSE $\pm$ standard deviation.}
\label{fig:plots_123_rmse_noise_wigner_T20_20runs}
\end{figure}

%
 \begin{figure}[!htp]
\centering
\begin{subfigure}{.5\textwidth}
  \centering
  \includegraphics[width=\linewidth]{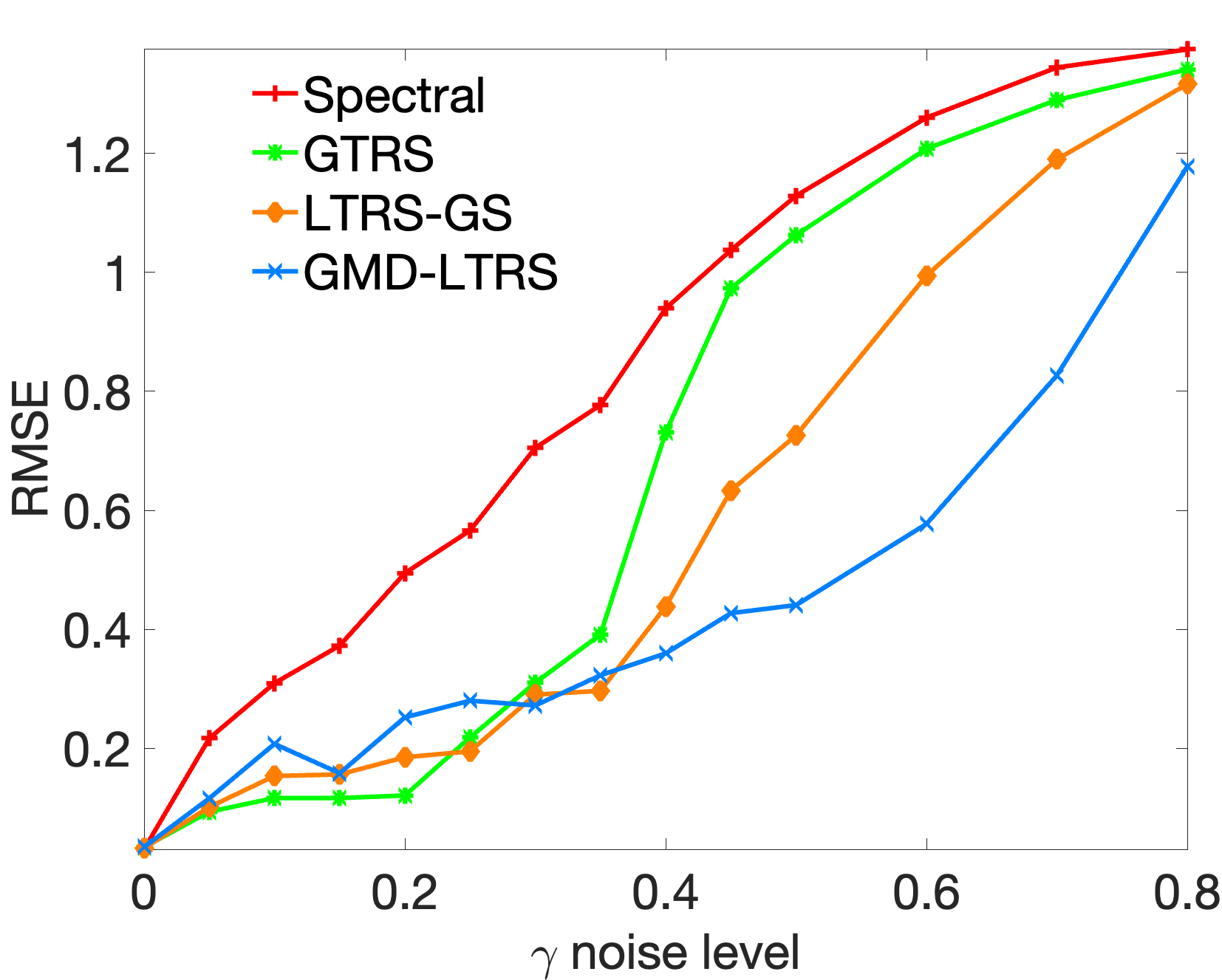}
  \caption{$S_T = 1/T$}
\end{subfigure}%
\begin{subfigure}{.5\textwidth}
  \centering
  \includegraphics[width=\linewidth]{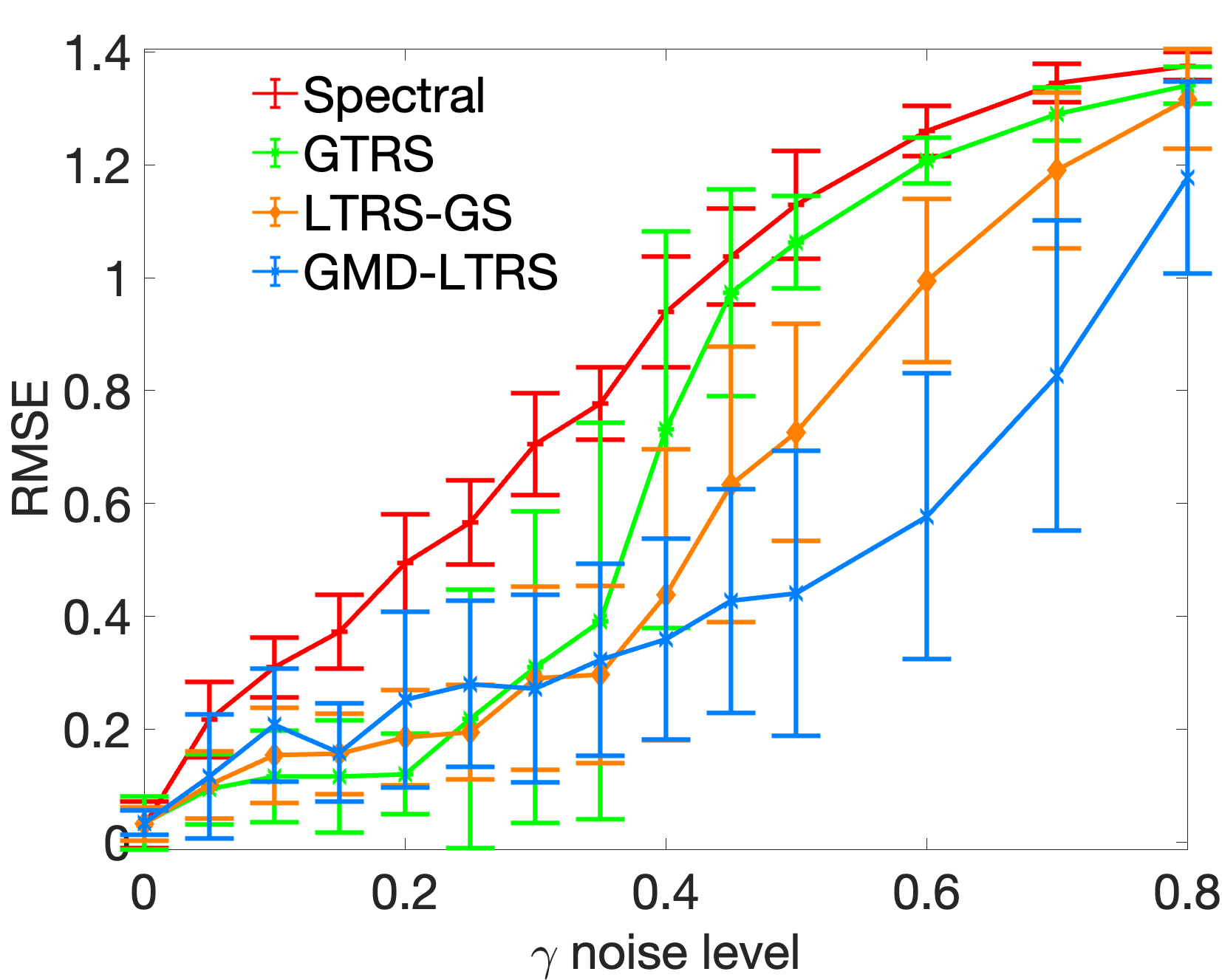}
  \caption{$S_T = 1/T$}
\end{subfigure}%
\hfill
\begin{subfigure}{.5\textwidth}
  \centering
  \includegraphics[width=\linewidth]{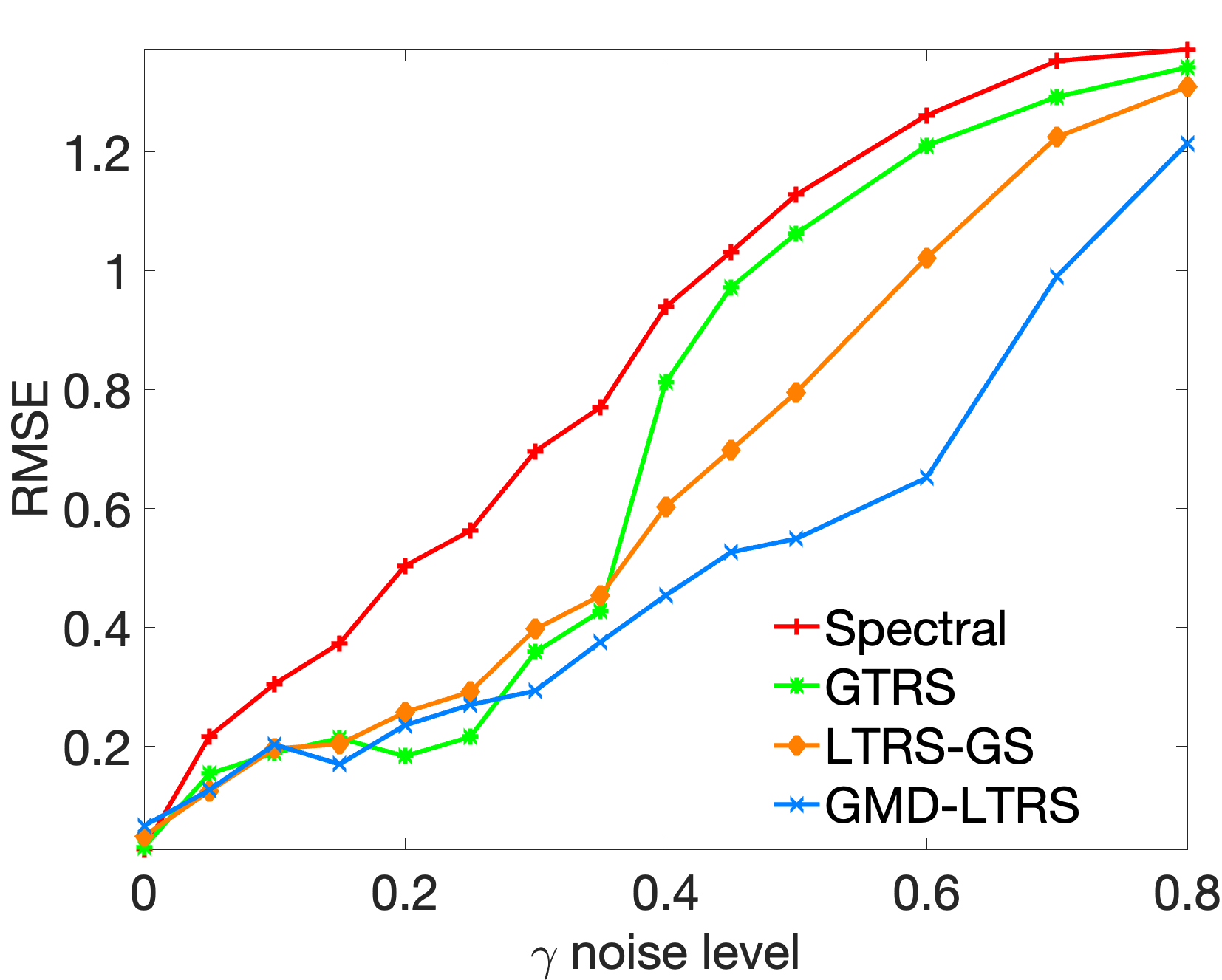}
  \caption{$S_T = 1$}
\end{subfigure}%
\begin{subfigure}{.5\textwidth}
  \centering
  \includegraphics[width=\linewidth]{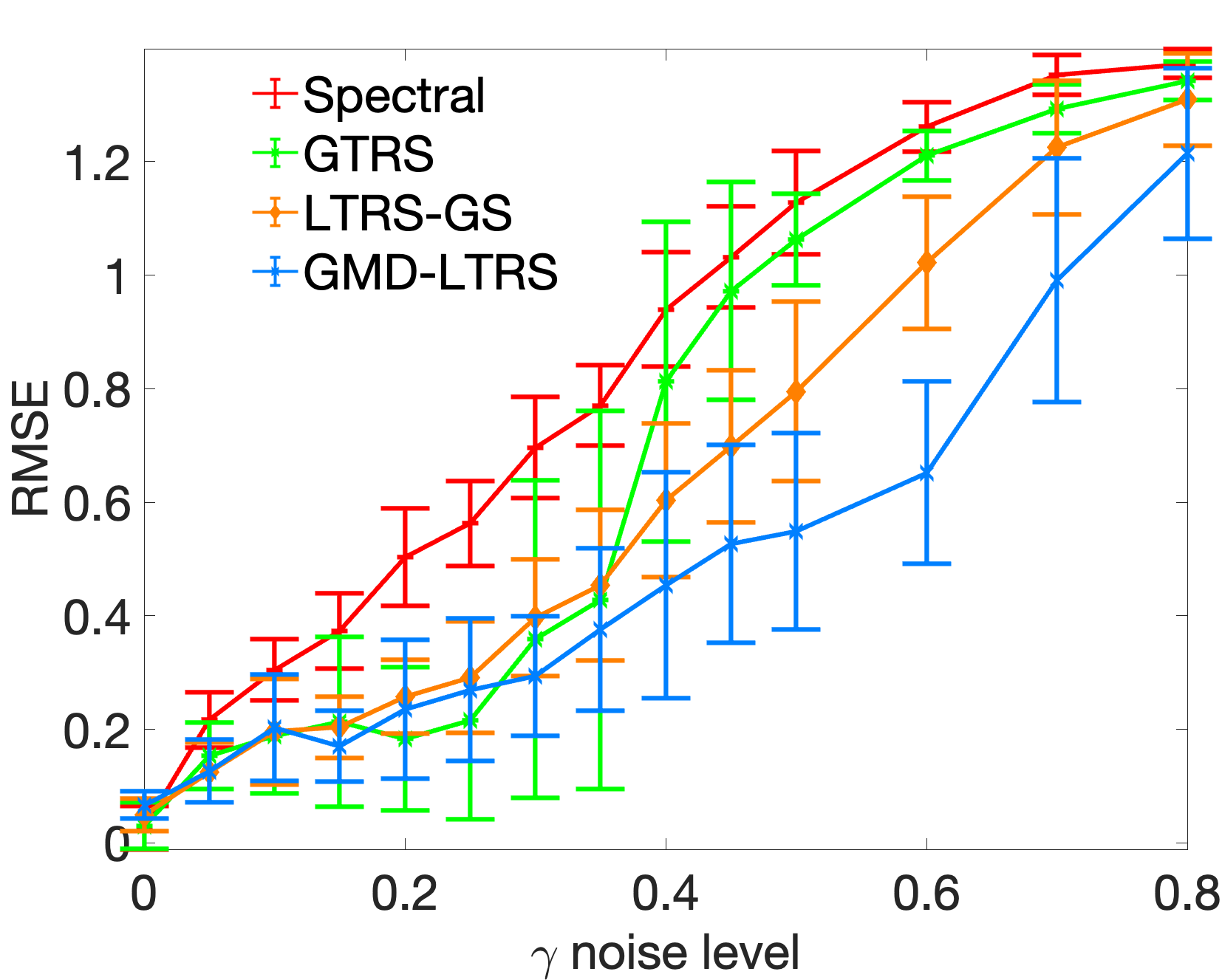}
  \caption{$S_T = 1$}
\end{subfigure}%
\hfill
\begin{subfigure}{.5\textwidth}
  \centering
  \includegraphics[width=\linewidth]{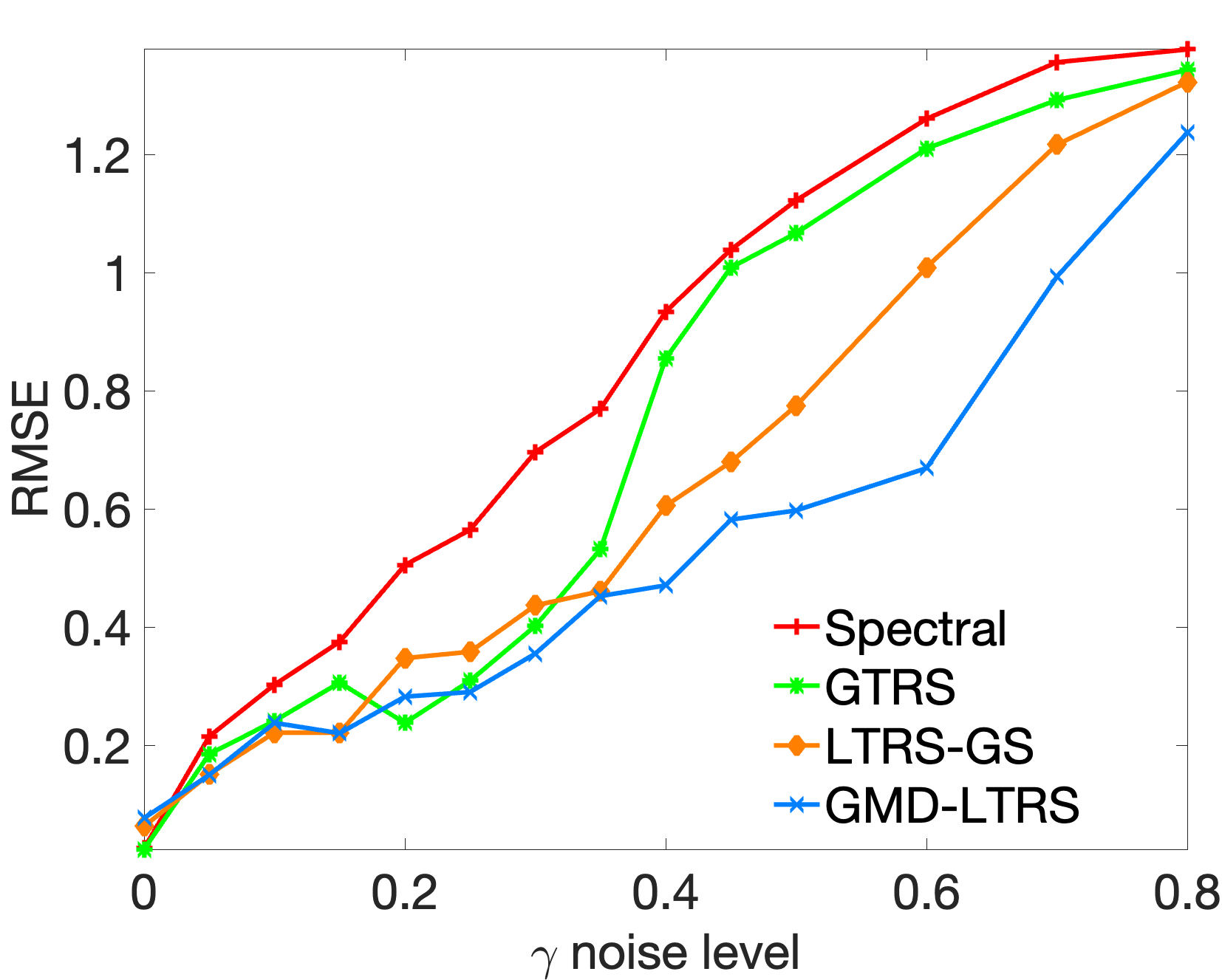}
  \caption{$S_T = T^{1/4}$}
\end{subfigure}%
\begin{subfigure}{.5\textwidth}
  \centering
  \includegraphics[width=\linewidth]{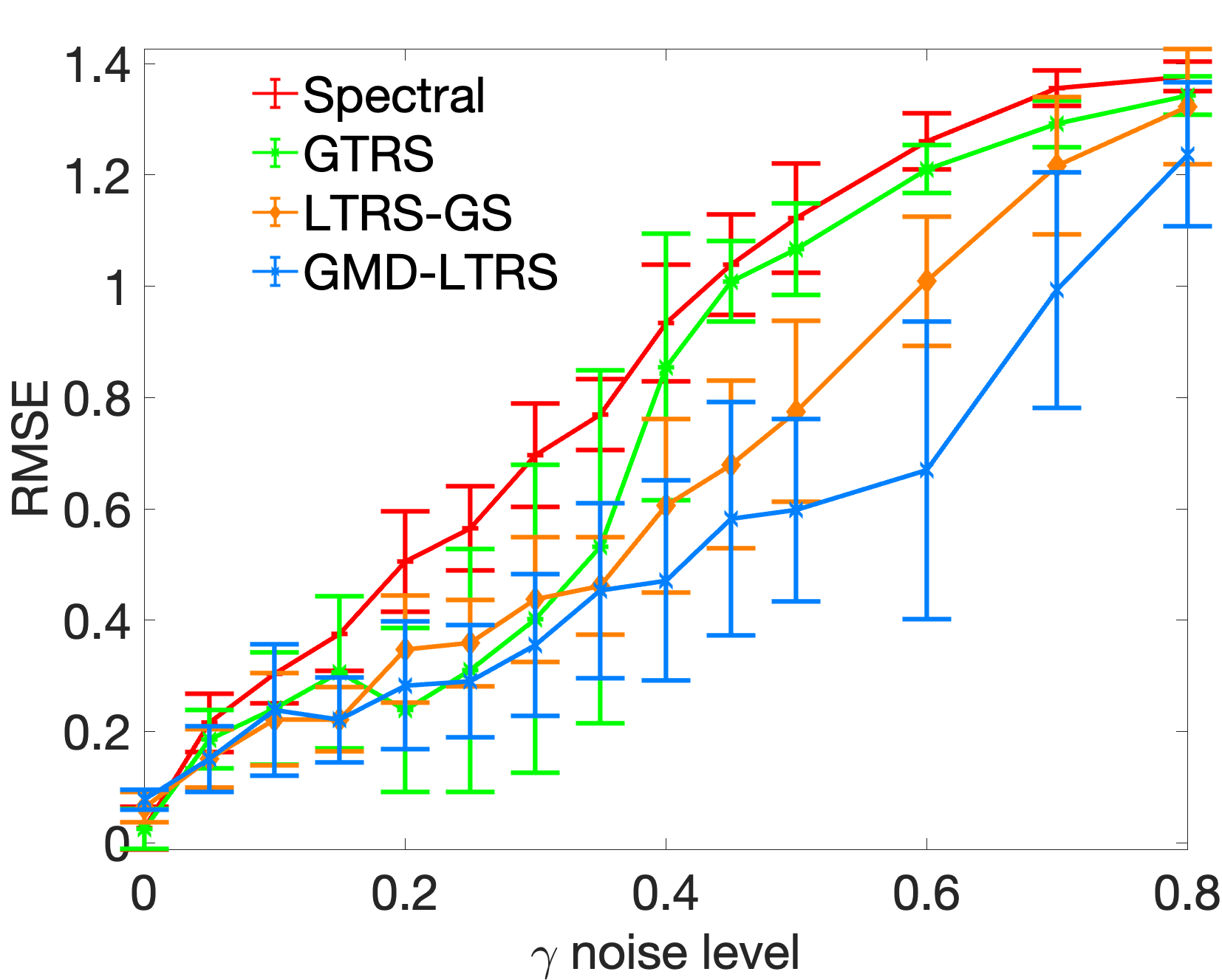}
  \caption{$S_T = T^{1/4}$}
\end{subfigure}%
\caption{RMSE versus $\gamma (= \eta)$ for Outliers model ($n=30$, $p = 0.2$, $T = 20$) with $S_T \in \set{1/T, 1, T^{1/4}}$, results averaged over $20$ MC runs. We choose $\lambda_{\text{scale}} = 10$ for $\globaltrs$. Results on left show average RMSE,  the right panel shows average RMSE $\pm$ standard deviation.}
\label{fig:plots_123_rmse_noise_outliers_p0p2_T20_20runs}
\end{figure}

%
%
\section{Concluding remarks}\label{sec:conclusion}
In this work we introduced the problem of dynamic angular synchronization, where the underlying latent angles, and the measurement graphs, both evolve with time. Assuming a smoothness condition on the evolution of the angles, with the smoothness measured by the quadratic variation of the angles w.r.t the path graph on $T$ vertices, we proposed three algorithms (namely, $\globaltrs$, $\localtrs$ and $\matdenoising$) for jointly estimating the angles at all time points. For Algorithm $\matdenoising$, we derived non-asymptotic error bounds for the AGN and Outliers noise models, and established conditions under which the MSE goes to zero as $T$ increases. In particular, this was shown to occur under much milder conditions than the static setup $(T = 1)$. It includes the setting where the measurement graphs are highly sparse and disconnected, and also when the measurement noise is large and can potentially increase with $T$. We validated our theoretical results with experiments on synthetic data which show that $\matdenoising$ typically outperforms the other two methods.

An interesting direction for future work would be to extend our framework to other groups such as $\text{SO}(d)$. Another direction would be to consider evolution models where the measurements are statistically dependent over time -- our analysis requires the measurements to be independent across time points.  

\clearpage
\newpage
\bibliographystyle{plain}
\bibliography{refs}


\appendix 

%
%
\section{Proof of Theorem \ref{thm:error_wigner}}\label{app:proof_thm_wigner}
The bound in \eqref{eq:error_bound_Wigner} follows by plugging $\tau^*$ in \eqref{eq:error_proj_wigner}, and using the bound  $\tau^* \asymp \min\set{1 + \Big(\frac{T^2 S_T}{n \sigma^2}\Big)^{1/3}, T}$ along with the fact $(a+b)^2 \geq a^2 + b^2$ for $a,b \geq 0$.

To show the second part,  observe that if $\Big(\frac{T^2 S_T}{n \sigma^2}\Big)^{1/3} \geq 1$ then \eqref{eq:error_bound_Wigner} simplifies to
\begin{align*}
        \|\est G-\Gshift^*\|^2_F
        \lesssim  
        \max\set{n^{5/3} \sigma^{4/3} T^{2/3} S_T^{1/3}, n S_T} \indic_{\set{\tau^* < T}}   
        +  \min\set{\sigma^{4/3} T^{2/3} S_T^{1/3} n^{5/3}, T}  
+ \sigma^2 \log\paren{\frac4\delta} 
\end{align*}
and also $\tau^* \asymp \min\set{\Big(\frac{T^2 S_T}{n \sigma^2}\Big)^{1/3}, T}$. Now the bound in \eqref{eq:error_bound_wigner_simp} follows readily after minor simplifications provided $T$ satisfies \eqref{eq:T_cond_wigner}. We just remark that  
$T \gtrsim \frac{S_T}{n \sigma^2}$ implies $\tau^* \asymp \Big(\frac{T^2 S_T}{n \sigma^2}\Big)^{1/3}$, and in particular $\tau^* < T$.

%
\section{Proof of Lemma \ref{lem:variance_term_outliers}}\label{app:proof_lemma_var_outliers}
Let $H(k) \in \set{0,1}^{n \times n}$ denote the (symmetric) adjacency matrix of $\calH_k$. Hence $(H_{ij}(k))_{i<j}$ are independent Bernoulli random variables with parameter $p(k)$, and $H_{ii}(k) = 0$ for all $i$. Also denote $Z(k) \in \matC^{n \times n}$ to be a Hermitian matrix with $Z_{ii}(k) = 0$ for each $i \in [n]$, and for each $i < j$ 
\begin{equation*}
    Z_{ij}(k) =     
    \begin{cases}
        g^*_i(k)\conj{g^*_j(k)} \text{ with probability } 1-\eta, \\
        e^{\iota \varphi_{ij}(k)} \enskip \enskip\quad\text{ with probability }\eta
    \end{cases}
\end{equation*}
where $(\varphi_{ij}(k))_{i<j} \stackrel{ind.}{\sim} \text{Unif}[0,2\pi)$. Then we can write $A(k) = Z(k) \circ H(k)$, and $A = Z \circ H$ where $Z \in \matC^{nT \times n}$ (resp. $H \in \set{0,1}^{nT \times n}$) is formed by stacking $Z(1),\dots Z(T)$ (resp. $H(1),\dots, H(T)$). Then we can write $R = Z \circ H - D \Gshift^*$. Now decompose $Z = Z_1 + Z_2$ where $Z_1(k)$ (resp. $Z_2(k)$) contains the upper triangular (resp. lower triangular) part of $Z(k)$. Write $H = H_1 + H_2$ and $\Gshift^* = \Gshift_1^* + \Gshift_2^*$ in an analogous manner. Then $R$ decomposes as 
\begin{equation}
    R = \underbrace{(Z_1 \circ H_1 - D \Gshift_1^*)}_{=: R_1} + \underbrace{(Z_2 \circ H_2 - D \Gshift_2^*)}_{=: R_2}
\end{equation}
with the entries of $R_1$ (resp. $R_2$) being independent, centered subgaussian random variables.

Now, $\|\projtaun R\|^2_F \leq 2(\|\projtaun R_1\|^2_F + \|\projtaun R_2\|^2_F)$ and it is sufficient to bound $\|\projtaun R_1\|^2_F$, since the same bound will follow for $\|\projtaun R_2\|^2_F$. Decomposing $R_1$ into its real and complex parts
\[
R_1= 
\underbrace{\real(R_1)}_{=:R_{1,\opR}}+\iota\underbrace{\imag(R_1)}_{=:R_{1,\opI}},
\]
we have $\|\projtaun R_1\|^2_F = \|\projtaun R_{1,\opR}\|^2_F + \|\projtaun R_{1,\opI}\|^2_F$. Again, we focus only on $\|\projtaun R_{1,\opR}\|^2_F$ as the same bound follows for $\|\projtaun R_{1,\opI}\|^2_F$ in an analogous manner.

Denote $u \in \matR^{n^2 T}$ to be the vector formed by stacking the columns of $R_{1,\opR} \in \matR^{nT \times n}$. Then we obtain 
\begin{equation*}
    \|\projtaun R_{1,\opR}\|^2_F = u^\top (I_n \otimes \projtaun) u
\end{equation*}
which we will bound via the Hanson-Wright inequality \cite[Thm. 1.1]{rudelson_vershynin}. To this end, we first derive an upper bound on $\expec[u^\top (I_n \otimes \projtaun) u]$ as follows. Denoting $Z_{1,\opR}(k)$ (resp. $\Gshift_{1,\opR}^* (k)$) to be the real part of $Z_{1}(k)$ (resp. $\Gshift_{1}^* (k)$), we obtain 
\begin{equation*}
    (R_{1,\opR}(k))_{ij} = (Z_{1,\opR}(k))_{ij} (H_{1}(k))_{ij} - (1-\eta)p(k)(\Gshift_{1,\opR}^* (k))_{ij}. 
\end{equation*}
Hence we can bound $\expec[(R_{1,\opR}(k))^2_{ij}]$ for all $i \neq j \in [n]$ as follows.
\begin{align*}
    \expec[(R_{1,\opR}(k))^2_{ij}] &= p(k) \expec[(Z_{1,\opR}(k))_{ij} - (1-\eta)p(k)(\Gshift_{1,\opR}^* (k))_{ij}]^2 + (1-p(k))p^2(k) (1-\eta)^2 (\Gshift_{1,\opR}^* (k))^2_{ij} \\
    &\lesssim p(k) (\text{Var}((Z_{1,\opR}(k))_{ij}) + (1-\eta)^2(1-p(k))^2) + (1-p(k))p^2(k) (1-\eta)^2.
\end{align*}
It is not difficult to verify that $\text{Var}((Z_{1,\opR}(k))_{ij}) \lesssim \eta$, which upon plugging in the above bound readily implies 
\begin{equation*}
   \expec[(R_{1,\opR}(k))^2_{ij}] \lesssim \underbrace{\pmax\eta  + (1-\eta)^2 V + \pmax V (1-\eta)}_{=: f(\eta,\pmax,V)}.
\end{equation*}
Thus we have shown that 
$$\expec[u^\top (I_n \otimes \projtaun) u] \leq C n^2 \tau f(\eta,\pmax,V)$$ 
for some constant $C > 0$. 

We next bound $\norm{(R_{1,\opR}(k))_{ij}}_{\psi_2}$ for all $i \neq j$ as follows. To begin with, we have 
\begin{align}
    \norm{(R_{1,\opR}(k))_{ij}}_{\psi_2} 
    &= \norm{(Z_{1,\opR}(k))_{ij} (H_{1}(k))_{ij} - (1-\eta)p(k)(\Gshift_{1,\opR}^* (k))_{ij}}_{\psi_2}  \nonumber \\
    &\leq \norm{(Z_{1,\opR}(k))_{ij} - (1-\eta)(\Gshift_{1,\opR}^* (k))_{ij}}_{\psi_2} + \norm{(H_{1}(k))_{ij} - p(k)}_{\psi_2} \label{eq:outlier_var_proof_temp_1}
\end{align}
where we used triangle inequality and the fact $(H_{1}(k))_{ij}, \abs{(\Gshift_{1,\opR}^* (k))_{ij}} \leq 1$ (uniformly over $i,j,k$). Now the second term in \eqref{eq:outlier_var_proof_temp_1} is bounded as
$$\norm{(H_{1}(k))_{ij} - p(k)}_{\psi_2} \leq Q(p(k)) \leq \max_{k \in [T]} Q(p(k)).$$
To bound the first term in \eqref{eq:outlier_var_proof_temp_1}, we begin by writing 
$$(Z_{1,\opR}(k))_{ij} - (1-\eta)(\Gshift_{1,\opR}^* (k))_{ij} = B_{ij}(k) \cos (\varphi_{ij}(k)) - (B_{ij}(k) -\eta) \cos(\theta_i(k) - \theta_j(k))$$ 
where $B_{ij}(k)$ is a Bernoulli random variable with parameter $\eta$. Then it is not difficult to verify  
$$\norm{B_{ij}(k) \cos (\varphi_{ij}(k)) - (B_{ij}(k) -\eta) \cos(\theta_i(k) - \theta_j(k))}_{\psi_2} \lesssim Q(\eta) + \eta.$$ 
Thus, we have shown that 
$$\norm{(R_{1,\opR}(k))_{ij}}_{\psi_2} \lesssim Q(\eta) + \eta + \max_{k \in [T]} Q(p(k)) = : Q.$$

Upon applying the above bounds in conjunction with the Hanson Wright inequality, we readily obtain that with probability at least $1-\delta$ ,
\begin{align*}
    \|\projtaun R_{1,\opR}\|^2_F 
    &\leq C\paren{n^2 \tau f(\eta,\pmax,V) + Q^2 n \sqrt{\tau\log(1/\delta)} + Q^2 \log(1/\delta)}  \\
    &\leq C'\paren{n^2 \tau f(\eta,\pmax,V) + Q^2 n \sqrt{\tau} \log(1/\delta)}
\end{align*}
where we used $\sqrt{\log(1/\delta)} \leq \log(1/\delta)$ if $\delta \in (0,c)$ for a suitably small constant $c$. The proof is completed by using the same error bound on the other terms discussed earlier, applying a union bound, and adjusting the constants.

%
\section{Proof of Theorem \ref{thm:error_outliers}}\label{app:proof_thm_outliers}
Using \eqref{eq:bias_bd_outliers} and Lemma \ref{lem:variance_term_outliers} leads to the bound
\begin{equation*}
    \norm{\est{G} - D\Gshift^*}_F^2 \leq  C_1 \frac{\mu \bar{d}^2 n T^2 S_T}{\tau^2} \indic_{\tau < T} + C_2 \left(n^2\tau f(\eta, \pmax, V) + Q^2 n \sqrt{\tau} \log\paren{\frac1\delta}\right). 
\end{equation*}
Following the same argument as in Section \ref{sec:analysis_spiked_wigner}, we arrive at the choice $\tau = \tau^*$ which upon plugging in the RHS of the above bound leads to the stated error bound in part (i) (after minor simplification).

For part (ii), we simply note that if $1 \leq \paren{\frac{\mu \bar{d}^2 T^2 S_T}{n \tilde{f}(\eta,\pmax,V,Q,\delta)}}^{1/3} \lesssim T$ then $$\tau^* \asymp \paren{\frac{\mu \bar{d}^2 T^2 S_T}{n \tilde{f}(\eta,\pmax,V,Q,\delta)}}^{1/3},$$ and consequently,  \eqref{eq:outlier_gen_matden_bd} implies 
\begin{align*}
    \norm{\est{G} - D\Gshift^*}_F^2  
    &\lesssim n^{5/3} \tilde{f}^{2/3}(\eta,\pmax,V,Q,\delta) (\mu \bar{d}^2)^{1/3} T^{2/3} S_T^{1/3} \\
    &+ \max\set{\tilde{f}^{2/3}(\eta,\pmax,V,Q,\delta) n^{5/3} (\mu \bar{d}^2)^{1/3} T^{2/3} S_T^{1/3}, \mu \bar{d}^2 n S_T}.
\end{align*}
We then arrive at the simplified bound in \eqref{eq:outlier_simp_matden_bd} provided $T \geq \frac{\mu \bar{d}^2 S_T}{n \tilde{f}^{2/3}(\eta,\pmax,V,Q,\delta)}$.

\section{Proof of Proposition \ref{prop:recovery_spectral}}\label{proof:prop_recovery}
We will use the following useful inequality from \cite{Liu2017}.
\begin{proposition}[\cite{Liu2017}] \label{prop:entry_proj_ineq}
For any $q \in [1,\infty]$, $w  \in \mathbb{C}^n$ and $g \in \calC_n$, we have
\begin{equation*}
    \norm{\calP_{\calC_n}(w) - g}_q \leq 2 \norm{w - g}_q.
\end{equation*}
\end{proposition}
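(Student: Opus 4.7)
The plan is to reduce the vector inequality to a pointwise (entry-wise) inequality and then prove the scalar statement $|\calP_{\calC_n}(w)_i - g_i| \leq 2|w_i - g_i|$ for every $i$. The reduction is valid because the projection $\calP_{\calC_n}$ acts entry-wise by definition \eqref{eq:proj_so2}, and both $\ell_q$ norms for $q \in [1,\infty)$ and the $\ell_\infty$ norm are monotone in the entrywise magnitudes: once I have the scalar bound, raising to the $q$-th power and summing (or taking a max, for $q=\infty$) immediately yields the desired vector inequality with the same factor of $2$.

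For the scalar bound, I would split into two cases depending on whether $w_i = 0$. The trivial case $w_i = 0$ gives $\calP_{\calC_n}(w)_i = 1$ and $|w_i - g_i| = 1$, so $|\calP_{\calC_n}(w)_i - g_i| = |1 - g_i| \leq 2 = 2|w_i - g_i|$ since $g_i$ lies on the unit circle. The substantive case is $w_i \neq 0$, where I would use the triangle inequality:
\begin{equation*}
|\calP_{\calC_n}(w)_i - g_i| \;\leq\; |\calP_{\calC_n}(w)_i - w_i| + |w_i - g_i|.
\end{equation*}
Writing $w_i = |w_i| e^{\iota \theta_i}$, the first term equals $|e^{\iota\theta_i} - |w_i| e^{\iota\theta_i}| = \bigl||w_i| - 1\bigr|$, and since $|g_i| = 1$, the reverse triangle inequality yields $\bigl||w_i| - 1\bigr| = \bigl||w_i| - |g_i|\bigr| \leq |w_i - g_i|$. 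Combining these gives the pointwise bound with constant $2$.

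The main (and only) subtle step is the use of the reverse triangle inequality to control the ``radial projection error'' $\bigl||w_i| - 1\bigr|$ by $|w_i - g_i|$; this is the geometric content of the proposition and crucially exploits the fact that $g \in \calC_n$ (so $|g_i| = 1$). Everything else is routine. I would end by separately writing out the two cases $q < \infty$ (take $q$-th power, sum, take $q$-th root) and $q = \infty$ (take max) to confirm that the factor $2$ in the entry-wise bound is preserved in the vector norm.
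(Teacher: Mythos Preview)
Your proof is correct. Note, however, that the paper does not actually prove this proposition: it is quoted from \cite{Liu2017} and used as a black-box tool in the proof of Proposition~\ref{prop:recovery_spectral}. Your entry-wise argument via the triangle and reverse triangle inequalities is the standard way to establish this inequality, and there is nothing to compare against in the paper itself.
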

\begin{proof}[Proof of Proposition \ref{prop:recovery_spectral}]
For any $k \in [T]$ we obtain
\begin{align}
  \norm{\est{g}(k) - g^*(k)}_2 
  &=  \left \|\calP_{\calC_n}\paren{\est{g}'(k)}e^{-\iota \teonek}-g^*(k)\right\|_2 \nonumber \\
  &=  \left \|\calP_{\calC_n}\paren{\sqrt{n}\est{g}'(k) e^{-\iota \teonek}} -g^*(k)\right\|_2 \nonumber \\
  &\leq 2\left \|\sqrt{n}\est{g}'(k) e^{-\iota \teonek} -g^*(k)\right\|_2 \tag{using Proposition \ref{prop:entry_proj_ineq}} \nonumber \\
   &\leq 2\left\|\sqrt{n}\est{g}'(k) e^{-\iota \varphi(k)} - g^*(k)\right\|_2 
   + 2\sqrt{n}\left\|\est{g}'(k)\left(e^{-\iota \teonek}-e^{-\iota \phik }\right)\right\|_2, \label{eq:propspec_triang_ineq}
\end{align}
where \eqref{eq:propspec_triang_ineq} follows from the triangle inequality. 
Using \eqref{eq:block_daviskahan}, \eqref{eq:symmetr_norm_ineq}, we bound the first term as
\begin{equation}\label{eq:bound_firs_term_propspec}
    \left \|\sqrt{n}\est{g}'(k)e^{-\iota \phik} -g^*(k)\right\|_2\leq C \frac{\|\est{G}'(k)-s(k)\Gshift^*(k)\|_{F}}{s(k) \sqrt{n}} \leq C \frac{\|\est{G}(k)-s(k)\Gshift^*(k)\|_{F}}{s(k) \sqrt{n}}.
\end{equation}
For the second term, we note that  
\begin{align}
    \sqrt{n}\left\|\est{g}'(k)\left(e^{-\iota \teonek}-e^{-\iota \phik }\right)\right\|_2 
    & = \sqrt{n}\left|e^{\iota \teonek}-e^{\iota \phik }\right| \tag{since $\norm{\est{g}'(k)}_2 = 1$} \nonumber \\
    &= \sqrt{n}\abs{\calP_{\calC_1}\paren{e^{\iota \teonek}} - e^{\iota \phik}} \nonumber \\
    &= \sqrt{n} \abs{\calP_{\calC_1}\paren{\sqrt{n} \underbrace{\abs{\est{g}_1'(k)} e^{\iota \teonek}}_{= \est{g}_1'(k)}} - e^{\iota \phik}} \nonumber \\
    &\leq 2\sqrt{n} \abs{\sqrt{n}\est{g}_1'(k) - e^{\iota \phik}} \tag{using Proposition \ref{prop:entry_proj_ineq}} \nonumber \\
    &\leq 2{n}  \left\|\est{g}'(k)-e^{\iota \phik}\frac{g^*(k)}{\sqrt n}\right\|_2 \tag{ since $g_1^*(k) = 1$} \nonumber \\
    &\leq C \frac{\|\est{G}(k)-s(k)\Gshift^*(k)\|_{F}}{s(k)}  \label{eq:bound_sec_term_propspec}  \qquad \text{(using \eqref{eq:block_daviskahan}, \eqref{eq:symmetr_norm_ineq})}. 
\end{align}
Plugging \eqref{eq:bound_firs_term_propspec}, \eqref{eq:bound_sec_term_propspec} in \eqref{eq:propspec_triang_ineq} leads to the stated bound.
\end{proof}
%
\section{Additional simulations}
\subsection{RMSE versus noise-level}
%
%
%
We show additional plots for RMSE versus noise-level ($\gamma$) for both the AGN (Fig. \ref{fig:plots_123_rmse_noise_wigner_T100_20runs}) and Outliers (Fig. \ref{fig:plots_123_rmse_noise_outliers_p0p2_T100_20runs}) noise models but for $T = 100$ (recall we considered $T = 20$ in Figs. \ref{fig:plots_123_rmse_noise_wigner_T20_20runs} and \ref{fig:plots_123_rmse_noise_outliers_p0p2_T20_20runs} in Section \ref{sec:experiments}). We see for the AGN model that out of the three methods, $\matdenoising$ is  most robust for moderate to high levels of noise. For the Outliers model, we find that $\globaltrs$ performs the best for low level of noise, while $\matdenoising$ is again the best method for moderate to high levels of noise.
%
 \begin{figure}[!ht]
\centering
\begin{subfigure}{.5\textwidth}
  \centering
  \includegraphics[width=\linewidth]{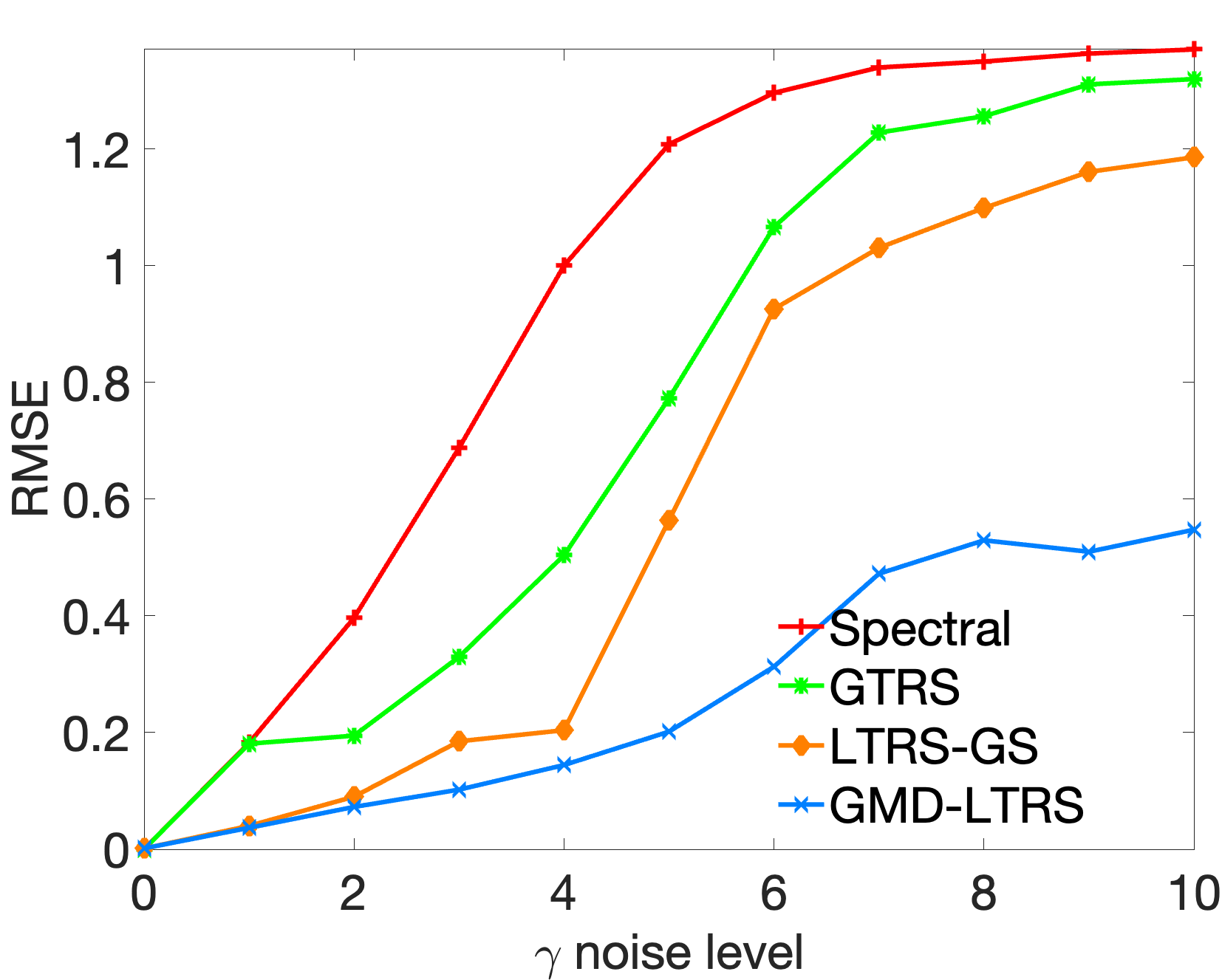}
  \caption{$S_T = 1/T$}
\end{subfigure}%
\begin{subfigure}{.5\textwidth}
  \centering
  \includegraphics[width=\linewidth]{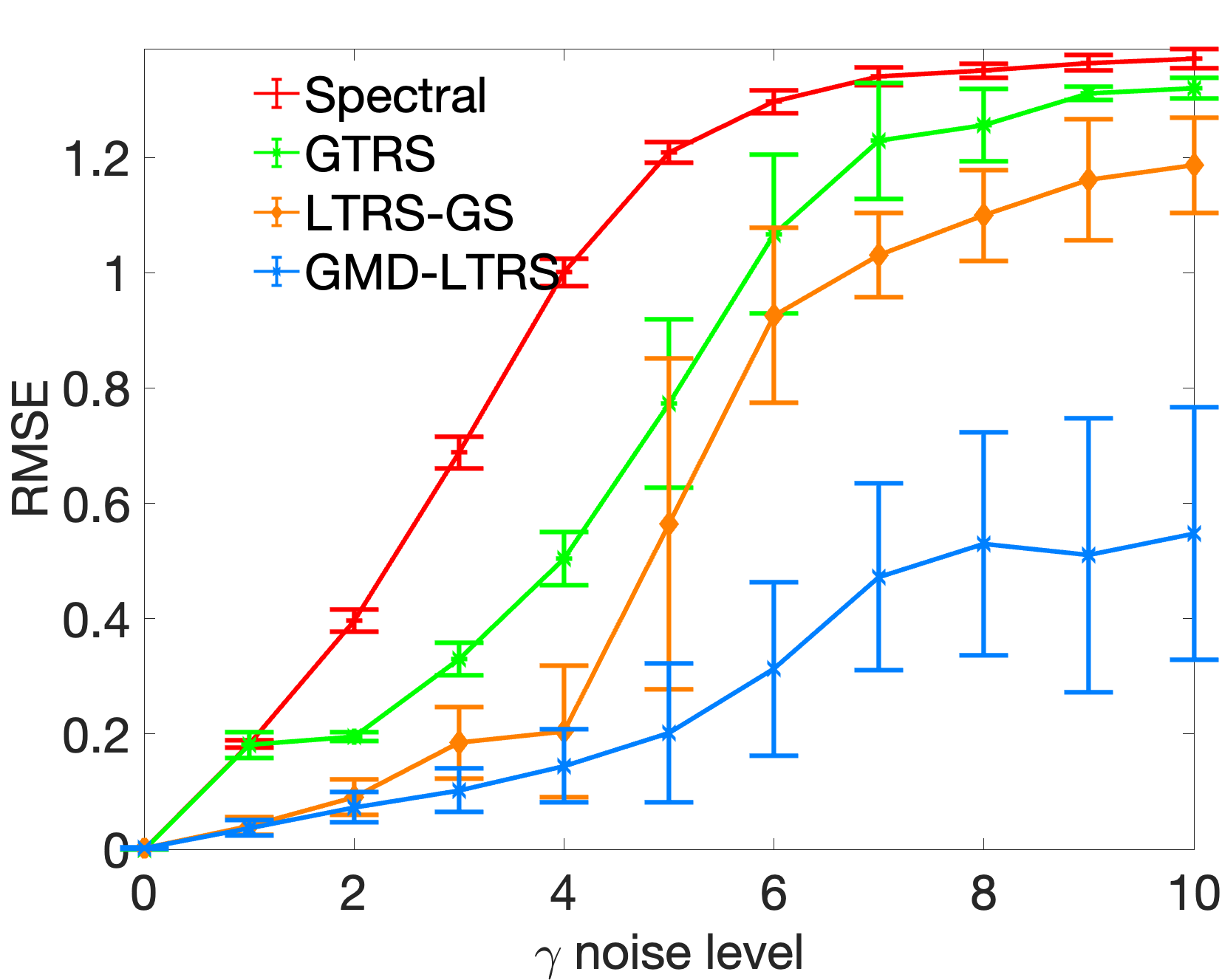}
  \caption{$S_T = 1/T$}
\end{subfigure}%
\hfill
\begin{subfigure}{.5\textwidth}
  \centering
  \includegraphics[width=\linewidth]{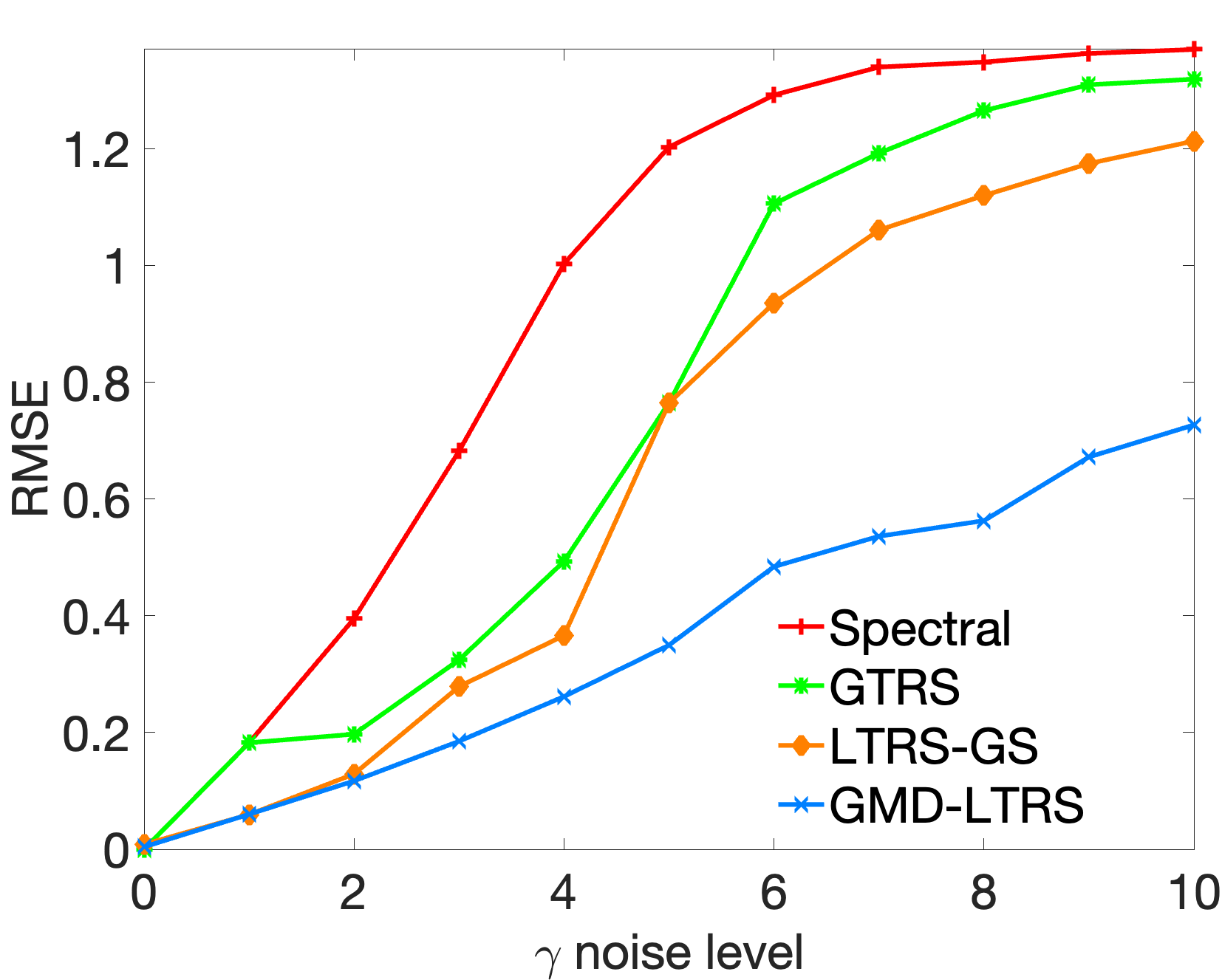}
  \caption{$S_T = 1$}
\end{subfigure}%
\begin{subfigure}{.5\textwidth}
  \centering
  \includegraphics[width=\linewidth]{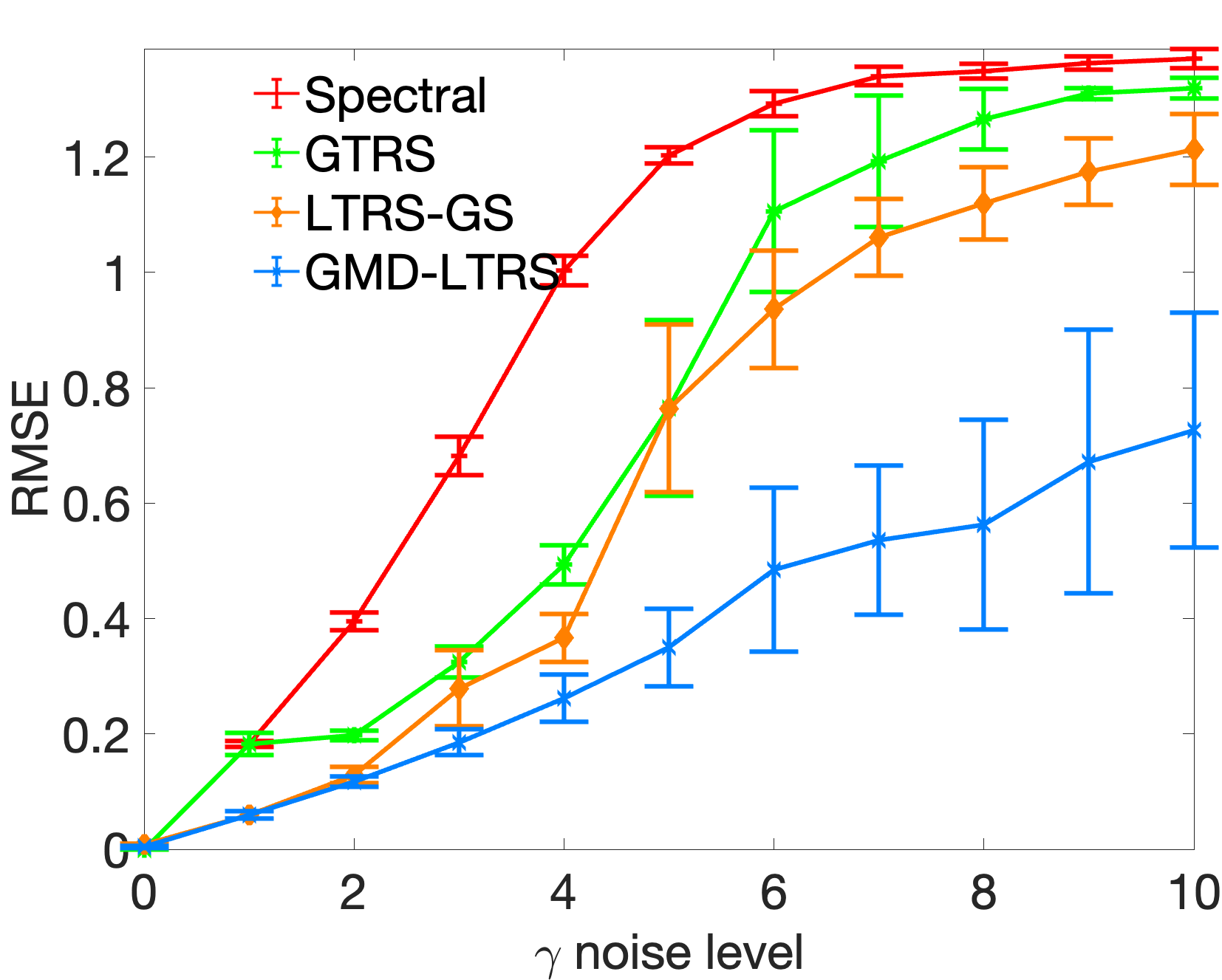}
  \caption{$S_T = 1$}
\end{subfigure}%
\hfill
\begin{subfigure}{.5\textwidth}
  \centering
  \includegraphics[width=\linewidth]{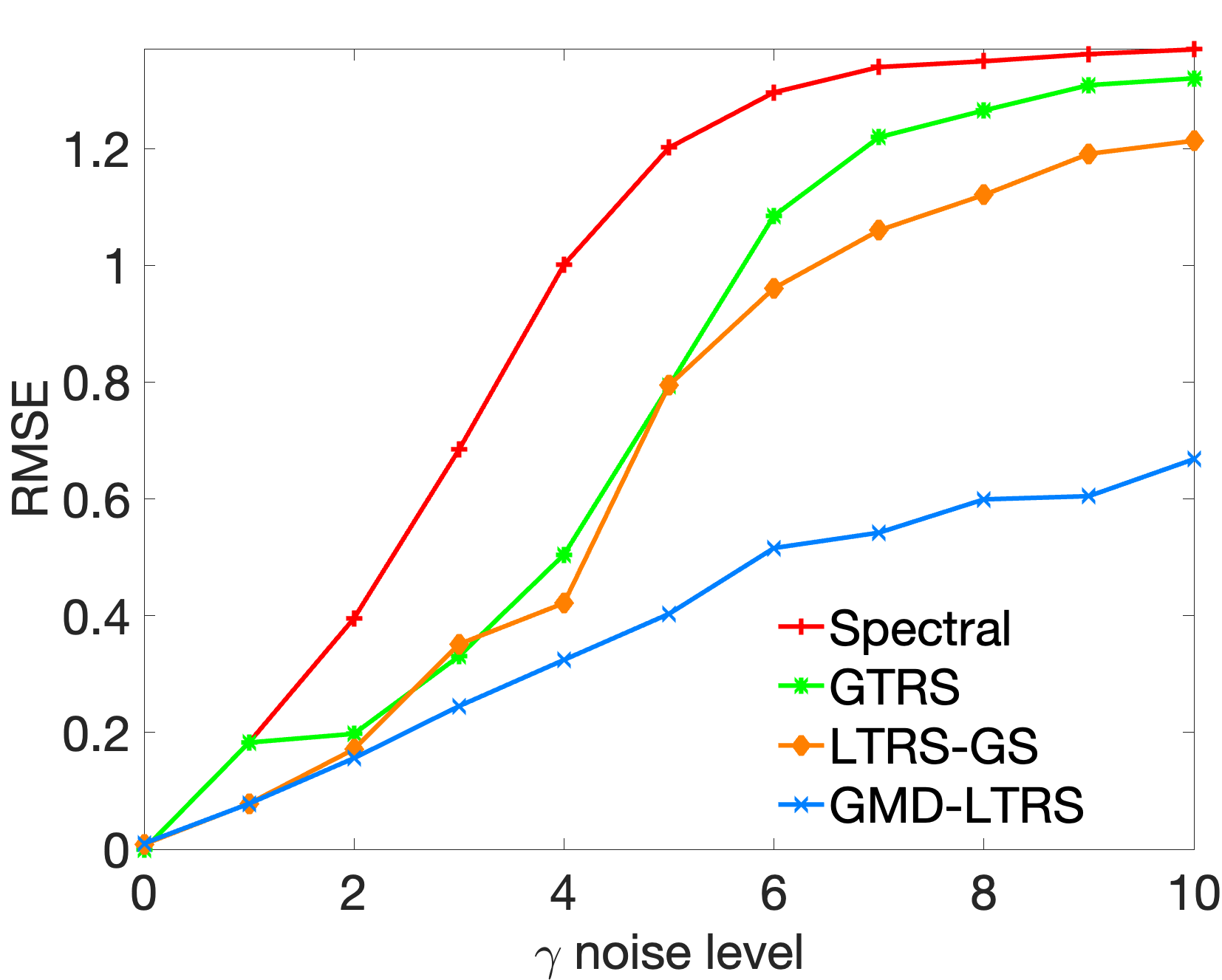}
  \caption{$S_T = T^{1/4}$}
\end{subfigure}%
\begin{subfigure}{.5\textwidth}
  \centering
  \includegraphics[width=\linewidth]{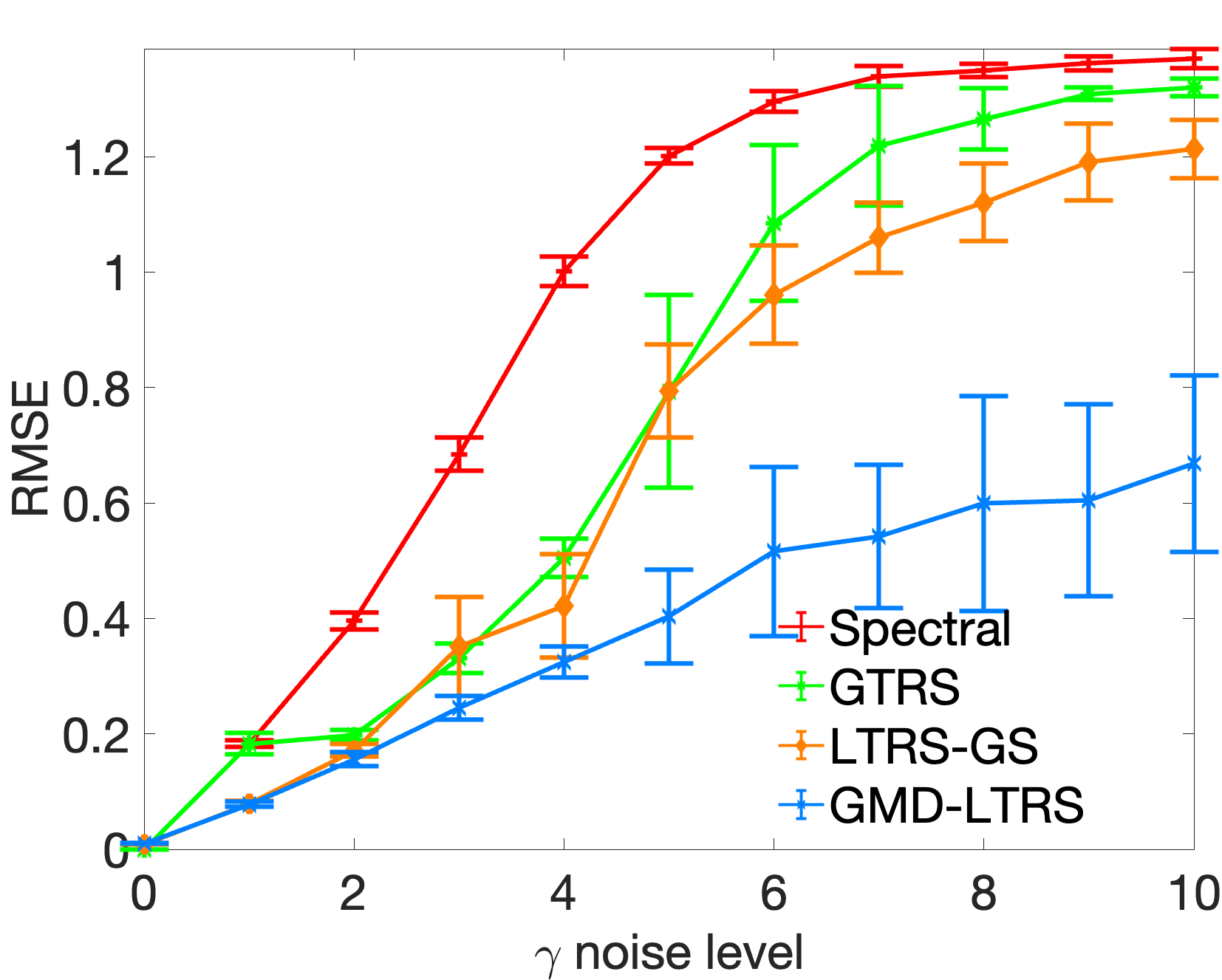}
  \caption{$S_T = T^{1/4}$}
\end{subfigure}%
\caption{RMSE versus $\gamma (= \sigma)$ for the AGN model ($n=30$, $T = 100$) with $S_T \in \set{1/T, 1, T^{1/4}}$, results averaged over $20$ MC runs. We choose $\lambda_{\text{scale}} = 10$ for $\globaltrs$. Results on left show average RMSE,  the right panel shows average RMSE $\pm$ standard deviation.}
\label{fig:plots_123_rmse_noise_wigner_T100_20runs}
\end{figure}

%
 \begin{figure}[!ht]
\centering
\begin{subfigure}{.5\textwidth}
  \centering
  \includegraphics[width=\linewidth]{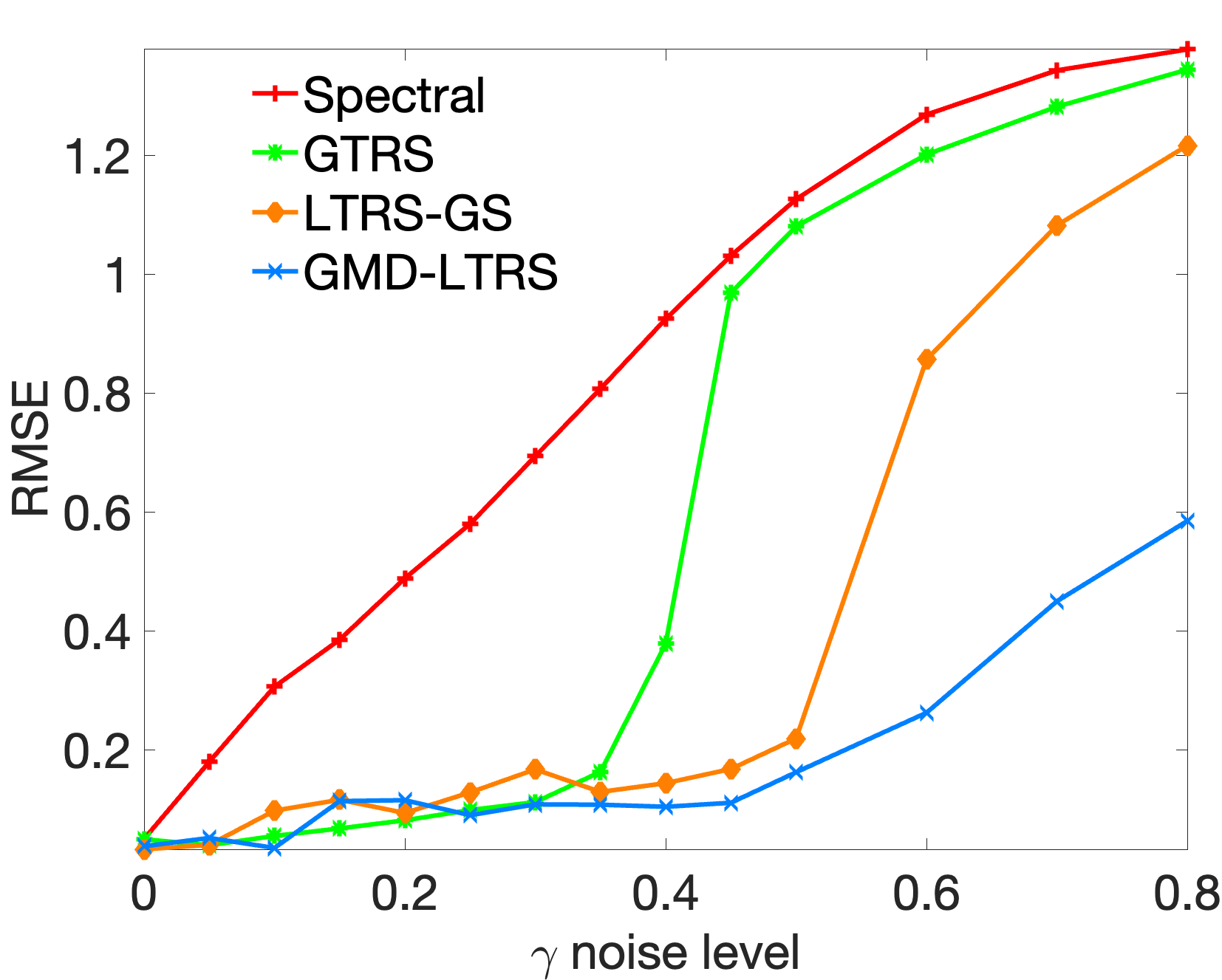}
  \caption{$S_T = 1/T$}
\end{subfigure}%
\begin{subfigure}{.5\textwidth}
  \centering
  \includegraphics[width=\linewidth]{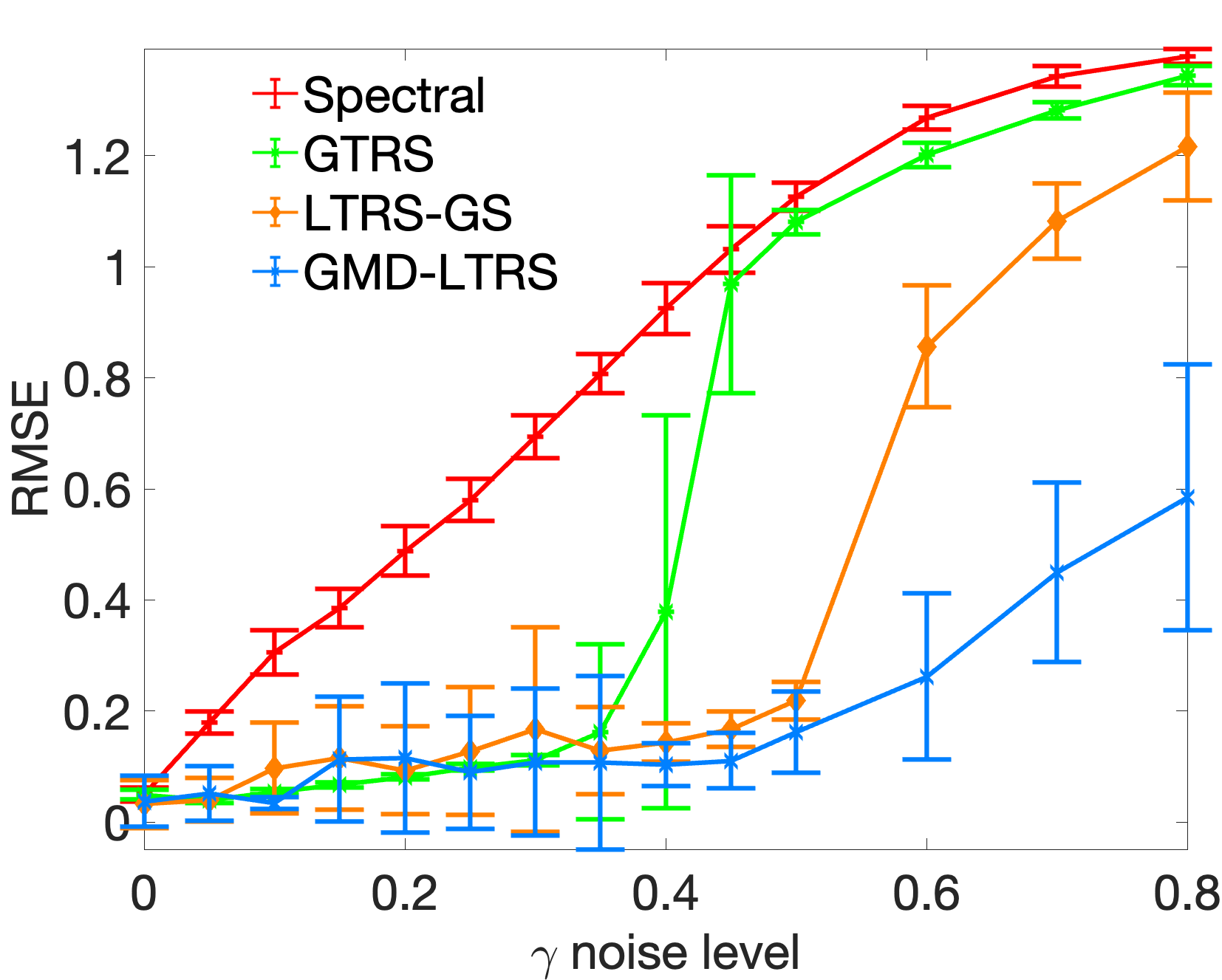}
  \caption{$S_T = 1/T$}
\end{subfigure}%
\hfill
\begin{subfigure}{.5\textwidth}
  \centering
  \includegraphics[width=\linewidth]{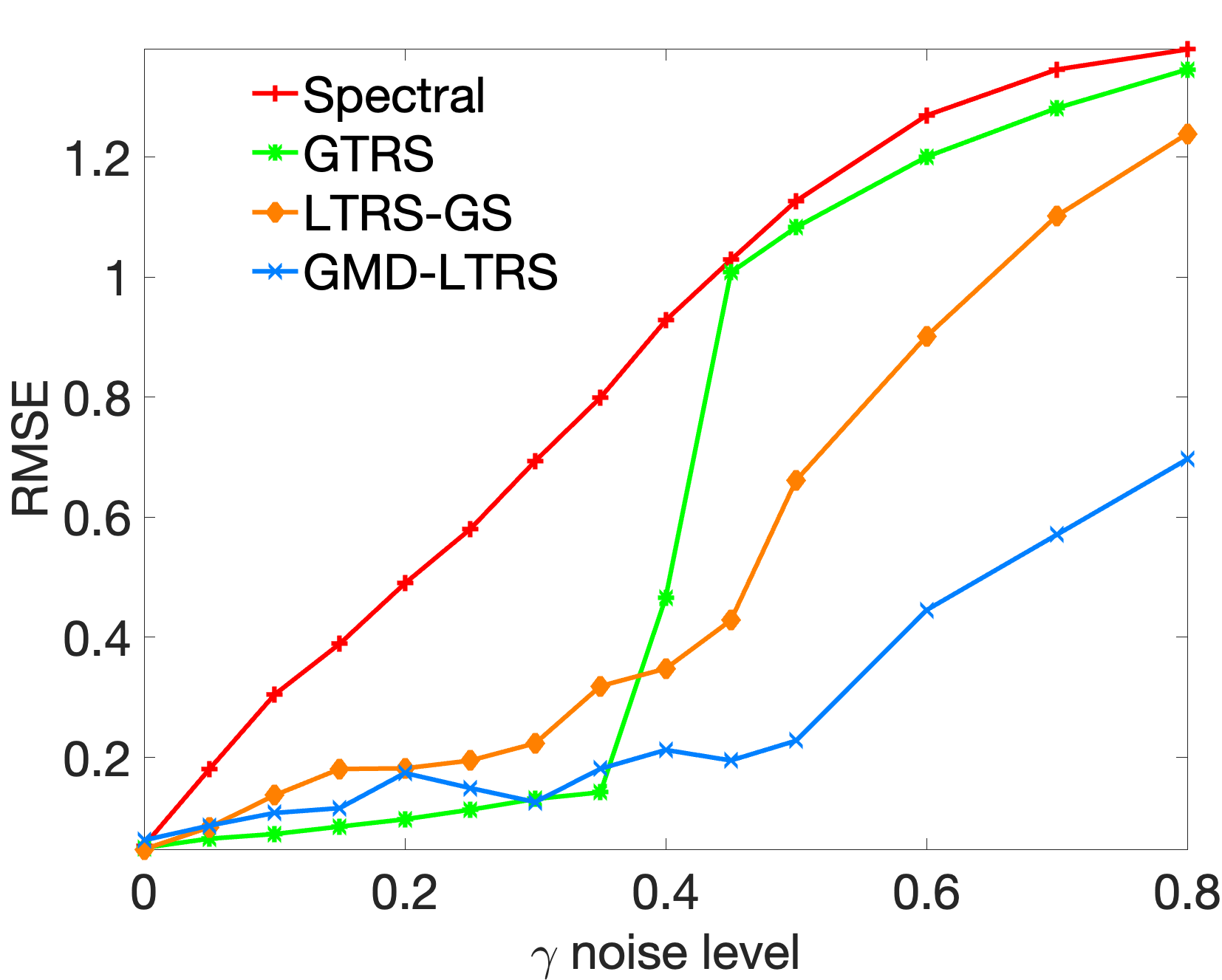}
  \caption{$S_T = 1$}
\end{subfigure}%
\begin{subfigure}{.5\textwidth}
  \centering
  \includegraphics[width=\linewidth]{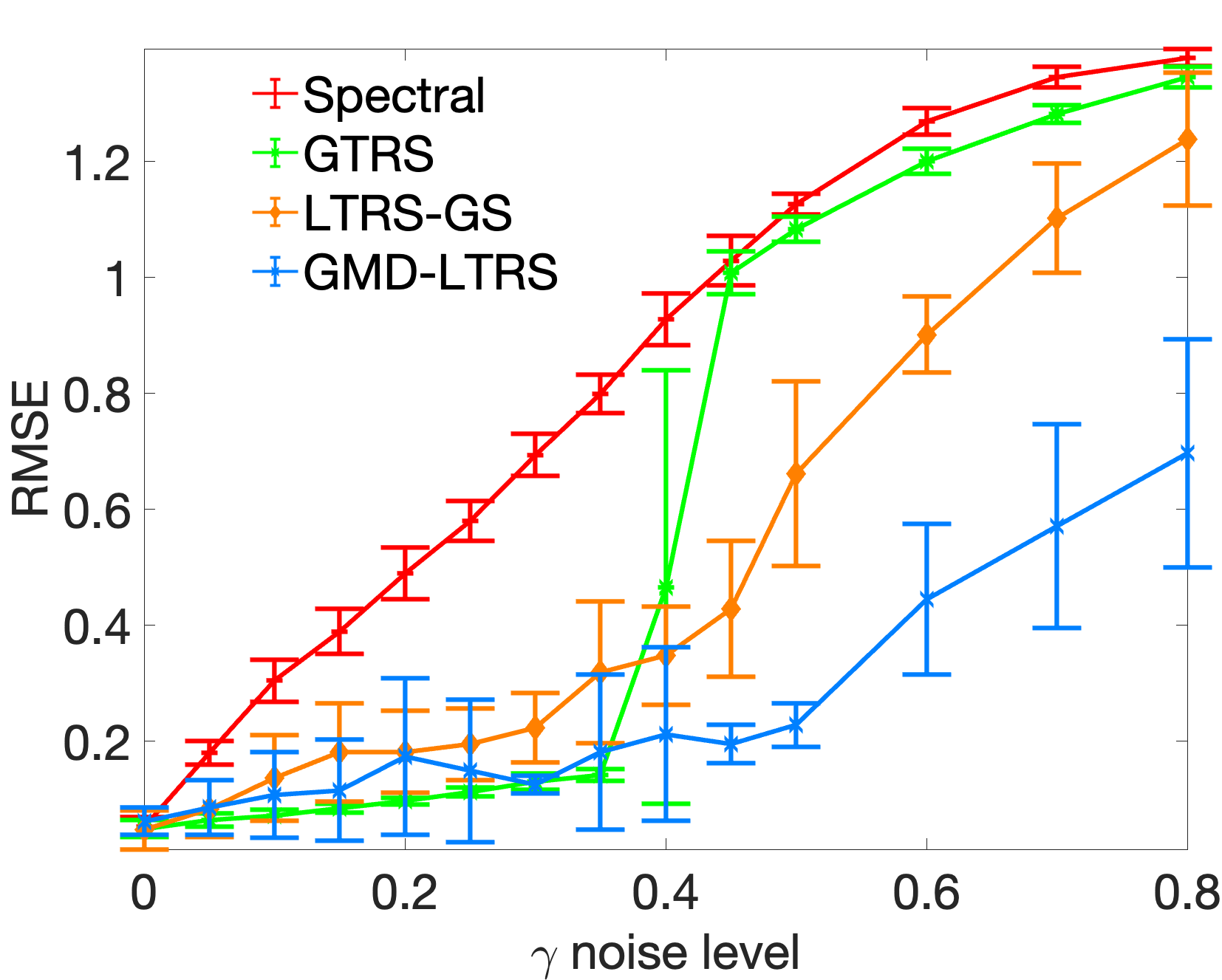}
  \caption{$S_T = 1$}
\end{subfigure}%
\hfill
\begin{subfigure}{.5\textwidth}
  \centering
  \includegraphics[width=\linewidth]{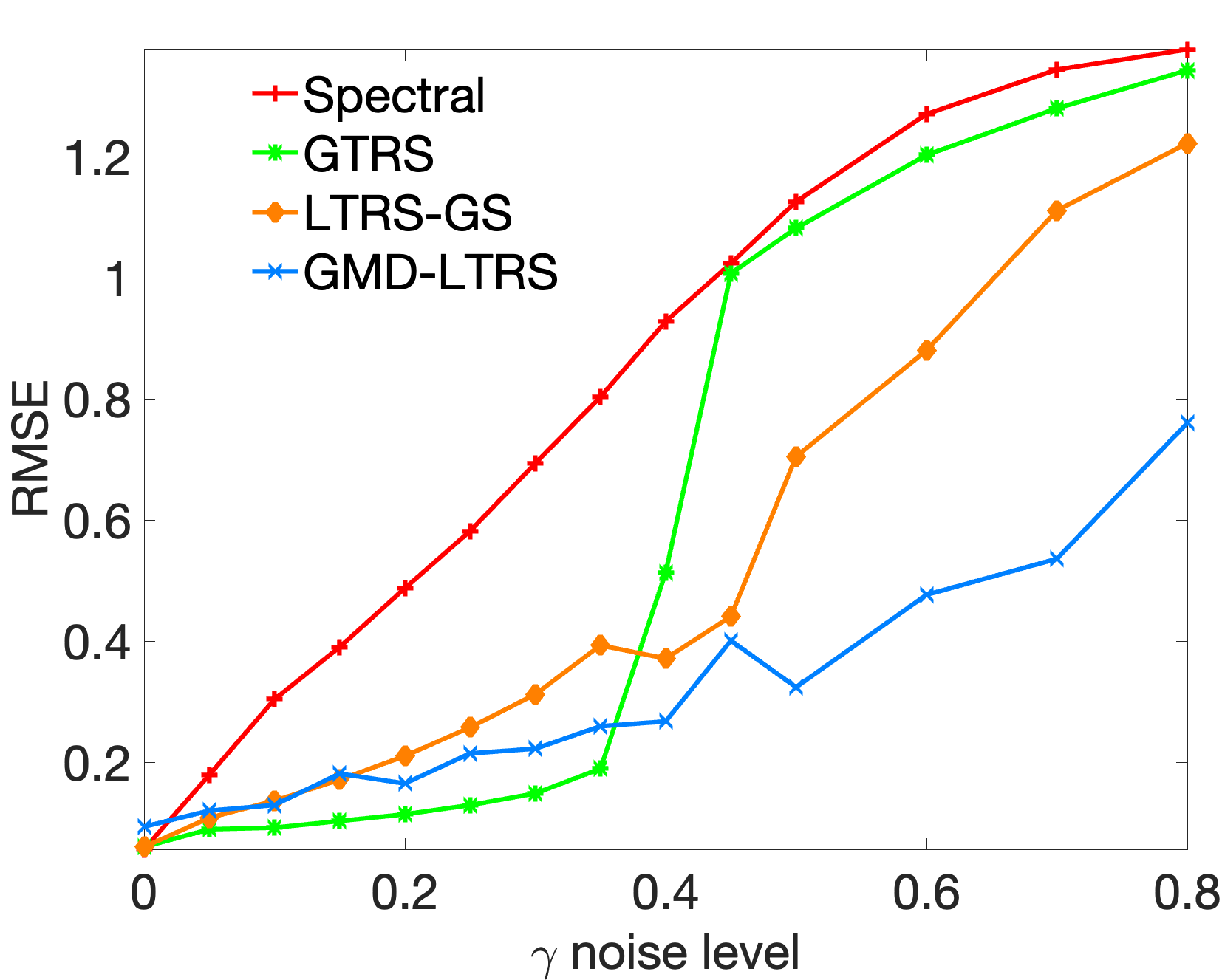}
  \caption{$S_T = T^{1/4}$}
\end{subfigure}%
\begin{subfigure}{.5\textwidth}
  \centering
  \includegraphics[width=\linewidth]{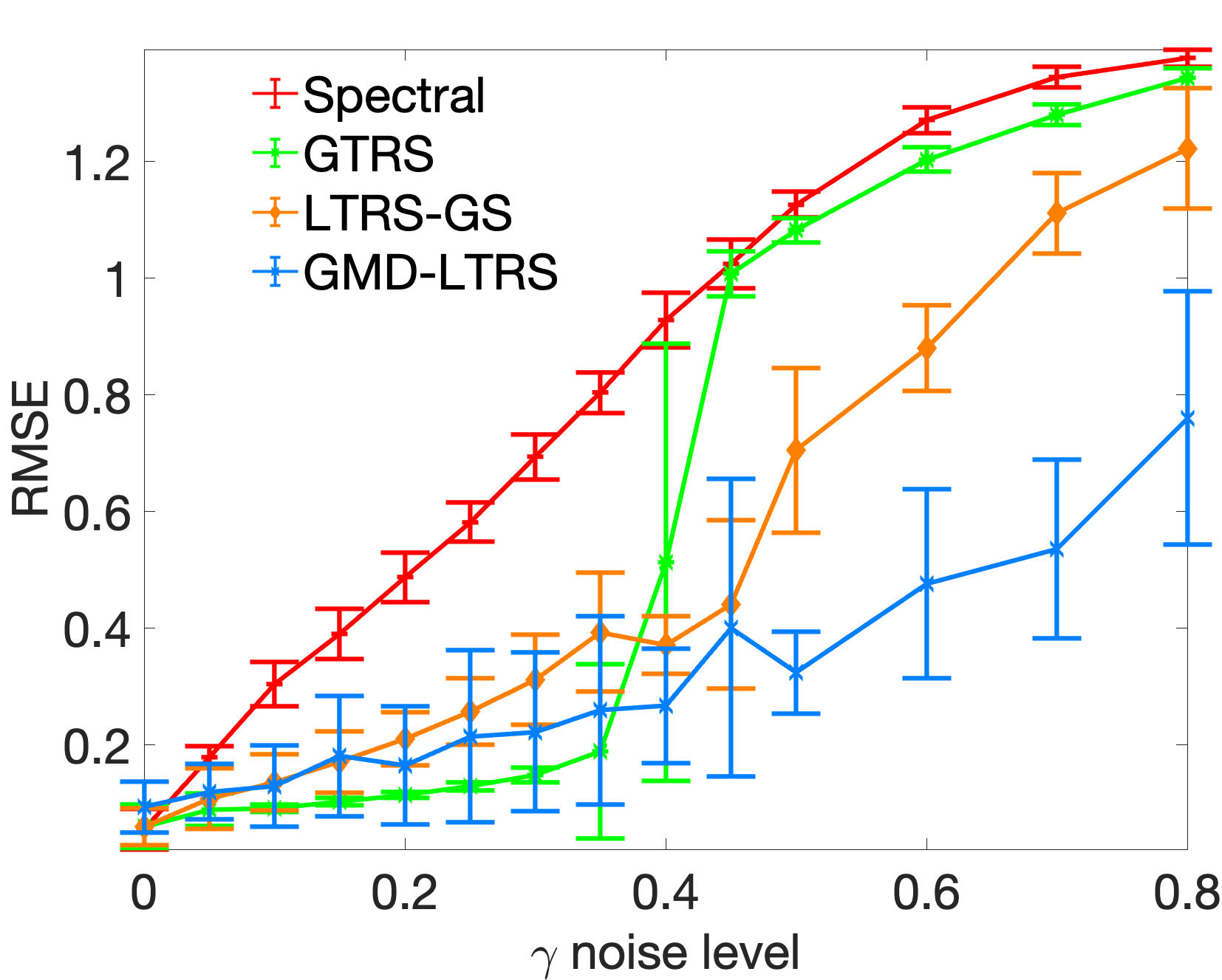}
  \caption{$S_T = T^{1/4}$}
\end{subfigure}%
\caption{RMSE versus $\gamma (= \eta)$ for the Outliers model ($n=30$, $p = 0.2$, $T = 100$) with $S_T \in \set{1/T, 1, T^{1/4}}$, results averaged over $20$ MC runs. We choose $\lambda_{\text{scale}} = 10$ for $\globaltrs$. Results on left show average RMSE,  the right panel shows average RMSE $\pm$ standard deviation.}
\label{fig:plots_123_rmse_noise_outliers_p0p2_T100_20runs}
\end{figure}

%
\subsection{Simulations for a Projected Power Method (PPM)}
We now discuss an iterative projected power method for solving \eqref{eq:opt_denoising_vector} that needs to be initialized with a ``seed'' $\tilde{g}^{(0)} \in \calC_{(n-1)T}$  that is sufficiently close to the ground truth $\tilde{g}^*$.  This method, also referred to as the generalized power method in the literature, has had success in tackling several non-convex problems such as orthogonal procrustes \cite{ling2021generalized}, multireference alignment \cite{Chen2016ThePP}, graph clustering \cite{Manchoso}, graph matching \cite{araya2023seeded} and group synchronization \cite{boumal2016,GaoZhang, Liu2017}, to name a few. 

The version that we consider for dynamic synchronization is outlined in Algorithm \ref{algo:PPM}, dubbed $\ppm$. At its core, it involves iteratively refining the seed by sequentially performing the following operations.
\begin{itemize}
    \item \textbf{Power step}. Pre-multiply the current iterate by the gradient of the objective function in \eqref{eq:opt_denoising_vector}.
    
    \item \textbf{Projection step}. The  result is projected onto the set $\calC_{(n-1)T}$ using the operator $\calP_{\calC_{(n-1)T}}$.
\end{itemize}
For initialization, we will employ $\globaltrs, \matdenoising$, $\localtrs$ and the naive spectral algorithm considered in the experiments (Section \ref{sec:experiments}). For $\ppm$, the parameter $\lambda$ is chosen via the same grid search procedure as for $\globaltrs$ (see Section \ref{sec:experiments}); we chose $\lambda_{scale} = 10$ for both $\ppm$ and $\globaltrs$. Surprisingly, we found that $\ppm$ does not improve the performance of the seed, and in general worsens the estimation error. While a more detailed investigation of $\ppm$ is left for future work, we illustrate its performance in Fig. \ref{fig:plots_rmse_ppm_noise_wigner_T100_20runs} for the AGN model. Similar behaviour was observed for the Outliers model, and this was the case for different choices of $\lambda_{scale}$ for $\ppm$.
\begin{algorithm}[!ht]
    \caption{Projected Power Method for Dynamic Synchronization (\ppm)}\label{algo:PPM}
    \begin{algorithmic}[1]
    \State {\bf Input:} Matrix $A\in\matC^{nT\times n}$ of observations, parameter $\lambda >0$, number of iterations $S$, initial point $\tilde{g}^{(0)}$.
    \State Form $\tilde{A}_{block}=\blkdiag\big(\tilde{A}(1),\cdots,\tilde{A}(T)\big)\in\matC^{(n-1)T\times (n-1)T}$.
    \For{$s=1,2\dots ,S$}
    \State $\tilde{g}^{(s)} = \calP_{\calC_{(n-1)T}}\left((\tilde{A}_{block}-\lambda MM^\top\otimes I_{n-1})\tilde{g}^{(s-1)}+b\right)$.
    \EndFor
     \For{$k=1,\dots ,T$}
     \State Define $\est g(k)=\begin{bmatrix}
         1\\{\tilde{g}}^{(S)}(k)
     \end{bmatrix}$.
     \EndFor
    \State {\bf Output:} Estimator $\est g=\concat\big(\est g(1),\cdots,\est g(T)\big)\in \calC_{nT}$.
    \end{algorithmic}
\end{algorithm}
%
%
%
 \begin{figure}[!htp]
\centering
\begin{subfigure}{.5\textwidth}
  \centering
  \includegraphics[width=1\linewidth]{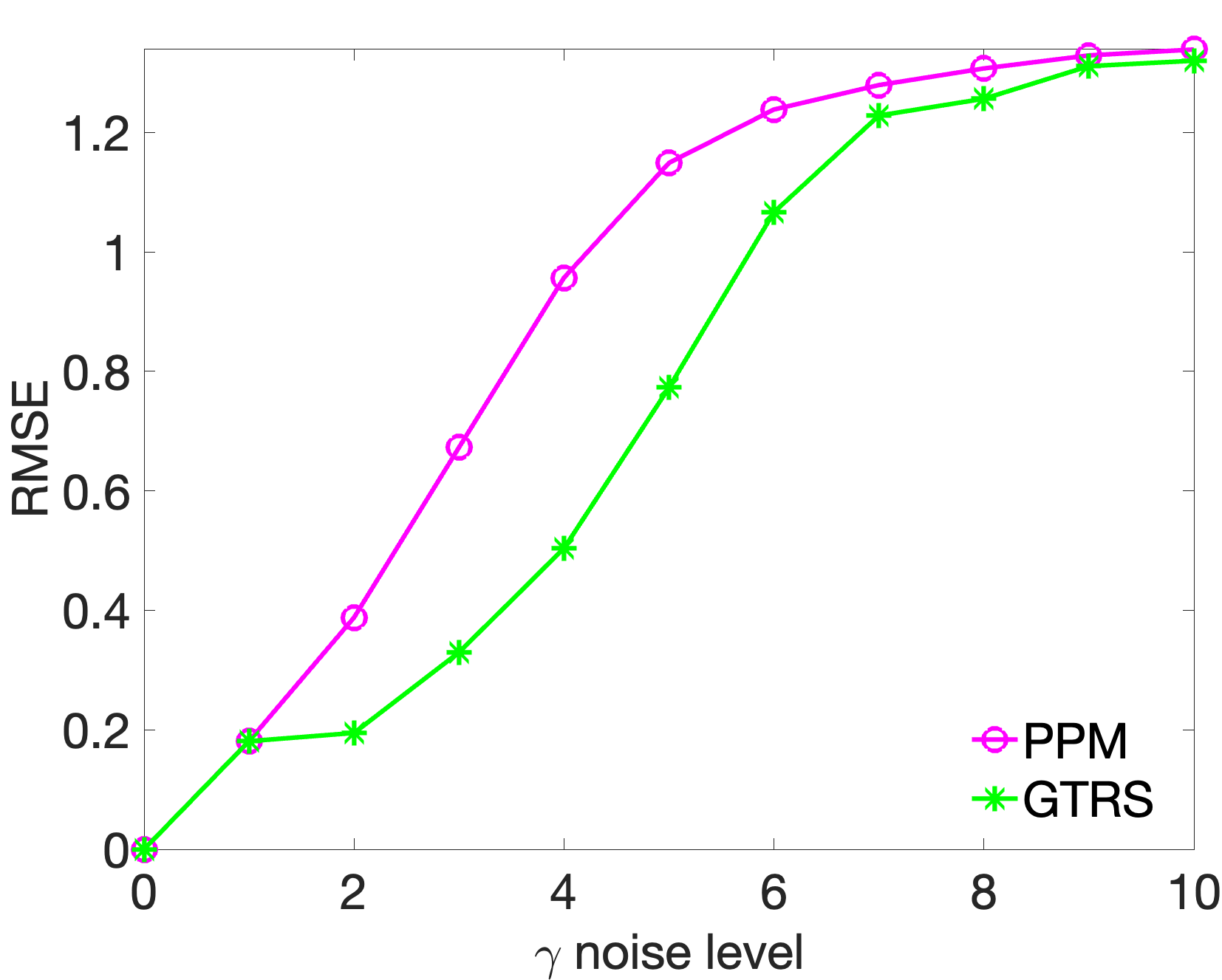}
  \caption{$\globaltrs$}
\end{subfigure}%
\begin{subfigure}{.5\textwidth}
  \centering
  \includegraphics[width=1\linewidth]{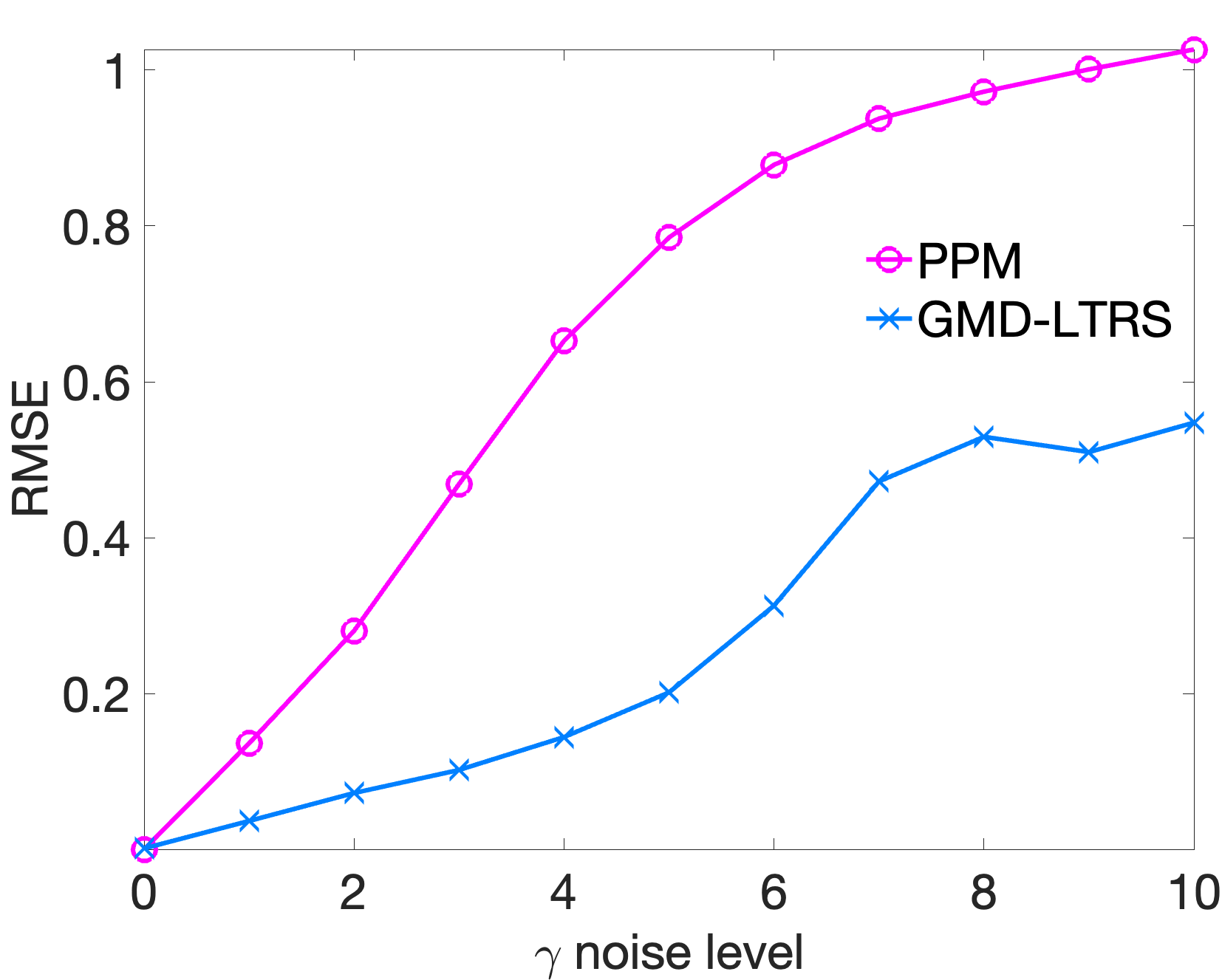}
  \caption{$\matdenoising$}
\end{subfigure}%
\hfill
\begin{subfigure}{.5\textwidth}
  \centering
  \includegraphics[width=1\linewidth]{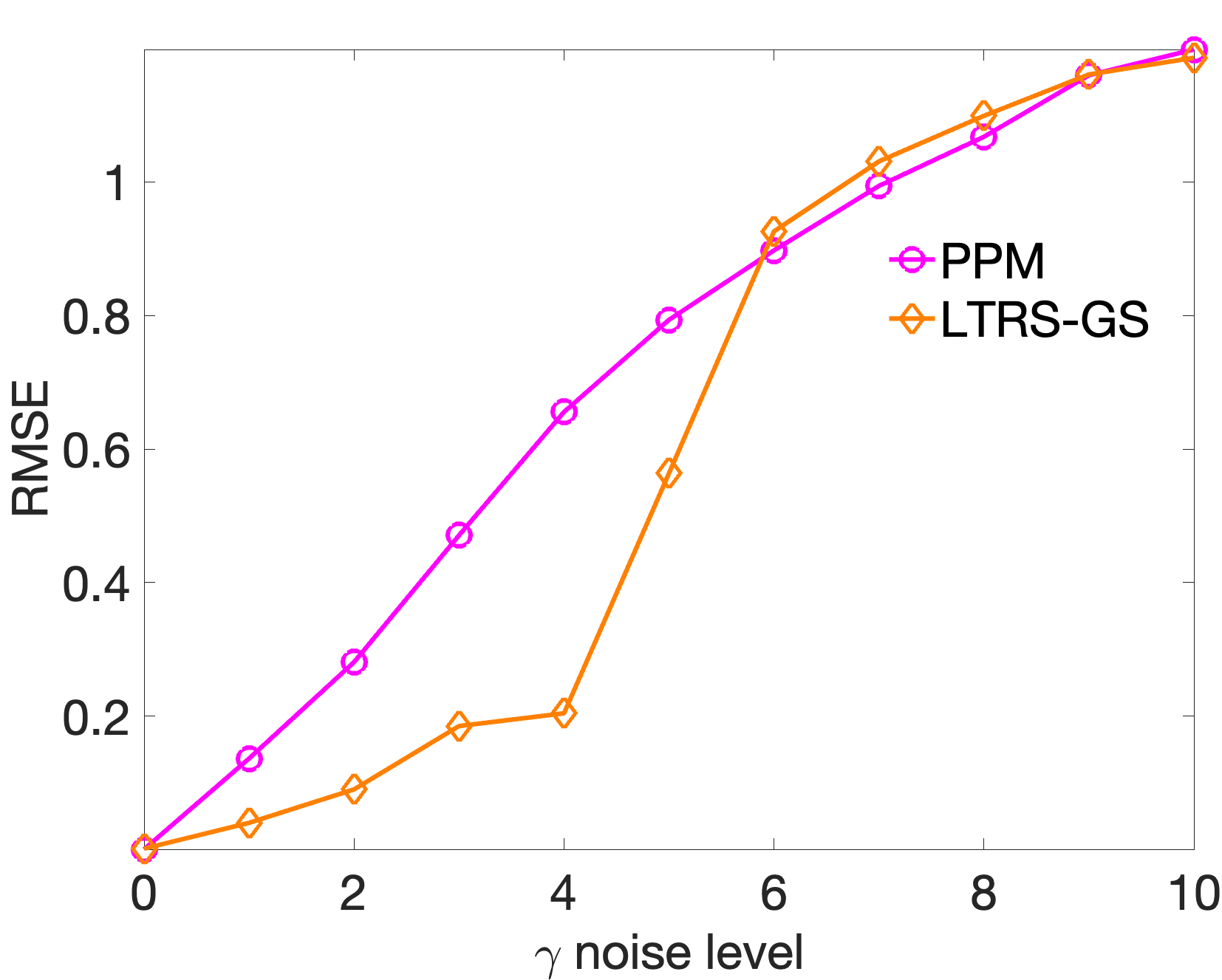}
  \caption{$\localtrs$}
\end{subfigure}%
\begin{subfigure}{.5\textwidth}
  \centering
  \includegraphics[width=1\linewidth]{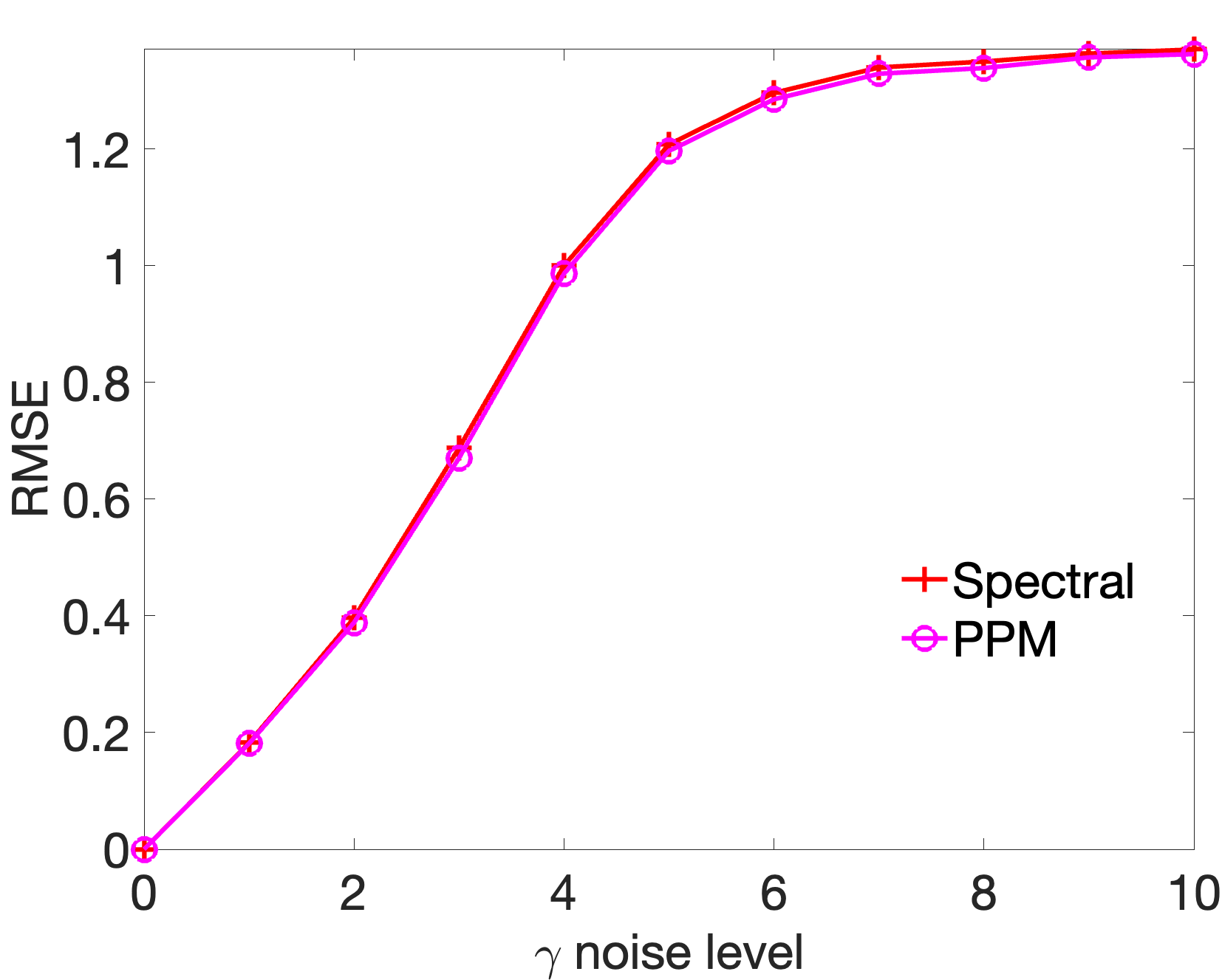}
  \caption{Spectral}
\end{subfigure}
\caption{RMSE versus $\gamma (= \sigma)$ for $\ppm$ for the AGN model ($n=30$, $T = 100$) with $S_T = 1/T$ and different initializations. Results averaged over $20$ MC runs. We choose $\lambda_{\text{scale}} = 10$ for $\globaltrs$ and $\ppm$.}
\label{fig:plots_rmse_ppm_noise_wigner_T100_20runs}
\end{figure}

\end{document}